\documentclass[numsec,webpdf,modern,medium,namedate]{JRSSB}
\onecolumn

\usepackage[doublespacing]{setspace}
\usepackage[fontsize=12pt]{fontsize}

\usepackage{amssymb}
\usepackage{makecell}
\usepackage{array} 
\usepackage[]{natbib}
\usepackage{bbm}
\usepackage{amsmath,amsfonts,bm}
\usepackage{parskip}
\usepackage{amsthm}

\usepackage{microtype}
\usepackage{graphicx}
\usepackage{booktabs} 
\usepackage{multirow} 
\usepackage{comment}
\usepackage{caption}
\usepackage{bbm}
\usepackage{subcaption}

\usepackage{tikz}

\usetikzlibrary{shapes,decorations,arrows,calc,arrows.meta,fit,positioning}
\tikzset{
    -Latex,auto,node distance =1 cm and 1 cm,semithick,
    state/.style ={ellipse, draw, minimum width = 0.7 cm},
    point/.style = {circle, draw, inner sep=0.04cm,fill,node contents={}},
    bidirected/.style={Latex-Latex,dashed},
    el/.style = {inner sep=2pt, align=right, sloped}
}

\usepackage{algorithm}
\usepackage{chngcntr} 

\DeclareMathOperator*{\expit}{expit}

\DeclareMathOperator*{\argmin}{arg\,min}
\DeclareMathOperator*{\arginf}{arg\,inf}
\DeclareMathOperator*{\argmax}{arg\,max}

\theoremstyle{definition}
\newtheorem{theorem}{Theorem}
\newtheorem{prop}{Proposition}
\newtheorem{lemma}{Lemma}
\newtheorem{defi}{Definition}
\newtheorem{remark}{Remark}
\newtheorem{assumption}{Assumption}[section]

\newtheorem{example}{Example}
\usepackage{dsfont}
\usepackage{colortbl}
\definecolor{lightblue}{rgb}{0.8, 0.9, 1}
\definecolor{lightred}{rgb}{1, 0.8, 0.8}

\usepackage{threeparttable}

\linespread{1.45}
\newcommand{\RNum}[1]{\uppercase\expandafter{\romannumeral #1\relax}}

\newcommand{\E}{\mathbb{E}}

\definecolor{green}{RGB}{20, 171, 132}

\usepackage{hyperref}

\hypersetup{
	unicode=false,
	pdftoolbar=true,
	pdfmenubar=true,
	pdffitwindow=false,     
	pdfstartview={FitH},    
	pdfkeywords={}, 
	pdfnewwindow=true,      
	colorlinks=true,       
	linkcolor=cyan,          
	citecolor=blue,        
	filecolor=pink,      
	urlcolor=cyan           
}

\usepackage[utf8]{inputenc}

\newcommand{\neighbor}[1]%
{\overline{#1}}

\raggedbottom

\usepackage{mathtools}
\graphicspath{{fig/}}
\DeclarePairedDelimiter\norm{\lVert}{\rVert}

\allowdisplaybreaks

\begin{document}

   \journaltitle{Paper}
\DOI{}
\copyrightyear{2026}
\pubyear{}
\access{}
\appnotes{}

\firstpage{1}

\title{ Double Fairness Policy Learning: Integrating Action Fairness and Outcome Fairness in Decision-making}

\author[1,2]{Zeyu Bian}
\author[2,$\ast$]{Lan Wang}
\author[3]{Chengchun Shi}
\author[4]{Zhengling Qi}

\authormark{Bian et al.}

\address[1]{\orgdiv{Department of Statistics}, \orgname{Florida State University}, \orgaddress{Tallahassee, FL, USA}}
\address[2]{\orgdiv{Department of Management Science}, \orgname{University of Miami}, \orgaddress{Miami}, \state{FL}, USA}
\address[3]{\orgdiv{Department of Statistics}, \orgname{London School of Economics and Political Science}, \orgaddress{London, UK}}
\address[4]{\orgdiv{Department of Decision Sciences}, \orgname{George Washington University}, \orgaddress{Washington, DC, USA}}


\corresp[$\ast$]{Lan Wang, Department of Management Science, University of Miami, Miami, FL 33146, USA. lanwang@mbs.miami.edu}

\abstract{
Fairness has recently received significant attention in machine learning, driven by ethical concerns surrounding algorithmic applications.
However, fairness in policy learning, where the goal is to make fair decision while simultaneously maximizing expected rewards, has been less explored. 
Most existing works in fairness for policy learning focus primarily on action fairness. However, many applications require fairness in both the actions taken and their resulting outcomes. This paper tackles the challenge of double fairness in policy learning by considering the trade-off of three objectives simultaneously: action fairness, outcome fairness, and maximizing the value function. We propose a novel framework that integrates fairness directly into a multi-objective optimization problem for policy learning. Our framework is flexible and accommodates various commonly used fairness notions.  We establish theoretical guarantees for the proposed method and demonstrate its empirical superiority over competing approaches. Finally, we apply our method to a motor third-party liability insurance dataset and an entrepreneurship training dataset. Our proposed method demonstrates greater fairness on both fairness metrics while having only a modest reduction in overall revenue.}

\keywords{Fairness; Trustworthiness; Policy Learning; Multi-objective Optimization}

\maketitle

\section{Introduction} \label{sec:intro}

As a core aspect of trustworthiness,
fairness has become central to machine learning \citep{dwork2012fairness,zemel2013learning, grgic2016case,hardt2016equality, kusner2017counterfactual, wu2019counterfactual,tan2022rise,chen2023learning}, 
motivated by ethical concerns about the societal impacts of algorithmic decision-making. 
In many domains such as healthcare, insurance, lending, and public programs, algorithmic systems can affect large populations at scale. Purely accuracy- or profit-driven optimization can amplify disparities, imposing disproportionate burdens on vulnerable groups, and exposing organizations to legal and reputational risk. Fairness is therefore not merely a normative ideal. It is an operational requirement for responsible deployment and a critical pillar of ethical AI.


Most existing work on algorithmic fairness has focused on supervised learning. In contrast, fairness in  policy learning, where the goal is to  maximize expected reward or social welfare, has received comparatively limited attention \citep{nabi2019learning, tan2022rise,viviano2023fair,fang2023fairness, wang2025counterfactually}. A key distinction is that policy learning is inherently interventional: a policy selects an action $A$, and that action 
alters subsequent outcomes. This leads to two conceptually distinct fairness targets. Action fairness concerns whether the policy treats individuals similarly in its action choices across sensitive groups (e.g., equalized treatment assignments). Outcome fairness instead asks whether the consequences (e.g., the benefits, welfare, or other fairness-relevant outcomes) induced by the policy are equitably distributed across groups. Importantly, equalizing actions does not, by itself, equalize outcomes when groups face different constraints, environments, or downstream responses to the same action.
Consistent with this distinction, our analysis of a Belgian motor liability insurance dataset
\citep{dutang2019casdatasets}
show that standard policy optimization methods
can induce substantial disparities in both action and outcome fairness (Table~\ref{tab:ins} in
Section~\ref{sec:real data}).

This ``action--outcome'' gap arises naturally in practice.
For instance, a government job training program might admit participants
equally (action fair), 
unequal downstream constraints such as transportation access, caregiving burdens, or financial instability can still prevent some participants from completing the program or benefiting from it to the same extent.
This could result in an inequitable final outcome (e.g., different job placement
rates) even though the admission decision appears fair.
In our analysis of such a training program in Section \ref{sec:entre}, we
found a stark example of this disparity: one demographic group had a nearly
20\% lower probability of securing an internship than another.
This underscores the importance of outcome-level fairness in
decision-making.

This work is also motivated by growing concerns about fairness in the insurance industry. Machine learning algorithms are increasingly used to automate risk assessment, pricing, and claims-related decisions, making trustworthy decision rules especially critical. For example, UnitedHealth has used the nH Predict platform \citep{talia2024algorithms} to assess patient risk and forecast outcomes in ways that inform coverage and pricing. While such tools can improve efficiency, they also raise concerns about algorithmic bias and discrimination against vulnerable populations. A recent class-action lawsuit alleges that the nH Predict algorithm was used to unlawfully deny or curtail rehabilitation care for seriously ill and elderly.\footnote{See the news report at \href{https://www.healthcarefinancenews.com/news/class-action-lawsuit-against-unitedhealths-ai-claim-denials-advances}{Healthcare Finance News}.} 

The insurance context further illustrates why action fairness alone is {it insufficient}.
In automobile insurance, a typical ``action'' is to classify a customer as
low-risk vs.\ high-risk, which 
directly affects premiums.
Action fairness focuses on whether similarly situated individuals across
groups receive comparable risk labels and charged comparable premiums.
However, even under perfect action fairness, two individuals with similar profiles receiving the same label and premium, their realized protection can still differ across groups.
For example, claims may be approved at different rates, reimbursed
differently, or processed with different delays across groups due to
differences in documentation burdens, access to provider networks, or other
downstream mechanisms.
In such cases, the action is equal, but the policy-induced outcome, the
expected net benefit or welfare a customer receives, remains unequal.
Outcome fairness is designed to detect and mitigate precisely these
disparities.

These findings highlight the need to balance the decision maker's value
objectives with fair treatment across different groups. Motivated by these concerns, we propose a novel double fairness policy learning framework that jointly targets 
(a) action fairness;
(b) outcome fairness;  and
(c) maximization of the overall value function.
We refer to this integrated goal as
double fairness learning (DFL). 

The DFL framework raises three central challenges. First, outcome fairness can only be controlled indirectly through the policy’s actions and the environment’s response: unlike action fairness, one cannot directly “intervene” on outcomes, so it is unclear when an outcome-fair (or doubly fair) policy exists at all. Second, the three objectives, action fairness, outcome fairness and value maximization, typically conflict, so it is rare for a single policy to optimize all three simultaneously. Hence, a principled framework is needed to characterize and manage these trade-offs, for which we adopt the concept of Pareto optimality. Third, computing the relevant Pareto fairness set is nontrivial in practice: standard linear scalarization approaches can miss Pareto-optimal solutions when the policy class is non-convex, as is typical for many policy parameterizations \citep{schulman2015trust}.

We address these challenges with a unified methodology and theory. We first provide a systematic analysis of when outcome-fair and doubly fair policies can be achieved, yielding interpretable conditions for feasibility. To manage trade-offs, we cast DFL as a multi-objective optimization problem \citep{miettinen1999nonlinear} and target policies that are Pareto-efficient with respect to the two fairness criteria while achieving high value function. To compute Pareto solutions, we adopt a lexicographic weighted Tchebyshev scalarization strategy \citep{steuer1983interactive}, which can recover Pareto solutions without requiring convexity (see Figure~\ref{fig:ls_vs_tch} in Appendix). We then develop an estimator that embeds fairness objectives directly into the policy optimization procedure, establish regret bounds that provide rigorous performance guarantees under double fairness constraints.

The proposed DFL framework is flexible and accommodates a broad range of fairness notions
from the literature, making it applicable across domains where decisions have
heterogeneous downstream impacts.
Moreover, DFL offers a practical framework for internal auditing of algorithmic impacts on consumers, helping organizations surface fairness concerns before they escalate into legal or reputational risks. For instance, see the recent action taken by the \href{https://www.ftc.gov/news-events/news/press-releases/2024/07/ftc-issues-orders-eight-companies-seeking-information-surveillance-pricing}{Federal Trade Commission}.

\textbf{\textit{Related Literature}}.  Policy learning seeks to maximize expected reward by learning an optimal policy that selects the best action for each situation. It has broad applications in precision medicine \citep{Murphy,gest,qian2011performance,zhao2012estimating,wallace2015doubly,wang2018quantile,shi2018high,qi2020multi,cui2021semiparametric,bian2023variable}, economics \citep{manski2004statistical,kitagawa2018should,athey2021policy}, and business strategy \citep{den2015dynamic,javanmard2019dynamic,bian2024tale}. This literature largely focuses on value maximization, whereas our work augments policy learning with two complementary fairness targets—action fairness (fairness of decisions) and outcome fairness (fairness of the induced consequences)—and studies how to optimize value under both.

Work on fairness-aware policy learning is comparatively scarce and typically enforces a single fairness notion. \citet{nabi2019learning} formulate fair policy learning via causal mediation analysis, specifying impermissible causal pathways from the sensitive attribute and optimizing policies under the resulting causal constraints.  \citet{tan2022rise}, \citet{fang2023fairness}, and \citet{cui2025policy} develop quantile-optimal decision rules designed to protect disadvantaged groups by controlling tail outcomes. \citet{viviano2023fair} adopt a “first do no harm” principle by characterizing a welfare Pareto frontier and selecting the fairest allocation within that welfare-efficient set, with computation based on mixed-integer optimization. \citet{wang2025counterfactually} study counterfactual fairness in sequential decision making, proposing approaches tailored to reinforcement learning settings. In contrast to these works, we explicitly distinguish action fairness from outcome fairness and address the double fairness problem with different theoretical tools.  Figure~\ref{fig:radar} compares our method
to existing fairness-aware policy learning approaches. We group prior
single-fairness methods into two categories: action single fair (ASF)
methods, which enforce only action fairness, and outcome single fair
(OSF) methods, which enforce only outcome fairness. Figure~\ref{fig:radar}
shows that our proposed DFL framework achieves consistently strong performance
across all three objectives (action fairness, outcome fairness, and value)
with modest compromises in each. By contrast, competing approaches
typically improve one dimension at the expense of substantial degradation in
the others.


\textbf{\textit{Organization}}. In Section \ref{sec:pre}, we provide a brief overview of policy learning and introduce fairness notations for both action fairness and outcome fairness. Section \ref{sec:double fair} first presents the necessary and sufficient conditions under which a outcome fairness or double fairness policy can be achieved. The task is then formulated within a novel multi-objective optimization framework, and the corresponding estimation procedure is described. Section \ref{sec:theory} establishes the theoretical properties of the proposed estimator. Finally, in Sections \ref{sec:sims} and \ref{sec:real data}, we demonstrate the proposed approach using both synthetic and real datasets.

\begin{figure}[H] 
    \centering
\includegraphics[width=0.5\textwidth]{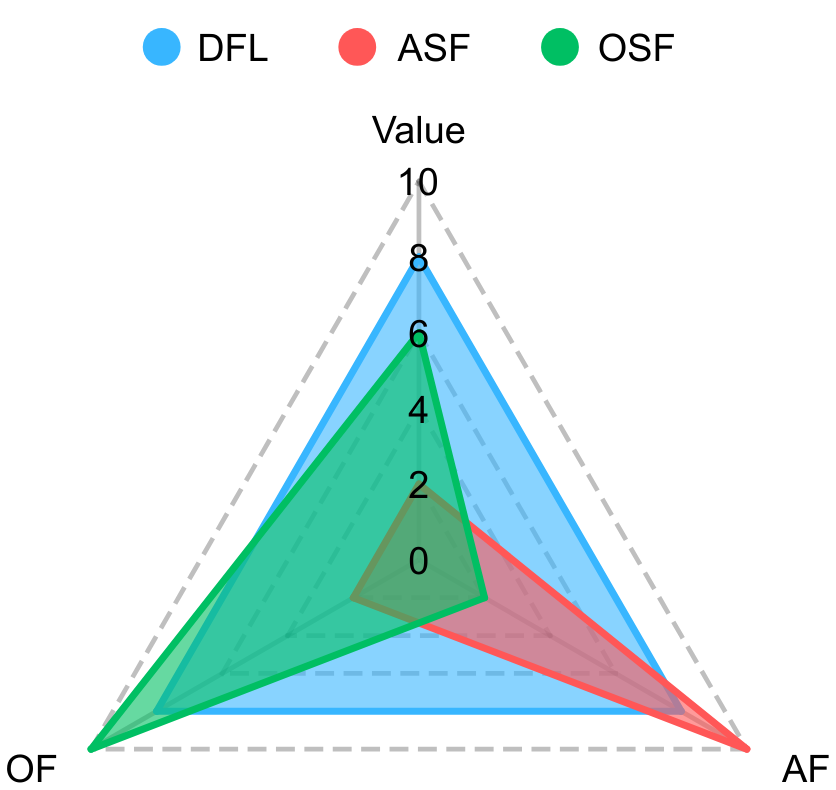}
    \caption{The performance of the three framework types is illustrated using a radar chart, where a higher score along each axis represents better performance in that specific aspect. Here, AF and OF refer to action fairness and outcome fairness, respectively. DFL refers to our proposed double fairness learning method. ASF represents the action single fair policy framework, and OSF denotes the value single fair policy framework.   } 
    \label{fig:radar}
\end{figure}

\section{Preliminary} \label{sec:pre}

 In line with many real-world applications, we consider a binary sensitive attribute and a binary action. Specifically, let \( S \in \{0,1\} \) denote the sensitive attribute, \( X \in \mathbb{R}^d \) a vector of non-sensitive covariates, \( A \in \{0,1\} \) the action, and \( R \) the resulting outcomes. The data-generating process (DGP) is summarized in Figure~\ref{fig: dag}.

Throughout, we allow for a general two-outcome formulation : \( R \equiv (R^{(1)},R^{(2)})^\top \in \mathbb{R}^2 \), where $R^{(1)}$ is the primary outcome that the decision maker seeks to maximize, while $R^{(2)}$ is an outcome along which fairness or equity considerations are assessed. 
This formulation accommodates a range of applications. In the CASdatasets (Section \ref{sec:insurance}), $R^{(1)}$ represents company revenue, while $R^{(2)}$ denotes customer welfare. We also accommodate the degenerate case where $R^{(1)} = R^{(2)}$: for instance, in the government job-training data (Section \ref{sec:entre}), both components correspond to subsequent entrepreneurial activity. In dynamic pricing, $R^{(1)}$ typically represents net profit, whereas $R^{(2)}$ might denote demand or consumer surplus—metrics for which fairness has been extensively studied \citep[see, e.g.,][]{cohen2022price, bian2026beyond}. Furthermore, in healthcare, $R^{(1)}$ may represent clinical outcomes, while $R^{(2)}$ denotes the allocation of medical resources.

We define $r(s,x,a) = \mathbb{E}\left(R^{(1)} \mid S = s, X = x, A = a\right)$ as the conditional expected primary outcome (the reward function), and $f(s,x,a) = \mathbb{E}\left(R^{(2)} \mid S = s, X = x, A = a\right)$ as the conditional expected fairness-related outcome.

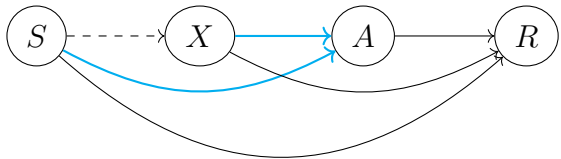
\begin{figure}[H] 
\centering 
\begin{tikzpicture}[xscale=1.2, yscale=2]

    \node[state] (s) at (0,0) {$S$};
    \node[state] (x) at (1.8,0) {$X$};
    \node[state] (a) at (3.6,0) {$A$};
    \node[state] (r) at (5.4,0) {$R$};

    \path[->, cyan, thick] (s) edge[bend right=18] (a);
    \path[->, dashed] (s) edge (x);
    \path[->, cyan, thick] (x) edge (a);
    
    \path[->] (a) edge (r);
    \path[->] (x) edge[bend right=18] (r);
    \path[->] (s) edge[bend right=28] (r);

\end{tikzpicture}
\caption{The directed acyclic graph of the DGP. The arrow in cyan means that the path is intervenable. The dashed line between $S$ and $X$ means that $S$ might influence $X$, but the relationship is uncertain or not definitively established. } 
\label{fig: dag}
\end{figure}

\subsection{Policy Learning}
We first provide a brief overview of the policy learning task. 
Recall that \( S \) denotes the sensitive attribute, \( X \) represents the non-sensitive covariates, \( A  \) is the action, and \( R  \) is the reward. 
A  \textbf{\textit{policy}} $\pi$ prescribes a strategy that selects the action based on  $X$ and $S$, which is a function mapping from the information into a probability distribution over the action space. The objective in policy learning is to find an optimal policy $ \pi^{\mathrm{opt}}$, such that the value (expected reward) is maximized, i.e., \vspace{-0.6cm}
\begin{gather*}
   \pi^{\mathrm{opt}} \in \argmax_{\pi \in \Pi} \left[ V(\pi)\equiv \E^\pi\left(R^{(1)}\right)\right],
\end{gather*}
where $\E^\pi$ denotes the expectation of the reward where the action  follows policy $\pi$, and $\Pi$ is a pre-specified policy class. More specifically, the joint probability of $(X, S, A, R)$ associated with $\E^\pi$ is $p( s,x) \pi(a| s,x) p(r | s,x, a)$, where $\pi(a |s,x)$ refers to the probability of choosing the action $a$ given the information $(s,x)$.

A widely used method for estimating $\pi^{\mathrm{opt}}$ involves two steps: policy evaluation and value maximization. In the policy evaluation step, an estimator $\widehat V(\pi)$ is obtained for the value function using  pre-collected historical data $\{S_i,X_i,A_i, R_i\}_{1 \leq i \leq n}$, where $n$ is the sample size. One can estimate the value function via the following method:\begin{gather*}
    \widehat V(\pi)=\frac{1}{n} \sum_{i=1}^n\sum_{a\in \mathcal{A}}\widehat r(S_i,X_i,a) \pi(a|S_i,X_i),
\end{gather*} where  $\widehat r(s,x,a)$ is the regression function estimator of the reward function $r(s,x,a)$. 
Such a  regression approach has been widely used in the policy learning literature; see, e.g., \citet{qian2011performance,moodie2012q}. Then in the value maximization step, the estimated optimal policy is derived by maximizing this estimated value function, within the pre-specified policy class $\Pi$:
\vspace{-0.6cm}
\begin{gather*}
    \widehat \pi \in \argmax_{\pi \in \Pi} \widehat V(\pi).
\end{gather*} 


\vspace{-0.8cm}
\subsection{Fairness Notions} \label{sec: fairness notion}

Recently, many fairness notions have been proposed under supervised learning setting, with particular attention to equal opportunity \citep{hardt2016equality} and counterfactual fairness \citep{kusner2017counterfactual}. Here, we extend these two notions to the policy learning framework and formally define both action fairness and outcome fairness under each concept.

\begin{defi} [Equal Opportunity]
    A policy $\pi$ is action-equal opportunity fair if for any $a$, $x$, and any $s\neq s'$,
    \vspace{-0.6cm} 
     \begin{gather*}
    \pi(a|s,x)=\pi(a|s',x). 
      \end{gather*} 
    A policy $\pi$ is outcome-equal opportunity fair if for any $a$, $x$, and any $s\neq s'$,
    \vspace{-0.6cm}
    \begin{gather*}
       \E^\pi \left(R^{(2)}\mid s,x\right)=\E^\pi \left(R^{(2)}\mid s',x\right).
        \label{eq:veo}
    \end{gather*} 
\end{defi}
\vspace{-0.6cm}
Equal opportunity requires that individuals with the same observed information $X$ be treated equally across different sensitive groups. This criterion is intuitively appealing and has been widely used in practice \citep{hardt2016equality}.
However, because it conditions only on observed covariates, it may fail to account for historical or unobserved sources of bias \citep{chen2023learning}. To address this, 
\citet{kusner2017counterfactual} proposed \emph{counterfactual fairness}, which adopts a causal perspective by asking whether the decision (or the induced outcome) would have remained unchanged had the individual belonged to a different sensitive group.

\begin{defi} [Counterfactual Fairness] \label{def:cf}
For an individual with observed attributes $(s, x)$, let $X_{s,x}(s')$ denote the counterfactual covariates had the individual's sensitive attribute been $s'$ instead of $s$.
A policy $\pi$ is action-counterfactually fair if $$\pi(a|s,X_{s,x}(s))=\pi(a|s',X_{s,x}(s')) \mbox{ for any } s' \neq s, \mbox{ almost surely.}$$ 
    and  attribute $x$.    
    A policy $\pi$ is outcome-counterfactually fair if for any $s\neq s'$, \begin{gather*}
\E^\pi \left(R^{(2)}|s,X_{s,x}(s)\right)=\E^\pi \left (R^{(2)}|s',X_{s,x}(s') \right)  \mbox{ almost surely.} 
    \end{gather*} 
\end{defi}
\vspace{-0.8cm}
Counterfactual fairness leverages counterfactual covariates to account for hidden biases that may not be captured by observed data. 
Action-counterfactual fairness implies the probability of choosing an action remains invariant under a counterfactual change to the sensitive attribute; and outcome-counterfactual fairness
implies that the expected benefit to the individual is independent of their sensitive attribute in a counterfactual world.


To conclude this section, we present the following examples to highlight the distinction between equal opportunity and counterfactual fairness. 

\begin{example}[Action Fairness]
{\it Equal opportunity.} Consider an educational admissions setting where gender is the sensitive attribute $S$ and test score is the observed covariate $X$. An admission decision is \emph{\it action--equal opportunity fair} if applicants with the same test score have the same probability of admission, regardless of gender. 

{\it Counterfactual fairness.} Suppose a female applicant, Nancy, has observed score $X_{s,x}(s)=59$. Now consider a counterfactual world in which Nancy were male; in that world, downstream effects of gender may change her test score to $X_{s,x}(s')=50$. 
An admission decision is \emph{action-counterfactually fair} if Nancy’s probability of admission in the real world (female, score 59) equals her probability of admission in this counterfactual world (male, score 50).
\end{example}

\begin{example} [Outcome fairness (continued)]
{\it Equal opportunity}. Suppose the value of an admission policy is defined as the probability of graduating with honors. The policy is \emph{outcome--equal opportunity fair} if applicants with the same test score have the same probability of graduating with honors, regardless of gender.

{\it Counterfactual fairness.} The policy is \emph{outcome-counterfactually fair} if Nancy’s real-world probability of graduating with honors (female, $X_{s,x}(s)=59$) equals her counterfactual probability (male, $X_{s,x}(s')=50$).
\end{example}

\subsection{Research Goal}

In fairness-aware policy learning, the decision maker’s objectives may be to (a) identify the policy with the highest value function among \textbf{\textit{action}} fairness policies; (b) find the policy with the highest value function among \textbf{\textit{outcome}} fairness policies; and (c) find the policy that satisfies both action and outcome fairness criteria  while maximizing the value function. 

Objective (a) is straightforward, as the policy maker can directly intervene in the action. For example, when considering equal opportunity fairness, one could simply using existing  algorithm but impose restriction on the policy class $\{\pi \in \Pi: \pi(a|x)\}$, thereby excluding the sensitive variable for decision making. However, objectives (b) and (c) are more challenging:  since direct intervention in the $R$--$(S,X,A)$ path is inaccessible,  one has to indirectly intervene in  $A$--$(S,X)$ to control outcome fairness. Subsequently, several natural questions arise:
(i) Does there exist a policy that ensures fairness in its corresponding outcome? And more ambitiously, can a policy achieve double fairness?
(ii) If no such fair policy exists, yet the decision maker still aims to mitigate both outcome and action unfairness, how should this be addressed?

In the following section, we explore these questions in depth. Specifically, we address Question (i) in Section~\ref{sec:exist} by establishing necessary and sufficient conditions for the existence of fairness policies. We examine Question (ii) in Section~\ref{sec: pareto}, where we adopt a multi-objective optimization framework to mitigate both value and action unfairness.

\section{Double Fairness Learning} \label{sec:double fair}




\subsection{Existence of Outcome Fairness Policy} \label{sec:exist}

Throughout this section, we focus on the equal opportunity fairness, and the discussion of  counterfactual fairness is deferred to Section \ref{app:cf} of the Appendix. We first present our main results on the existence of outcome fairness and double fairness policies. To begin with, we make the following two assumptions.

\begin{assumption} \label{asmp:domination}
    Given a subject with covariates $x$, there exists a treatment option $a^*(x)$ that dominates the other treatment with respect to the value function, regardless of the sensitive group, i.e., $\min_s f(s,x,a^*(x)) \geq \max_s f(s,x,1-a^*(x))$. 
\end{assumption}

\begin{assumption} \label{asmp:non-domination}
    (i) The effects of the sensitive variable have opposite signs across treatments, i.e., $[f(1,x,1)-f(0,x,1)][f(1,x,0)-f(0,x,0)] \leq 0$, for all $x$. \\
    (ii) The conditional treatment effects have opposite signs across sensitive variables, i.e., 
    $[f(1,x,1)-f(1,x,0)][f(0,x,1)-f(0,x,0)] \leq 0$,  for all $x$. \\
    (iii) For all $x$, there exists a pair  $(s^*(x),a^*(x))$ such that $f(s^*(x),x,a^*(x))\geq f(s^*(x),x,1-a^*(x))$, and $f(s^*(x),x,a^*(x))\geq f(1-s^*(x),x,a^*(x))$. 
\end{assumption}

Both Assumptions \ref{asmp:domination} and \ref{asmp:non-domination} require that the sensitive variable has only a moderate impact on the resulting value function. Specifically, Assumption \ref{asmp:domination} implies that the conditional treatment effects have the same sign across both sensitive attributes for a given $x$. As changes in the sensitive variable within the reward function do not alter the sign of the treatment effects, it suggests that the effects of the sensitive variable are relatively \textbf{\textit{mild}} compared to the treatment effects. As demonstrated in Proposition \ref{prop:vf}, if the effects of the sensitive variable are relatively limited, a outcome fairness policy exists, which is intuitively reasonable. If Assumption \ref{asmp:domination} does not hold (i.e., Assumption \ref{asmp:non-domination} (ii) applies), achieving outcome fairness requires additional regularity conditions on the effects of the sensitive variable. Specifically, Assumption \ref{asmp:non-domination} (i) requires that the effects of the sensitive variable have opposite signs across treatments for a given 
$x$, meaning that no sensitive group dominates the other under either treatment choice. Furthermore, when Assumption \ref{asmp:non-domination} (i) is satisfied, a unique double fairness policy exists (refer to Proposition \ref{prop:df}). We next summarize the main results.

\begin{prop} \label{prop:vf}
    There exists a outcome fairness policy for the given environment \textbf{if and only if} either Assumption \ref{asmp:domination} or Assumption \ref{asmp:non-domination} holds.
\end{prop}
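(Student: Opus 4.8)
The plan is to reduce the existence question to a pointwise (in $x$) interval-intersection condition and then verify the equivalence with Assumptions~\ref{asmp:domination} and~\ref{asmp:non-domination} by elementary case analysis on the four reward values. Fix $x$ and abbreviate $a=f(0,x,0)$, $b=f(0,x,1)$, $c=f(1,x,0)$, $d=f(1,x,1)$. Since the action is binary, for each sensitive group the induced outcome is
\[
\E^\pi\left(R^{(2)}\mid s,x\right)=f(s,x,0)+\pi(1\mid s,x)\,[f(s,x,1)-f(s,x,0)],
\]
which, as $\pi(1\mid s,x)$ ranges over $[0,1]$, sweeps the closed interval $I_s=[\min_a f(s,x,a),\max_a f(s,x,a)]$. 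Because the policy may choose $\pi(1\mid 0,x)$ and $\pi(1\mid 1,x)$ independently, an outcome-(equal-opportunity)-fair policy exists at $x$ if and only if the two groups can be driven to a common outcome value, i.e.\ $I_0\cap I_1\neq\emptyset$, and a globally fair policy exists iff this holds for (almost) every $x$. So it suffices to prove the pointwise equivalence $I_0\cap I_1\neq\emptyset\iff \text{A1}\lor\text{A2}$.

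For the ``if'' direction I would argue as follows. If Assumption~\ref{asmp:domination} holds, say with $a^*=1$, then $\max(a,c)\le\min(b,d)$, so any $t\in[\max(a,c),\min(b,d)]$ satisfies $f(s,x,0)\le t\le f(s,x,1)$ and hence lies in $I_s$ for both $s$; thus $t\in I_0\cap I_1$. For Assumption~\ref{asmp:non-domination} it suffices to use part (i): I claim $(d-b)(c-a)\le 0$ already forces overlap. Indeed, if the intervals were disjoint then one lies strictly above the other; in the case $\max(a,b)<\min(c,d)$ both $d-b>0$ and $c-a>0$, while in the case $\max(c,d)<\min(a,b)$ both $d-b<0$ and $c-a<0$, so either way $(d-b)(c-a)>0$, contradicting (i). Hence part (i), and a fortiori Assumption~\ref{asmp:non-domination}, implies $I_0\cap I_1\neq\emptyset$.

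For the ``only if'' direction I would split on the sign of the product of conditional treatment effects $\delta_0\delta_1=(b-a)(d-c)$. If $\delta_0\delta_1>0$, relabel actions so both are positive; then $I_0=[a,b]$, $I_1=[c,d]$, and overlap means $b\ge c$ and $d\ge a$, which together with $b\ge a$, $d\ge c$ gives $b\ge\max(a,c)$ and $d\ge\max(a,c)$, i.e.\ $\min(b,d)\ge\max(a,c)$, exactly Assumption~\ref{asmp:domination} with $a^*=1$. If $\delta_0\delta_1\le 0$, then part (ii) holds; relabeling groups if necessary assume $\delta_0\ge0\ge\delta_1$, so $I_0=[a,b]$, $I_1=[d,c]$ and overlap means $b\ge d$ and $c\ge a$. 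Then $(d-b)(c-a)\le 0$ yields part (i), and the pair $(s^*,a^*)=(0,1)$ satisfies $f(0,x,1)=b\ge a=f(0,x,0)$ and $f(0,x,1)=b\ge d=f(1,x,1)$, giving part (iii); hence Assumption~\ref{asmp:non-domination} holds. Degenerate cases in which some treatment effect vanishes fall into the $\delta_0\delta_1\le0$ branch and are handled identically, with the relevant interval collapsing to a point.

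The routine parts are the interval computation and the sign bookkeeping; the one genuinely delicate step is the reduction in the first paragraph, namely checking that pointwise interval intersection is both necessary and sufficient for a (measurable) outcome-fair policy, and, within the converse, correctly exhibiting the witness $(s^*,a^*)$ for part (iii) across the sign configurations. I would also flag the quantifier reading: the equivalence is naturally pointwise in $x$, so ``Assumption~\ref{asmp:domination} or Assumption~\ref{asmp:non-domination} holds'' should be understood as holding at (almost) every $x$, and I would state the reduction accordingly to avoid the spurious case in which Assumption~\ref{asmp:domination} holds on one region of covariate space and Assumption~\ref{asmp:non-domination} on another.
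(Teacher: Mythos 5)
Your proposal is correct and takes essentially the same route as the paper: both reduce existence to a pointwise feasibility condition for the linear equation $\sum_a f(0,x,a)\pi(a|0,x)=\sum_a f(1,x,a)\pi(a|1,x)$ — an interval-intersection criterion — and then match the sign configurations of the four reward values to Assumptions \ref{asmp:domination} and \ref{asmp:non-domination}. The one substantive (and welcome) difference is that by working directly with the attainable-outcome intervals $I_0,I_1$ you avoid the paper's division by $f(1,x,1)-f(1,x,0)$, so the degenerate case of a vanishing treatment effect is handled explicitly rather than implicitly.
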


\begin{prop} \label{prop:df}
A double fairness policy is either almost surely unique or does not exist. Furthermore, a  double fairness policy exists if and only if Assumption \ref{asmp:non-domination} (i) holds.
\end{prop}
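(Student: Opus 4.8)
The plan is to reduce the functional question to a pointwise (in $x$) linear problem. By definition, any action--equal-opportunity-fair policy satisfies $\pi(1\mid 0,x)=\pi(1\mid 1,x)$; write this common value as $p(x)\in[0,1]$. Substituting into the outcome expectation gives, for each group $s$,
$$\E^\pi\!\left(R^{(2)}\mid s,x\right)=f(s,x,0)+p(x)\big[f(s,x,1)-f(s,x,0)\big].$$
Thus a double fairness policy is exactly a choice of $p(x)\in[0,1]$ for almost every $x$ that equates the two group expectations. First I would impose this equality and rearrange it into a single linear equation in $p(x)$, whose coefficient is $\big[f(0,x,1)-f(0,x,0)\big]-\big[f(1,x,1)-f(1,x,0)\big]$ and whose right-hand side is $f(1,x,0)-f(0,x,0)$. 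Writing $D_0(x)=f(1,x,0)-f(0,x,0)$ and $D_1(x)=f(1,x,1)-f(0,x,1)$ for the sensitive-attribute gaps under the two actions, a short algebraic identity shows the coefficient equals $D_0(x)-D_1(x)$, so the equation reads $p(x)\big[D_0(x)-D_1(x)\big]=D_0(x)$.

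For the uniqueness half, I would argue pointwise. Whenever $D_0(x)\ne D_1(x)$ this linear equation has the single solution $p(x)=D_0(x)/\big(D_0(x)-D_1(x)\big)$, so the policy value is completely pinned down there. The only slack arises on $\{x: D_0(x)=D_1(x)\}$, which splits into two cases: if $D_0(x)=D_1(x)\ne 0$ the equation is inconsistent (no feasible $p$), whereas if $D_0(x)=D_1(x)=0$ the fairness outcome does not depend on $S$ at $x$ and every $p(x)$ works. The latter is precisely the degenerate ``$S$ has no effect on $R^{(2)}$'' set; under a nondegenerate covariate distribution it has probability zero, so away from it the double-fair policy is determined uniquely. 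This yields the dichotomy: either no feasible $p(x)$ exists on a positive-probability set (the policy does not exist), or a feasible value exists and is unique for almost every $x$ (almost-sure uniqueness).

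For the existence characterization I would check when the pointwise solution is a valid probability, i.e. $p(x)\in[0,1]$. Using $p=D_0/(D_0-D_1)$ and $1-p=-D_1/(D_0-D_1)$, both are nonnegative iff $D_0(x)$ and $D_1(x)$ have opposite signs or one of them vanishes, i.e. iff $D_0(x)D_1(x)\le 0$. This also covers the boundary cases cleanly: $D_0=D_1\ne 0$ gives no solution and indeed $D_0D_1>0$, while $D_0=D_1=0$ gives a solution and $D_0D_1=0$. Hence, at each $x$, a feasible double-fair action probability exists if and only if $D_0(x)D_1(x)\le 0$, which is exactly $\big[f(1,x,1)-f(0,x,1)\big]\big[f(1,x,0)-f(0,x,0)\big]\le 0$, i.e. Assumption~\ref{asmp:non-domination}(i). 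Requiring this for almost every $x$ gives the global equivalence, and a measurable selection of $p(\cdot)$ assembles the pointwise solutions into an admissible policy for the forward direction.

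The main obstacle I anticipate is the careful bookkeeping on the degenerate set $\{D_0(x)=D_1(x)\}$: one must separate the inconsistent case from the vacuous case and justify that the vacuous (hence genuinely non-unique) case is measure-zero in order to legitimately claim ``almost surely unique.'' A secondary technical point is verifying the identity $\big[f(0,x,1)-f(0,x,0)\big]-\big[f(1,x,1)-f(1,x,0)\big]=D_0(x)-D_1(x)$ and confirming the sign analysis collapses exactly to the product in Assumption~\ref{asmp:non-domination}(i), so that the feasibility condition coincides with the stated assumption rather than a strictly stronger requirement.
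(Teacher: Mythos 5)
Your proposal is correct and follows essentially the same route as the paper: impose $\pi(1,x)=\pi(0,x)=p(x)$, reduce to the pointwise linear equation $p(x)\bigl[D_0(x)-D_1(x)\bigr]=D_0(x)$, solve for $p(x)=D_0(x)/\bigl(D_0(x)-D_1(x)\bigr)$ (identical to the paper's closed form), and characterize feasibility $p(x)\in[0,1]$ as $D_0(x)D_1(x)\le 0$, i.e.\ Assumption~\ref{asmp:non-domination}(i). If anything, your treatment of the degenerate set $\{D_0(x)=D_1(x)\}$ and of the boundary cases $D_0D_1=0$ is more careful than the paper's, which checks only the open-interval condition and does not separately discuss the vanishing denominator.
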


Propositions \ref{prop:vf} and \ref{prop:df} establish the sufficient and necessary conditions  for the existence of the outcome fairness policy and the double fairness policy, respectively. A similar result holds for counterfactual fairness; see Section \ref{app:discrete} of the Supplemental Materials, which also discusses extensions of these results to multi-action settings beyond the binary case.

Since a outcome fairness or a double fairness policy may not always exist when performing the policy learning, one could instead seek a policy that is as fair as possible in both value function and action. In the next section, we present a multi-objective policy learning approach to achieve this goal. Notably, if a doubly fairness policy exists, our proposed method below can automatically identify it.

\subsection{Multi-objective Policy Learning} \label{sec: pareto}

Since achieving exact action and outcome fairness is generally challenging, we first introduce fairness metrics to quantify the degree of unfairness. Specifically, let $\Delta_1(\pi)$ and $\Delta_2(\pi)$ denote generic metrics for action fairness and outcome fairness, respectively, where smaller values indicate less unfairness.
Below, we present several examples of fairness metrics using \textit{equal opportunity} and \textit{counterfactual fairness}. For brevity, in what follows, we write $\pi(s, x)$ to represent $\pi(1 | s, x)$.

\begin{example}[Equal Opportunity]
    For equal opportunity fairness, a fairness metric for either action or value can be defined as following: \begin{align*}
    &\Delta_1(\pi)= \E\left[\pi(1,X)-\pi(0,X)\right]^2 \mbox{ and } \Delta_2(\pi)= \E\left[f^\pi (1,X)-f^\pi (0,X)\right]^2, 
\end{align*} where $f^\pi (s,x)=\E^\pi \left(R^{(2)}\mid s,x\right)$.  They can be estimated using \begin{align*}
    &\widehat \Delta_1(\pi)=\frac{1}{n}\sum_{i=1}^n \left[\pi(1,X_i)-\pi(0,X_i)\right]^2 \mbox{ and } \widehat\Delta_2(\pi)= \frac{1}{n}\sum_{i=1}^n\left[ \widehat f^\pi (1,X_i)-\widehat f ^\pi (0,X_i)\right]^2,
\end{align*} where $\widehat f^\pi (s,x)$ is some estimator for $f^\pi (s,x)$.
\end{example}

\begin{example}[Counterfactual Fairness]
    A counterfactual fairness metric for action or value can be defined as: \begin{align*}
    \Delta_1(\pi)= \E\left[\pi(S,X)-\pi(S',X')\right]^2 \mbox{ and } \Delta_2(\pi)= \E\left[f^\pi (S,X)-f^\pi (S',X')\right]^2, 
\end{align*} where $S'=1-S$, and $X'\equiv X_{S,X}(S')$. They can be estimated by \begin{gather*}
    \widehat \Delta_1(\pi)=\frac{1}{n}\sum_{i=1}^n \left[\pi(S_i,X_i)-\pi(S_i',\widehat X_i')\right]^2, 
    \mbox{ and } \widehat\Delta_2(\pi)= \frac{1}{n}\sum_{i=1}^n\left[ \widehat f^\pi (S_i,X_i)-\widehat f^\pi (S_i',\widehat X_i')\right]^2.  
\end{gather*} Note that here the counterfactual covariates $X_i'$  need to be estimated by $\widehat X_i'$. The entire estimation procedure will be discussed in Section~\ref{sec:app of eo and cf}.
\end{example}

Making optimal decisions in the presence of trade-offs among multiple objectives is inherently challenging, as it is generally impossible to optimize all objectives simultaneously. Therefore, it is crucial to understand how to effectively balance multiple objectives and make optimal trade-offs among them. The concept of Pareto set \citep[see, e.g.,][]{miettinen1999nonlinear} offers a valuable framework for this purpose, capturing the idea that improvements in one objective cannot be achieved without compromising at least one other. By leveraging the idea of Pareto set, we can navigate complex decision-making scenarios and strive for policies that optimize overall benefits across diverse objectives. 

We now incorporate the two previously defined fairness metrics to formulate the problem as a multi-objective policy learning task. Specifically, our proposal involves identifying the optimal Pareto fairness  policy within the Pareto  fairness  set \(\Pi_p\) that maximizes the value function:
\begin{gather} \label{eq:pi star}
    \pi^* \in \argmax_{\pi \in  \Pi_p}  V(\pi).
\end{gather} Here, \(\Pi_p\) denotes the set of Pareto fairness policies--those for which no other policy achieves strictly better action fairness and outcome fairness simultaneously. We formally define the Pareto fairness set for our problem below.

\begin{defi}[Pareto Fairness Policies and Pareto Fairness Set] \label{def:Pareto}
Let \(\Delta_1(\pi)\) and \(\Delta_2(\pi)\) denote generic fairness metrics for measuring the degree of action and outcome fairness, respectively. A policy \(\pi_p\) is said to be a \textbf{Pareto fairness policy} if there does not exist another policy \(\pi'\) such that \(\Delta_1(\pi') \leq \Delta_1(\pi_p)\) and \(\Delta_2(\pi') \leq \Delta_2(\pi_p)\), with at least one of these two inequalities being strict. The \textbf{Pareto fairness set}, denoted by \(\Pi_p\), is the collection of all such Pareto policies.
\end{defi}

\begin{remark}
Recall from Proposition \ref{prop:df} that if a double fairness policy $\bar\pi$ exists (i.e., $\Delta_1(\bar\pi) = \Delta_2(\bar\pi) = 0$), then it is almost surely unique. Since the fairness metrics are non-negative, no other policy $\pi'$ satisfies 
$\Delta_1(\pi') < \Delta_1(\bar\pi)$ or $\Delta_2(\pi') < \Delta_2(\bar\pi)$. Thus, by Definition \ref{def:Pareto}, $\Pi_p$ must be a singleton containing only $\bar\pi$. Consequently, if a double fairness policy exists, our proposed method in Equation~\eqref{eq:pi star} will automatically identify it.

\end{remark}



Next we demonstrate how to estimate $\pi^*$ as defined in Equation \eqref{eq:pi star}. Naturally, the estimation procedure consists of first approximating the Pareto set $\Pi_p$ by an estimated Pareto set $\widehat \Pi_p$, followed by choosing the policy within $\widehat\Pi_p$ that maximizes the estimated value function. Although Definition \ref{def:Pareto} offers an intuitive understanding of the Pareto fairness set, directly estimating the fairness Pareto set according to this definition can be challenging. To better characterize the Pareto fairness  set, we present the following useful proposition.

\begin{prop}[Lexicographic Weighted Tchebychef Method \citep{steuer1983interactive}]\label{prop: Cheby}
    The Pareto fairness  set can be computed via \begin{gather*} 
        \Pi_p= \bigcup_{\alpha\in(0,1)} \big\{\pi: \arginf_{\pi \in \Lambda_{\alpha}^*} \Delta(\pi) \big\}, \mbox{ where } \Delta(\pi)=\Delta_1(\pi)+\Delta_2(\pi),  \\
        \Lambda_{\alpha}^*=\big\{\pi:\arginf_{\pi \in \Pi} M_\alpha (\pi)\big\}, \mbox{ and }
          M_\alpha (\pi)=\max\left\{\alpha\Delta_1(\pi),(1-\alpha)\Delta_2(\pi)\right\},
    \end{gather*} where $\Pi$ is a pre-specified policy class.
\end{prop}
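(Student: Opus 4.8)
The plan is to prove the set identity by two inclusions, working throughout in the two-dimensional objective space with the image point $z(\pi) = (\Delta_1(\pi), \Delta_2(\pi)) \in \mathbb{R}_{\ge 0}^2$. Both sides of the claimed identity depend on $\pi$ only through $z(\pi)$, and Pareto optimality in Definition~\ref{def:Pareto} is exactly non-domination of $z(\pi)$. I would assume throughout that $\Delta_1,\Delta_2$ are continuous and that $\Pi$ is such that the relevant infima are attained (e.g.\ $\Pi$ compact), so that $\Lambda_\alpha^*$ and the inner $\arginf$ are nonempty; this is needed merely so the objects in the statement are well defined.

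For the first inclusion (the right-hand union $\subseteq \Pi_p$, the easy direction), I would fix $\alpha \in (0,1)$ and let $\pi^\dagger \in \arginf_{\pi\in\Lambda_\alpha^*}\Delta(\pi)$, so in particular $\pi^\dagger \in \Lambda_\alpha^*$ and $M_\alpha(\pi^\dagger) = \inf_\Pi M_\alpha$. Suppose for contradiction $\pi^\dagger \notin \Pi_p$; then some $\pi'$ satisfies $\Delta_1(\pi') \le \Delta_1(\pi^\dagger)$ and $\Delta_2(\pi') \le \Delta_2(\pi^\dagger)$ with at least one strict. Since $\alpha, 1-\alpha > 0$, each weighted coordinate can only decrease, so $M_\alpha(\pi') = \max\{\alpha\Delta_1(\pi'),(1-\alpha)\Delta_2(\pi')\} \le M_\alpha(\pi^\dagger)$, forcing $\pi' \in \Lambda_\alpha^*$; but the strict inequality gives $\Delta(\pi') < \Delta(\pi^\dagger)$, contradicting that $\pi^\dagger$ minimizes $\Delta$ over $\Lambda_\alpha^*$. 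This is precisely the role of the lexicographic second stage: minimizing $M_\alpha$ alone yields only weak Pareto optimality, and the tie-break on $\Delta = \Delta_1 + \Delta_2$ upgrades it to full Pareto optimality.

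For the second inclusion ($\Pi_p \subseteq$ the right-hand union, the substantive direction), given $\pi_p \in \Pi_p$ with both components positive I would choose the balancing weight $\alpha^* = \Delta_2(\pi_p)/(\Delta_1(\pi_p) + \Delta_2(\pi_p)) \in (0,1)$, which equalizes the weighted objectives at $\pi_p$: setting $t^* := \alpha^*\Delta_1(\pi_p) = (1-\alpha^*)\Delta_2(\pi_p)$ gives the key identities $t^*/\alpha^* = \Delta_1(\pi_p)$ and $t^*/(1-\alpha^*) = \Delta_2(\pi_p)$. I would then argue: (i) $\pi_p \in \Lambda_{\alpha^*}^*$, because any $\pi'$ with $M_{\alpha^*}(\pi') < t^*$ would satisfy both $\Delta_1(\pi') < \Delta_1(\pi_p)$ and $\Delta_2(\pi') < \Delta_2(\pi_p)$, strictly dominating $\pi_p$; and (ii) every $\pi'' \in \Lambda_{\alpha^*}^*$ has $M_{\alpha^*}(\pi'') = t^*$, hence $\Delta_1(\pi'') \le \Delta_1(\pi_p)$ and $\Delta_2(\pi'') \le \Delta_2(\pi_p)$ by those identities, so Pareto optimality of $\pi_p$ forces equality in both and $\Delta$ is constant on $\Lambda_{\alpha^*}^*$. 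Consequently $\pi_p$ attains $\arginf_{\pi\in\Lambda_{\alpha^*}^*}\Delta$, placing it in the union.

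The main obstacle, and the part requiring the most care, is the degenerate boundary. The doubly fair case $\Delta_1(\pi_p) = \Delta_2(\pi_p) = 0$ is benign: by the Remark and Proposition~\ref{prop:df} such $\pi_p$ is the unique Pareto policy and has $M_\alpha(\pi_p) = \Delta(\pi_p) = 0$ for every $\alpha$, so it is recovered trivially. The delicate case is when exactly one metric vanishes, say $\Delta_1(\pi_p) = 0 < \Delta_2(\pi_p)$, since then the balancing weight degenerates to $\alpha^* = 1$, outside $(0,1)$. Here $\pi_p$ sits at the extreme end of the frontier where action fairness is globally minimized, and with strictly positive weights the Tchebyshev minimizer may only approach $z(\pi_p)$ as $\alpha \to 1$ rather than attain it at any interior $\alpha$. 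I would close this gap either by imposing a mild attainment condition (e.g.\ a positive gap $\inf\{\Delta_1(\pi') : \Delta_2(\pi') < \Delta_2(\pi_p)\} > 0$, which guarantees an admissible interior $\alpha$ recovers $\pi_p$), or by invoking the cited characterization of \citet{steuer1983interactive}, under which the union of lexicographic Tchebyshev solutions coincides with the set of \emph{properly} Pareto optimal policies, so that improperly efficient endpoints are precisely the ones requiring separate treatment. I would flag this explicitly as the one place where the clean interior argument does not directly apply, and verify that under the paper's operating assumptions the degenerate endpoints either coincide with the doubly fair policy or are reached by admissible weights.
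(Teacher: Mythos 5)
Your proof is correct in substance but takes a genuinely different route from the paper: the paper's entire proof is a two-line citation to Theorems 5.4.1 and 5.4.2 of \citet{miettinen1999nonlinear}, whereas you give a self-contained argument. Your easy inclusion (tie-breaking on $\Delta=\Delta_1+\Delta_2$ upgrades weak Pareto optimality of Tchebycheff minimizers to Pareto optimality) and your hard inclusion (the balancing weight $\alpha^*=\Delta_2(\pi_p)/(\Delta_1(\pi_p)+\Delta_2(\pi_p))$ that equalizes the two weighted coordinates at $\pi_p$) are exactly the mechanisms underlying the cited theorems, so the approaches are mathematically parallel; what your version buys is transparency about where the argument needs hypotheses (attainment of infima, continuity) that the bare citation leaves implicit. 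More importantly, the degenerate boundary case you flag is a genuine issue with the proposition as stated, not a technicality: the cited theorems take the reference point to be a \emph{utopian} point strictly componentwise below the ideal point, which is precisely what guarantees the balancing weights stay in the open interval. Here the reference point is the origin and $\Delta_i$ can vanish; one can construct a policy class whose image is $\{(0,1)\}\cup\{(t,1-t):t\in(0,1]\}$, in which the Pareto-optimal point $(0,1)$ is never a minimizer of $M_\alpha$ for any $\alpha\in(0,1)$ (small $t>0$ always does strictly better), so the stated set equality fails at such endpoints. Your proposed fixes (a positive-gap condition, or restricting the claim to properly Pareto optimal policies, or shifting the reference point) are the standard remedies; the paper's citation-only proof does not address this, so your attempt surfaces a caveat the paper should state.
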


\begin{remark}
    \citet{viviano2023fair} adopted the linear scalarization approach (specific formulation discussed in Section \ref{sec:sims}) to characterize the Pareto set. However, this method  cannot identify Pareto fairness  policies in the non-convex regions of the Pareto fairness  front (as illustrated in Figure \ref{fig:ls_vs_tch} in Section \ref{app:linear vs tch} of the Supplemental Materials).  
    By contrast, the lexicographic weighted Tchebycheff approach 
    can identify the full set of Pareto fairness  policies without requiring convexity. We also include a simple intuitive example in Section \ref{app:linear vs tch} to demonstrate that the linear scalarization approach fails, while the weighted Tchebycheff method works.
\end{remark}

Now it remains to estimate the Pareto fairness  set via Proposition \ref{prop: Cheby}. The estimation procedure consists of the following steps. First, the fairness metrics $\Delta_1(\pi)$ and $\Delta_2(\pi)$ are estimated by $\widehat \Delta_1(\pi)$ and $\widehat \Delta_2(\pi)$, respectively. Then for any fixed $\alpha \in (0,1)$, the set $\Lambda_{\alpha}^*$ is estimated by \begin{gather} \label{eq:solution constraint}
\widehat\Lambda_{\alpha}^*=\left\{\pi\in \Pi:
\widehat M_{\alpha}(\pi) \leq \widehat M_{\alpha}^*+\kappa
\right\}, \\ \nonumber
\mbox{ where } \widehat M_{\alpha}(\pi)=\max\left\{\alpha\widehat\Delta_1(\pi),(1-\alpha)\widehat \Delta_2(\pi)\right\}, \mbox{ and } \widehat M_{\alpha}^*=\inf_{\pi\in \Pi} \widehat M_{\alpha}(\pi).
\end{gather} Here, $\kappa$ is a slack parameter, chosen to be a sufficiently small constant that ensures $\Lambda_{\alpha}^* \subseteq \widehat{\Lambda}_{\alpha}^*$ with high probability. 
The Pareto fairness set then can be estimated by \begin{align*} 
    \widetilde \Pi_p= \bigcup_{\alpha\in(0,1)} \left\{\pi \in \widehat\Lambda_{\alpha}^*: \widehat\Delta(\pi) \leq \widehat \Delta_{\alpha}^*+\kappa \right\},
\end{align*} 
where $\widehat\Delta(\pi)=\widehat\Delta_1(\pi)+\widehat\Delta_2(\pi)$, and $\widehat \Delta_{\alpha}^*= \inf_{\pi \in \widehat\Lambda_{\alpha}^*  }\widehat\Delta(\pi) $. Similarly, we introduce a slack parameter to ensure that the approximated Pareto fairness  set $\widetilde{\Pi}_p$ covers the true set $\Pi_p$ with high probability. Note that we can use the same slack parameter $\kappa$ as before, since the convergence rates of $\widehat{M}_{\alpha}(\pi)$ and $\widehat{\Delta}(\pi)$ are of the same order (as shown in Lemma \ref{lem:discretize error} in the supplemental material).


Since deriving the approximated set $\widetilde \Pi_p$ requires searching over all $\alpha \in (0,1)$, we discretize the interval $(0,1)$ by defining $\Omega = \{\alpha_1, \alpha_2, \dots, \alpha_K\}$, where $K$ denotes a pre-determined discretization level. Moreover, we require that for any $\alpha \in (0,1)$, there exists $\alpha_k \in \Omega$ such that $|\alpha - \alpha_k| \leq K^{-1}$.
We show later in Section \ref{sec:theory} that it suffices to set $K=O(\sqrt{n/ \log n})$ to guarantee desirable theoretical properties. Then we have \begin{align*} 
        \widehat \Pi_p= \bigcup_{k=1}^K \left\{\pi \in \widehat\Lambda_{\alpha_k}^*: \widehat\Delta(\pi) \leq \widehat \Delta_{\alpha_k}^*+\kappa\right\}.
    \end{align*}
  Finally, the estimated policy is \begin{gather} \label{eq:policy estimator}
        \widehat \pi \in  \argmax_{\pi \in \widehat \Pi_p} \widehat V(\pi).
    \end{gather}  We summarize the estimation procedure in Algorithm  \ref{alg}.

\begin{algorithm}[t]
	\caption{Pseudocodes for Estimating 
  $\widehat\pi$.}\label{alg}
	\begin{algorithmic}[1]
	
		\Function{}{$ \{S_i,X_i,A_i,R_i  \}_{1\leq i \leq n},\; \Pi, \kappa, K$}

   \State Calculate $\widehat\Delta_1(\pi)$, $\widehat\Delta_2(\pi)$, and $\widehat V(\pi)$, for $\pi \in \Pi$.
    \For {$k=1, 2 \dots , K$}

        \State Compute $\widehat M_{\alpha_k}^*=\inf_{\pi\in \Pi} \widehat M_{\alpha_k}(\pi)= \inf_{\pi\in \Pi}\max\{\alpha_k\widehat \Delta_1(\pi),(1-\alpha_k)\widehat\Delta_2(\pi)\}$.

    \State Compute $\widehat \Delta_{\alpha_k}^*= \inf_{\pi\in \Pi}\widehat\Delta(\pi) $, s.t. $\widehat M_{\alpha_k}(\pi) \leq \widehat M_{\alpha_k}^*+\kappa$.

\State Compute $\widehat V_k^* = \sup_{\pi\in \Pi} \widehat V(\pi)$, s.t. $\widehat \Delta(\pi) \leq \widehat \Delta_{\alpha_k}^*(\pi)+\kappa, \mbox{ and }\widehat M_{\alpha_k}(\pi) \leq \widehat M_{\alpha_k}^*+\kappa$. \par \,
Denote the solution as $\widehat \pi_k^*$.
\EndFor 
\State Compute $k^*= \argmax_{1 \leq k \leq K} \widehat V_k^* $ .

\State \Return $\widehat\pi= \widehat \pi_{k^*}^*$.
		\EndFunction
\end{algorithmic}

\end{algorithm}

\section{Theoretical Results} \label{sec:theory}

In this section, we first establish the theoretical properties of our method under general fairness notion and metric. Theorem \ref{them:regret} provides the regret bound for the estimated policy in terms of the Hausdorff distance  between the estimated and true Pareto sets; while Theorem~\ref{thm:pset} further bounds this distance. Both results hold for any fairness metric within our framework. We then specialize these results to equal opportunity and counterfactual fairness, deriving the corresponding regret bounds in Theorems~\ref{thm:eoapp} and \ref{thm:cfapp}.

We begin by defining the regret function.
The regret of $\widehat{\pi}$ quantifies the performance gap between  $\pi^*$ and $\widehat{\pi}$, and is defined as
\begin{gather} \label{eq:regret}
    \mu(\widehat{\pi}) \equiv \sup_{\pi \in \Pi_p} V(\pi) - V(\widehat{\pi}) = V(\pi^*) - V(\widehat{\pi}).
\end{gather} Unlike the standard regret function, which is always non-negative, our regret in Equation \eqref{eq:regret} can be negative because $\widehat{\pi}$ and $\pi^*$ are optimized over different sets, $\widehat{\Pi}_p$ and $\Pi_p$. Thus, establishing a lower bound for the regret is essential to demonstrate that the estimated policy closely mimics the optimal policy $\pi^*$ in terms of the resulting value function. Consequently, we derive a two-sided bound for the regret $\mu(\widehat{\pi})$ in the following theorem. Our analysis begins with the following bounded reward assumption, which is common in the policy learning literature, see, e.g., \citet{zhao2012estimating}.

\begin{assumption}\label{asmp:bounded reward}
    The functions $r(s,x,a)$, $\widehat{r}(s,x,a)$, $f(s,x,a)$, and $\widehat{f}(s,x,a)$ are uniformly bounded in absolute value by $R_{\max}$.
\end{assumption}

\begin{theorem}[Regret Bound] \label{them:regret}
Suppose Assumption \ref{asmp:bounded reward} holds, and the slack parameter $\kappa$ satisfies \vspace{-0.2cm}
\begin{gather*}
    \kappa \geq C \max\Big[\sup_{\pi \in \Pi} \big| \Delta_1(\pi) - \widehat{\Delta}_1(\pi)  \big|+\sup_{\pi \in \Pi} \big| \Delta_2(\pi) - \widehat{\Delta}_2(\pi)  \big|, \; K^{-1} \Big], 
\end{gather*}
for some constant $C$. Then the regret $\mu(\widehat{\pi})$ satisfies \vspace{-0.2cm}
\begin{gather*}
    -H_{\mathrm{TV}}(\Pi_p,\widehat{\Pi}_p) 
    \;\lesssim\; \mu(\widehat{\pi}) 
    \;\lesssim \; \min\Big[  \sup_{\pi \in \Pi_p} | \eta(\pi)|+ H_{\mathrm{TV}}(\Pi_p,\widehat{\Pi}_p), \; \sup_{\pi \in \Pi} 
    |\eta(\pi)| \Big] ,
\end{gather*} almost surely.
Here, $\eta(\pi) \equiv \widehat{V}(\pi) - V(\pi)$ denotes the policy evaluation error,  and 
$H_{\mathrm{TV}}(\Pi_p,\widehat{\Pi}_p)$ is the Hausdorff distance between $\Pi_p$ and $\widehat{\Pi}_p$ under the total variation metric\footnote{Here we use the total variation distance \(d_{\mathrm{TV}}\), so that
\begin{gather*}
H_{\mathrm{TV}}(\Pi_p,\widehat{\Pi}_p)
=
\max\Big\{
\sup_{\pi\in \Pi_p}\inf_{\hat\pi\in \widehat{\Pi}_p} d_{\mathrm{TV}}(\pi,\hat\pi),\;
\sup_{\hat\pi\in \widehat{\Pi}_p}\inf_{\pi\in \Pi_p} d_{\mathrm{TV}}(\hat\pi,\pi)
\Big\}, \\
d_{\mathrm{TV}}(\pi,\hat\pi)
:= \sup_{s,x} d_{\mathrm{TV}}\big(\pi(\cdot\mid s,x),\hat\pi(\cdot\mid s,x)\big)
= \sup_{s,x} \frac12\sum_{a\in\mathcal{A}} \big|\pi(a\mid s,x)-\hat\pi(a\mid s,x)\big|.
\end{gather*} }.
\end{theorem}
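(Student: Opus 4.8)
The plan is to derive both bounds from two structural facts and then combine them with the optimality properties of $\pi^*$ and $\widehat\pi$. The facts are: (i) the value functionals $V$ and $\widehat V$ are Lipschitz in the total-variation policy metric, and (ii) the slack parameter $\kappa$ guarantees the coverage $\Pi_p\subseteq\widehat\Pi_p$, so that in particular $\pi^*\in\widehat\Pi_p$. For (i), unwinding $V(\pi)=\E[\sum_a r(S,X,a)\,\pi(a\mid S,X)]$ and using $|r|,|\widehat r|\le R_{\max}$ from Assumption~\ref{asmp:bounded reward} gives $|V(\pi)-V(\pi')|\le 2R_{\max}\,d_{\mathrm{TV}}(\pi,\pi')$, and the identical estimate for $\widehat V$; hence the evaluation error $\eta=\widehat V-V$ is $4R_{\max}$-Lipschitz in $d_{\mathrm{TV}}$. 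This Lipschitz property is what lets me transfer statements between a policy and its nearest neighbour in the opposite Pareto set at a cost of order $H_{\mathrm{TV}}(\Pi_p,\widehat\Pi_p)$.

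The main obstacle is establishing the coverage (ii). Here I would work through the nested construction of $\widehat\Pi_p$: starting from the Tchebycheff characterization of Proposition~\ref{prop: Cheby}, a policy $\pi\in\Pi_p$ is an $\arginf$ of $\Delta$ over some $\Lambda_\alpha^*=\arginf_\Pi M_\alpha$. I must show this $\pi$ survives both constraint layers defining $\widehat\Lambda_\alpha^*$ and $\widehat\Pi_p$. The error splits into a statistical part, controlled because $\kappa\gtrsim\sup_\Pi|\Delta_1-\widehat\Delta_1|+\sup_\Pi|\Delta_2-\widehat\Delta_2|$ dominates the gap between $M_\alpha,\Delta$ and their hatted versions (so $\pi$ still satisfies $\widehat M_\alpha(\pi)\le\widehat M_\alpha^*+\kappa$ and $\widehat\Delta(\pi)\le\widehat\Delta_\alpha^*+\kappa$), and a discretization part, controlled because $\kappa\gtrsim K^{-1}$ absorbs the error of replacing the ideal weight $\alpha$ by the nearest grid point $\alpha_k\in\Omega$ with $|\alpha-\alpha_k|\le K^{-1}$. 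I expect this step to lean on the auxiliary rate lemma (Lemma~\ref{lem:discretize error}) to match the orders of $\widehat M_\alpha$ and $\widehat\Delta$.

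Given coverage and the Lipschitz property, the lower bound is short. Since $\widehat\pi\in\widehat\Pi_p$, the Hausdorff bound yields $\pi''\in\Pi_p$ with $d_{\mathrm{TV}}(\widehat\pi,\pi'')\le H_{\mathrm{TV}}(\Pi_p,\widehat\Pi_p)$; Lipschitzness gives $V(\widehat\pi)\le V(\pi'')+2R_{\max}H_{\mathrm{TV}}(\Pi_p,\widehat\Pi_p)$, and $V(\pi'')\le V(\pi^*)$ because $\pi^*$ maximizes $V$ over $\Pi_p$. Rearranging gives $\mu(\widehat\pi)\ge -2R_{\max}H_{\mathrm{TV}}(\Pi_p,\widehat\Pi_p)$, the claimed lower bound.

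For the upper bound I would prove the two terms of the minimum separately. The first term comes from coverage alone: since $\pi^*\in\widehat\Pi_p$ and $\widehat\pi$ maximizes $\widehat V$ there, $\widehat V(\pi^*)\le\widehat V(\widehat\pi)$, so
\begin{gather*}
\mu(\widehat\pi)=-\eta(\pi^*)+\big[\widehat V(\pi^*)-\widehat V(\widehat\pi)\big]+\eta(\widehat\pi)\le \eta(\widehat\pi)-\eta(\pi^*)\le 2\sup_{\pi\in\Pi}|\eta(\pi)|.
\end{gather*}
The second, sharper term uses the Hausdorff neighbour $\widehat\pi'\in\widehat\Pi_p$ of $\pi^*$ with $d_{\mathrm{TV}}(\pi^*,\widehat\pi')\le H_{\mathrm{TV}}(\Pi_p,\widehat\Pi_p)$. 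I split $V(\pi^*)-V(\widehat\pi)=[V(\pi^*)-V(\widehat\pi')]+[V(\widehat\pi')-V(\widehat\pi)]$, bound the first bracket by $2R_{\max}H_{\mathrm{TV}}(\Pi_p,\widehat\Pi_p)$ via Lipschitzness, and for the second use $\widehat V(\widehat\pi')\le\widehat V(\widehat\pi)$ to reduce it to $\eta(\widehat\pi)-\eta(\widehat\pi')$. Since both $\widehat\pi$ and $\widehat\pi'$ lie within $H_{\mathrm{TV}}(\Pi_p,\widehat\Pi_p)$ of some policy in $\Pi_p$, the $4R_{\max}$-Lipschitzness of $\eta$ converts $|\eta(\widehat\pi)|$ and $|\eta(\widehat\pi')|$ into $\sup_{\pi\in\Pi_p}|\eta(\pi)|$ plus an $O(H_{\mathrm{TV}})$ term, yielding $\mu(\widehat\pi)\lesssim\sup_{\pi\in\Pi_p}|\eta(\pi)|+H_{\mathrm{TV}}(\Pi_p,\widehat\Pi_p)$. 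Taking the minimum of the two upper estimates completes the proof.
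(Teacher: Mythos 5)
Your proposal is correct and follows essentially the same route as the paper's proof: establish coverage $\Pi_p\subseteq\widehat\Pi_p$ from the choice of $\kappa$ via the discretization lemma, exploit the $2R_{\max}$-Lipschitzness of $V$ and $\widehat V$ in $d_{\mathrm{TV}}$, and then combine the optimality of $\widehat\pi$ over $\widehat\Pi_p$ with Hausdorff matching between the two Pareto sets to get both the lower bound and the two terms of the minimum in the upper bound. The only cosmetic difference is that you phrase the transfers through explicit nearest neighbours of $\widehat\pi$ and $\pi^*$ rather than through suprema over $\widehat\Pi_p$, which yields the same constants.
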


Theorem~\ref{them:regret} shows the regret consists of two parts:
(i) the uniform policy evaluation error over $\Pi_p$ and $\Pi$, and 
(ii) the estimation error of the Pareto fairness set, measured by the Hausdorff distance $H_{\mathrm{TV}}(\Pi_p,\widehat{\Pi}_p)$.
For (i), if the nuisance functions are estimated 
with sufficient accuracy and $\Pi$ is a VC subgraph class, then the uniform error admits a suitable bound; see, e.g., \citet{kitagawa2018should,athey2021policy}. 
We now analyze component (ii), starting with the following assumption.

\begin{assumption}[Well-separated Pareto fairness Set] \label{asmp: sep}
    Let $\Pi_\delta \equiv \left\{\pi \in \Pi: d_\mathrm{TV}(\pi,\Pi_p) > \delta \right \}$  denote the set of policies whose total variation distance from the true Pareto set  is larger than $\delta$, where $\delta$ is a positive constant. Assume that for all $  \alpha \in (0,1)$, 
    at least one of  the following conditions holds: \begin{gather*}
        (i) \inf_{\pi \in \Pi_\delta} \left|\Delta ( \pi) -\Delta_{\alpha}^* \right| > c \delta ; \quad (ii)\inf_{\pi \in \Pi_\delta} \left|M_{\alpha} ( \pi) -M_{\alpha}^* \right| > c \delta,
    \end{gather*} for some universal constant $c>0$.
\end{assumption}

Assumption~\ref{asmp: sep} imposes a \emph{separation} condition: policies that are at least \(\delta\) away from \(\Pi_p\) in total variation must incur an objective gap of order \(\delta\) under either the  criterion \(\Delta(\cdot)\) or the weighted Tchebycheff criterion \(M_\alpha(\cdot)\).
Such well-separated solution conditions are standard in establishing consistency of \(M\)- and \(Z\)-estimators; see, e.g., \citet{van2000asymptotic}.

\begin{theorem}\label{thm:pset}
     Suppose that the conditions in Theorem~\ref{them:regret} and Assumption~\ref{asmp: sep} hold, then \begin{gather*}
    H_{\mathrm{TV}}(\Pi_p,\widehat \Pi_p) \lesssim  \kappa,
\end{gather*} almost surely,
where $\kappa$ is the slack parameter in Algorithm \ref{alg}.
\end{theorem}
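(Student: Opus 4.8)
The plan is to bound the two one-sided terms of the Hausdorff distance separately and show each is $O(\kappa)$. Throughout I work on the almost-sure event on which the slack condition of Theorem~\ref{them:regret} holds; this is the key leverage, since it forces $\kappa$ to dominate simultaneously the uniform metric estimation errors $\sup_{\pi}|\widehat{\Delta}_j(\pi)-\Delta_j(\pi)|$ and the discretization scale $K^{-1}$. As a preliminary I would record the uniform closeness $\sup_\pi|\widehat{M}_\alpha(\pi)-M_\alpha(\pi)|\lesssim\kappa$ and $\sup_\pi|\widehat{\Delta}(\pi)-\Delta(\pi)|\lesssim\kappa$, together with Lipschitz continuity of $\alpha\mapsto M_\alpha(\pi)$ (uniform in $\pi$, using boundedness from Assumption~\ref{asmp:bounded reward}), so that replacing any $\alpha$ by its nearest grid point $\alpha_k$ costs only $O(K^{-1})\lesssim\kappa$; this is exactly Lemma~\ref{lem:discretize error}, which I invoke. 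A first consequence is the one-sided optimum comparisons $|\widehat{M}_{\alpha_k}^*-M_{\alpha_k}^*|\lesssim\kappa$ and $\widehat{\Delta}_{\alpha_k}^*\le\Delta_{\alpha_k}^*+O(\kappa)$, obtained by feeding the exact optimizer of the true program into the estimated one and noting its constraints are only $O(\kappa)$-perturbed.

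\textbf{Coverage direction} ($\sup_{\pi\in\Pi_p}d_{\mathrm{TV}}(\pi,\widehat{\Pi}_p)\lesssim\kappa$). Fix $\pi\in\Pi_p$. By Proposition~\ref{prop: Cheby} there is an $\alpha$ with $\pi\in\arginf_{\Lambda_\alpha^*}\Delta$, so $M_\alpha(\pi)=M_\alpha^*$ and $\Delta(\pi)=\Delta_\alpha^*$. Choosing $\alpha_k$ with $|\alpha-\alpha_k|\le K^{-1}$, the preliminary bounds give $\widehat{M}_{\alpha_k}(\pi)\le\widehat{M}_{\alpha_k}^*+O(\kappa)$ and $\widehat{\Delta}(\pi)\le\widehat{\Delta}_{\alpha_k}^*+O(\kappa)$; that is, $\pi$ satisfies both defining constraints of $\widehat{\Pi}_p$ up to an $O(\kappa)$ inflation of the slack. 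Since the hypothesis $\kappa\ge C\max[\cdots]$ with $C$ large absorbs these constants, $\pi$ is nearly feasible and hence within $O(\kappa)$ in total variation of some member of $\widehat{\Pi}_p$; taking the supremum over $\pi\in\Pi_p$ gives the claim. The only nontrivial ingredient here is the lower bound $\widehat{\Delta}_{\alpha_k}^*\ge\Delta_{\alpha_k}^*-O(\kappa)$, addressed below.

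\textbf{Outer direction} ($\sup_{\hat\pi\in\widehat{\Pi}_p}d_{\mathrm{TV}}(\hat\pi,\Pi_p)\lesssim\kappa$). Fix $\hat\pi\in\widehat{\Pi}_p$, say $\hat\pi\in\widehat{\Lambda}_{\alpha_k}^*$ with $\widehat{\Delta}(\hat\pi)\le\widehat{\Delta}_{\alpha_k}^*+\kappa$, and set $\delta=d_{\mathrm{TV}}(\hat\pi,\Pi_p)$; the goal is $\delta\lesssim\kappa$. Suppose $\delta\gg\kappa$, so $\hat\pi\in\Pi_\delta$. Membership in $\widehat{\Lambda}_{\alpha_k}^*$ plus uniform closeness gives $0\le M_{\alpha_k}(\hat\pi)-M_{\alpha_k}^*\lesssim\kappa$, so condition (ii) of Assumption~\ref{asmp: sep} at $\alpha_k$ would force $c\delta<M_{\alpha_k}(\hat\pi)-M_{\alpha_k}^*\lesssim\kappa$, a contradiction; hence (ii) fails at $\alpha_k$ and, by the ``at least one'' clause, (i) must hold, i.e. $|\Delta(\hat\pi)-\Delta_{\alpha_k}^*|>c\delta$. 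If $\Delta(\hat\pi)>\Delta_{\alpha_k}^*+c\delta$, then $\widehat{\Delta}(\hat\pi)\ge\Delta(\hat\pi)-O(\kappa)$ contradicts $\widehat{\Delta}(\hat\pi)\le\widehat{\Delta}_{\alpha_k}^*+\kappa\le\Delta_{\alpha_k}^*+O(\kappa)$. The remaining case $\Delta(\hat\pi)<\Delta_{\alpha_k}^*-c\delta$ is ruled out by the same lower bound $\widehat{\Delta}_{\alpha_k}^*\ge\Delta_{\alpha_k}^*-O(\kappa)$, which with $\widehat{\Delta}(\hat\pi)\ge\widehat{\Delta}_{\alpha_k}^*$ and uniform closeness yields $\Delta(\hat\pi)\ge\Delta_{\alpha_k}^*-O(\kappa)$. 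Taking suprema and combining both directions gives $H_{\mathrm{TV}}(\Pi_p,\widehat{\Pi}_p)\lesssim\kappa$.

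The hard part will be the lower bound $\widehat{\Delta}_{\alpha_k}^*\ge\Delta_{\alpha_k}^*-O(\kappa)$ — equivalently, ruling out a policy that nearly minimizes the Tchebycheff objective $M_{\alpha_k}$ yet attains aggregate fairness $\Delta$ strictly below the constrained optimum $\Delta_{\alpha_k}^*$. Because $\widehat{\Lambda}_{\alpha_k}^*$ relaxes the exact condition $M_{\alpha_k}(\pi)=M_{\alpha_k}^*$ to $M_{\alpha_k}(\pi)\le M_{\alpha_k}^*+O(\kappa)$, I must show this $O(\kappa)$ relaxation cannot lower the attainable $\Delta$ by more than $O(\kappa)$ — a sensitivity/stability property of the constrained program. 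I expect to prove it via Assumption~\ref{asmp: sep}: a near-$M_{\alpha_k}$-minimizer whose $\Delta$ lies far below $\Delta_{\alpha_k}^*$ must be far from $\Pi_p$ (otherwise continuity of $\Delta$ near $\Pi_p$ contradicts the gap), placing it in some $\Pi_\delta$ where the separation forces an $M$- or $\Delta$-gap of order $\delta\gg\kappa$, incompatible with near-optimality. Carrying this out uniformly over the grid $\{\alpha_k\}$ and across both the (i)- and (ii)-regimes, so that the implicit constants do not degrade with $K$, is the delicate bookkeeping I anticipate as the main technical burden.
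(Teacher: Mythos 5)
Your proposal follows essentially the same route as the paper's proof: reduce the bound to the outer direction of the Hausdorff distance, show that every $\hat\pi\in\widehat\Pi_p$ satisfies both $|M_{\alpha_k}(\hat\pi)-M^*_{\alpha_k}|\lesssim\kappa$ and $|\Delta(\hat\pi)-\Delta^*_{\alpha_k}|\lesssim\kappa$ at its associated grid point, and then invoke the separation condition of Assumption~\ref{asmp: sep}; the optimum comparisons you list as preliminaries are exactly Lemma~\ref{lem:discretize error}. One step in your coverage direction is not a valid inference as written: ``nearly feasible and hence within $O(\kappa)$ in total variation of some member of $\widehat\Pi_p$'' does not follow, since small feasibility gaps in the scalarized objectives do not by themselves control total-variation distance to the feasible set. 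The paper sidesteps this entirely by noting that, once $C$ is large enough to absorb the estimation and discretization errors into the $+\kappa$ slack, each true Pareto policy satisfies the estimated constraints \emph{exactly}, so $\Pi_p\subseteq\widehat\Pi_p$ and that one-sided term is identically zero; your own premises already yield this stronger and simpler conclusion, so the repair is immediate. Finally, the ``hard part'' you single out---the sensitivity bound $\widehat\Delta^*_{\alpha_k}\ge\Delta^*_{\alpha_k}-O(\kappa)$ comparing infima over the true and perturbed constraint sets---is also the least-detailed point in the paper, which asserts it as the second display of Lemma~\ref{lem:discretize error} with the proof ``bounded analogously \ldots omitted,'' so your instinct about where the real bookkeeping lies is correct.
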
 \vspace{-0.8cm}

Theorem~\ref{thm:pset} shows that, up to a constant factor, the Hausdorff distance between the estimated and true Pareto fairness  sets is bounded by \(\kappa\). Recall that \(\kappa\) is determined by two terms: the estimation error in the fairness metrics and the discretization error \(K^{-1}\). If \(K^{-1}\) is chosen to be asymptotically negligible relative to the estimation error, then the resulting bound is driven by the estimation error.

Building on the general results above, the following subsection presents an in-depth analysis of equal opportunity and counterfactual fairness.

\subsection{Equal Opportunity and Counterfactual Fairness Applications} \label{sec:app of eo and cf}

In this subsection, we demonstrate how our theoretical results apply to the notions of equal opportunity and counterfactual fairness.
We begin by imposing some technical conditions.

\begin{assumption}[Finite VC Subgraph Dimension] \label{asmp: vc}
The policy class $\Pi$ has finite VC-subgraph dimension; that is, $\mbox{VC}(\Pi)$ is bounded.
\end{assumption}

Assumption~\ref{asmp: vc} is commonly imposed in policy learning literature, see, e.g., \citet{kitagawa2018should,athey2021policy}. 
Common examples of classes with finite VC-subgraph dimension include linear decision rules, decision trees, and sigmoid (logistic) score classes.


\begin{assumption}[Nuisance's Regularity Conditions] \label{asmp:nuisance}
There exists $ \gamma_1,\; \gamma_2 \in (0,1]$ such that the estimated reward function satisfies
    \begin{align*}
        &\max_{a,s} \; \frac{1}{n} \sum_{i=1}^n \left[\widehat r(s,X_i,a)-r(s,X_i,a)\right]^2=O_p(n^{-\gamma_1}),\\
        \mbox{ and }&\max_{a,s} \; \frac{1}{n} \sum_{i=1}^n \left[\widehat f(s,X_i,a)-f(s,X_i,a)\right]^2=O_p(n^{-\gamma_2}).
    \end{align*} 
\end{assumption}

Assumption \ref{asmp:nuisance} is needed because the outcome fairness metric estimator $\widehat \Delta_2(\pi)$ and the value estimator $\widehat V(\pi)$ are constructed from $\widehat r$ and $\widehat r$, respectively. The exponents $\gamma_1$ and $\gamma_2$ depend on the methods used to estimate $r$ and $f$. If $r$ and $f$ are estimated using correctly specified parametric models, one typically has $\gamma_1 = \gamma_2 = 1$. Alternatively,
flexible nonparametric methods such as sieve estimators \citep{huang1998projection} can be used. 
Under standard smoothness and complexity conditions, sieve estimators can achieve the optimal mean-squared rate $O_p(n^{-\frac{2\beta}{2\beta + d}})$, where $\beta$ is the smoothness parameter of the reward function, and $d$ is the dimension of $X$. 
Sieve methods are widely used in the policy evaluation and learning literature; see, e.g., \citet{shi2020statistical, chen2022well, bian2025off}.


For the counterfactual fairness notion, we 
additionally impose
the following Assumption \ref{asmp: additive error} to ensure that the counterfactual
covariates \(X_{s,x}(s')\) are identifiable from the observed data. Similar assumptions are used in \citet{chen2023learning} and
\citet{wang2025counterfactually} to study counterfactual fairness.

\begin{assumption} \label{asmp: additive error}
For each $j\in\{1,\ldots,d\}$, the covariate satisfies $X_{j}=\theta_j(S)+\varepsilon_{j}$, where $\theta_j(\cdot)$ is the regression function for the $j$th covariate, and the error term $\varepsilon_{j}$ satisfies $\E(\varepsilon_{j} |S)=0$.
\end{assumption} 

Under Assumption~\ref{asmp: additive error}, for an individual with
observed $(S,X)=(s,x)$, the counterfactual covariates under $S=s'$ are
\[
X_{s,x}(s')
=
\theta(s') + x - \theta(s), \mbox{ with }
\theta(\cdot)
=
\big(\theta_1(\cdot),\ldots,\theta_d(\cdot)\big)^\top .
\]

 We next impose a Lipschitz-type condition on the estimated reward function
and the policy class. This ensures that small errors in estimating the
counterfactual covariates lead to correspondingly small perturbations in
the quantities evaluated at $(s',x')$.

\begin{assumption}[Lipschitz Condition] \label{asmp:lip}

There exists a constant $m>0$ such that
\begin{align*}
   & (i)  \;\; |\widehat f(s,x,a)-\widehat f(s,x+t,a)| \leq m \norm{t}_2, \mbox{ for any } s, a, \mbox{ and } x; \\
 & (ii)  \sup_{\substack{
  \pi \in \Pi \\
 t: \, \norm{X - t}_2 \leq \delta
}} \E \left[\pi(S,X)- \pi(S,t)\right]^2 \leq m \delta. 
\end{align*}
\end{assumption}

Assumption \ref{asmp:lip}(i) is standard and has also been imposed by \citet{wang2025counterfactually} for counterfactual fairness policy learning. 
Assumption~\ref{asmp:lip}(ii) requires only Lipschitz continuity in expected
mean square, which is mild enough to accommodate deterministic policies
(for which pointwise Lipschitz continuity typically fails). A closely
related condition is used by \citet{shi2021concordance} when analyzing
linear deterministic decision rules; see Assumption~A5(ii) therein.

We now present the main regret bounds for equal opportunity and counterfactual fairness.

\begin{theorem}[Equal Opportunity] \label{thm:eoapp}
Suppose Assumptions \ref{asmp:bounded reward}, \ref{asmp: sep}, \ref{asmp: vc},  and \ref{asmp:nuisance} hold. In addition, set the slack parameter $\kappa$ to $c_0 \max(n^{-\frac{\gamma_2}{2}},\sqrt{\log n/n})$ for some constant $c_0>0$, and that $K^{-1} = O(\sqrt{\log n/n})$. Then under the \textbf{\textit{equal opportunity}} notion of fairness, 
\begin{gather*}
|\mu(\widehat{\pi})|=O_p\big(\sqrt{\mbox{VC}(\Pi)/n}+\sqrt{\log n/n}\big)+O_p\big(n^{-\frac{\gamma_1}{2}}+n^{-\frac{\gamma_2}{2}}\big).
   \end{gather*}
\end{theorem}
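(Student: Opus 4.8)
The plan is to reduce the theorem to the two general results already established, Theorem~\ref{them:regret} and Theorem~\ref{thm:pset}, and then specialize the constituent error terms to the equal opportunity metrics. Theorem~\ref{them:regret} bounds $|\mu(\widehat\pi)|$ by the sum of the uniform policy evaluation error $\sup_{\pi\in\Pi}|\eta(\pi)|$ and the Hausdorff distance $H_{\mathrm{TV}}(\Pi_p,\widehat\Pi_p)$, while Theorem~\ref{thm:pset} controls the latter by the slack parameter $\kappa$ (using the separation condition of Assumption~\ref{asmp: sep}, which is among the present hypotheses). Hence it suffices to (i) verify that the prescribed $\kappa$ meets the requirement of Theorem~\ref{them:regret}, (ii) bound $\sup_{\pi\in\Pi}|\eta(\pi)|$, and (iii) combine the two rates.

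For step (i), I would bound the two fairness-metric estimation errors under equal opportunity. The action metric $\widehat\Delta_1-\Delta_1$ is a pure empirical process: the class $\{x\mapsto[\pi(1,x)-\pi(0,x)]^2:\pi\in\Pi\}$ is uniformly bounded and, since squaring and differencing preserve polynomial covering numbers over a VC-subgraph class (Assumption~\ref{asmp: vc}), a maximal inequality gives $\sup_{\pi}|\widehat\Delta_1-\Delta_1|=O_p(\sqrt{\mbox{VC}(\Pi)/n})$. The outcome metric is quadratic in $f^\pi$ and additionally depends on the nuisance $\widehat f$, so I would split $\widehat\Delta_2-\Delta_2$ into an empirical-process part (true $f$, giving $O_p(\sqrt{\mbox{VC}(\Pi)/n})$) and a plug-in part. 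For the plug-in part I would factor via $a^2-b^2=(a-b)(a+b)$, control $|a+b|$ by $R_{\max}$ using Assumption~\ref{asmp:bounded reward}, and bound $|a-b|$ by $\sum_{s,a}|\widehat f-f|(s,\cdot,a)$; Cauchy--Schwarz together with Assumption~\ref{asmp:nuisance} then yields the uniform-in-$\pi$ rate $O_p(n^{-\gamma_2/2})$. Since $\mbox{VC}(\Pi)$ is bounded, $\sqrt{\mbox{VC}(\Pi)/n}=o(\sqrt{\log n/n})$, so the total metric error is $O_p(n^{-\gamma_2/2}+\sqrt{\log n/n})$, and the choice $\kappa=c_0\max(n^{-\gamma_2/2},\sqrt{\log n/n})$ with $c_0$ large dominates both this quantity and $K^{-1}=O(\sqrt{\log n/n})$, as the hypothesis of Theorem~\ref{them:regret} demands.

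For step (ii), I would decompose $\eta(\pi)=\widehat V(\pi)-V(\pi)$ into a nuisance term $\frac1n\sum_i\sum_a[\widehat r-r](S_i,X_i,a)\pi(a|S_i,X_i)$ and an empirical-process term formed with the true reward. The first is bounded uniformly in $\pi$ by $\sum_a(\frac1n\sum_i[\widehat r-r]^2(S_i,X_i,a))^{1/2}=O_p(n^{-\gamma_1/2})$ via Cauchy--Schwarz, Assumption~\ref{asmp:nuisance}, and $\pi\le1$. The second is a bounded empirical process over the VC-subgraph class $\Pi$, giving $O_p(\sqrt{\mbox{VC}(\Pi)/n})$. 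Thus $\sup_{\pi}|\eta(\pi)|=O_p(n^{-\gamma_1/2}+\sqrt{\mbox{VC}(\Pi)/n})$.

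Combining, the two-sided bound of Theorem~\ref{them:regret} yields $|\mu(\widehat\pi)|\lesssim\sup_{\pi\in\Pi}|\eta(\pi)|+H_{\mathrm{TV}}(\Pi_p,\widehat\Pi_p)$, and Theorem~\ref{thm:pset} gives $H_{\mathrm{TV}}(\Pi_p,\widehat\Pi_p)\lesssim\kappa$; substituting the rates from (i) and (ii) produces exactly $O_p(\sqrt{\mbox{VC}(\Pi)/n}+\sqrt{\log n/n})+O_p(n^{-\gamma_1/2}+n^{-\gamma_2/2})$. The main obstacle is the uniform-in-$\pi$ control of the quadratic outcome metric $\Delta_2$: one must simultaneously handle the empirical process over the nonconvex VC class and the plug-in error from $\widehat f$, and the $a^2-b^2$ factorization is precisely what lets boundedness convert the $L^2$ nuisance rate of Assumption~\ref{asmp:nuisance} into a uniform $L^1$-type bound on the metric. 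Verifying that products, squares, and differences of members of a VC-subgraph class retain controlled covering numbers is the technical crux underlying every empirical-process step.
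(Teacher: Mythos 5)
Your proposal is correct and follows essentially the same route as the paper's proof: reduce to Theorems~\ref{them:regret} and \ref{thm:pset}, bound $\sup_\pi|\eta(\pi)|$ by splitting off the nuisance plug-in error via Cauchy--Schwarz and Assumption~\ref{asmp:nuisance}, and control the two fairness-metric errors by combining an empirical-process bound over the (square/product-preserved) VC class with a plug-in term of order $n^{-\gamma_2/2}$; your $a^2-b^2=(a-b)(a+b)$ factorization is just a compact form of the paper's six-term expansion in Lemma~\ref{lemma:eo outcome fair}. The only cosmetic difference is that the paper makes the $\sqrt{\log n/n}$ contribution explicit via a bounded-difference (McDiarmid) step in each lemma, whereas you absorb it by noting that $\sqrt{\mbox{VC}(\Pi)/n}$ is dominated; the final rate is the same.
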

\vspace{-0.5cm}
\begin{theorem}[Counterfactual] \label{thm:cfapp}
Suppose the assumptions of Theorem \ref{thm:eoapp} hold. In addition, assume Assumptions \ref{asmp: additive error} and \ref{asmp:lip}. Then, under the \textbf{\textit{counterfactual}} notion of fairness, we have \begin{gather*}
|\mu(\widehat{\pi})|=O_p\big(\sqrt{\mbox{VC}(\Pi)/n}+\sqrt{\log n/n}+ \sqrt{d/n}\big)+O_p\big(n^{-\frac{\gamma_1}{2}}+n^{-\frac{\gamma_2}{2}}\big).
   \end{gather*}
\end{theorem}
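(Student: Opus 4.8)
The plan is to reduce Theorem~\ref{thm:cfapp} to the general regret decomposition already in hand and then re-derive the rates of its two components under the counterfactual metrics. Theorem~\ref{them:regret} gives the two-sided bound $|\mu(\widehat\pi)| \lesssim \sup_{\pi\in\Pi}|\eta(\pi)| + H_{\mathrm{TV}}(\Pi_p,\widehat\Pi_p)$, and Theorem~\ref{thm:pset} yields $H_{\mathrm{TV}}(\Pi_p,\widehat\Pi_p)\lesssim\kappa$. So it suffices to (a) bound the uniform policy-evaluation error $\sup_\pi|\eta(\pi)|$, and (b) verify that, for the counterfactual metrics, the slack $\kappa = c_0\max(n^{-\gamma_2/2},\sqrt{\log n/n})$ still dominates the uniform fairness-metric error $\sup_\pi|\Delta_1-\widehat\Delta_1| + \sup_\pi|\Delta_2-\widehat\Delta_2|$ required by Theorem~\ref{them:regret}, now up to an additional $\sqrt{d/n}$ term coming from estimating the counterfactual covariates, so that the hypotheses of Theorems~\ref{them:regret} and~\ref{thm:pset} remain in force.

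Step (a) is identical to the equal-opportunity case (Theorem~\ref{thm:eoapp}), because the value functional $V(\pi)=\E^\pi(R^{(1)})$ and its estimator $\widehat V$ do not involve counterfactual covariates. I would split $\eta(\pi)=[\widehat V(\pi)-\widetilde V(\pi)]+[\widetilde V(\pi)-V(\pi)]$, where $\widetilde V$ uses the true reward $r$ in place of $\widehat r$. The first piece is controlled by Cauchy--Schwarz and Assumption~\ref{asmp:nuisance}, giving $O_p(n^{-\gamma_1/2})$; the second is a standard empirical-process term over the VC-subgraph class $\Pi$ (Assumption~\ref{asmp: vc}), giving $O_p(\sqrt{\mathrm{VC}(\Pi)/n})$. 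Hence $\sup_\pi|\eta(\pi)|=O_p(\sqrt{\mathrm{VC}(\Pi)/n}+n^{-\gamma_1/2})$.

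Step (b) is where counterfactual fairness enters and is the crux. Under Assumption~\ref{asmp: additive error}, $X'=X_{S,X}(S')=\theta(S')+X-\theta(S)$; since $S$ is binary, $\theta(s)=\E(X\mid S=s)$ is estimable by within-group sample means with $\|\widehat\theta(s)-\theta(s)\|_2 = O_p(\sqrt{d/n})$, so the plug-in covariate satisfies $\|\widehat X_i'-X_i'\|_2 = O_p(\sqrt{d/n})=:\delta_n$. I would then decompose $|\widehat\Delta_j(\pi)-\Delta_j(\pi)|$ into (i) the error from replacing $X'$ by $\widehat X'$, (ii) the nuisance error from replacing $f$ by $\widehat f$ (only for $\Delta_2$), and (iii) the empirical-process error from replacing the population expectation by its sample average. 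Term (iii) is handled as in Step (a) and Theorem~\ref{thm:eoapp}, giving $O_p(\sqrt{\mathrm{VC}(\Pi)/n}+\sqrt{\log n/n})$; term (ii) is $O_p(n^{-\gamma_2/2})$ by Assumption~\ref{asmp:nuisance}. For term (i) I would propagate $\delta_n$ through the Lipschitz conditions: for $\Delta_2$, Assumption~\ref{asmp:lip}(i) bounds $|\widehat f^\pi(S',\widehat X')-\widehat f^\pi(S',X')|\le m\delta_n$ pointwise, and the identity $a^2-b^2=(a-b)(a+b)$ with bounded $\widehat f$ yields an $O_p(\delta_n)=O_p(\sqrt{d/n})$ contribution; for $\Delta_1$, since $\pi$ may be deterministic, pointwise Lipschitz continuity fails and I would instead use the mean-square bound Assumption~\ref{asmp:lip}(ii) to control $\tfrac1n\sum_i[\pi(S_i',X_i')-\pi(S_i',\widehat X_i')]^2\lesssim m\delta_n$.

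The main obstacle is precisely term (i) for the action metric $\Delta_1$: the metrics are squared differences, yet $\Delta_1$ must accommodate deterministic rules for which only the weak mean-square bound is available, and a naive Cauchy--Schwarz split of the difference of squared-difference averages would degrade $\delta_n$ to $\delta_n^{1/2}$, giving a slower rate. The delicate point is to observe that for a deterministic rule the squared difference $[\pi(S',X')-\pi(S',\widehat X')]^2$ equals the absolute difference (an indicator of a decision flip), so Assumption~\ref{asmp:lip}(ii) simultaneously controls the $L_1$ discrepancy at the same rate and the difference of the two averages incurs no Cauchy--Schwarz penalty, again yielding $O_p(\sqrt{d/n})$; one then checks that the unchanged slack $\kappa$ still dominates the combined metric error. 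Assembling (a) and (b) and applying $H_{\mathrm{TV}}\lesssim\kappa$ gives $|\mu(\widehat\pi)|=O_p(\sqrt{\mathrm{VC}(\Pi)/n}+\sqrt{\log n/n}+\sqrt{d/n})+O_p(n^{-\gamma_1/2}+n^{-\gamma_2/2})$, as claimed.
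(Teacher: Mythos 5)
Your overall strategy coincides with the paper's: reduce to Theorems~\ref{them:regret} and~\ref{thm:pset}, reuse the equal-opportunity bound for $\sup_{\pi}|\eta(\pi)|$, and show that the counterfactual fairness-metric errors pick up only an extra $\sqrt{d/n}$ from estimating $\theta(\cdot)$ by within-group means, propagated through Assumption~\ref{asmp:lip}. Your observation that the squared action discrepancy equals the absolute discrepancy for deterministic rules, so that Assumption~\ref{asmp:lip}(ii) controls the perturbation without a Cauchy--Schwarz loss, is exactly the point the paper's Lemma~\ref{lemma:cf action fair} relies on (tersely) when it asserts $I_2=O_p(\sqrt{d/n})$.

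There is, however, one step in your term (i) for $\Delta_1$ that does not go through as written: you bound the \emph{empirical} average $\tfrac1n\sum_i[\pi(S_i',X_i')-\pi(S_i',\widehat X_i')]^2$ by invoking Assumption~\ref{asmp:lip}(ii), but that assumption controls only the \emph{population} expectation $\E[\pi(S,X)-\pi(S,t)]^2$, and $\widehat X_i'$ is a data-dependent perturbation, so the summands are not an i.i.d.\ sample to which the assumption applies. The paper closes this gap by absorbing the perturbation into an enlarged function class $\Pi_t=\{\pi(a\mid s,x+t):\pi\in\Pi,\ \|t\|_2\le\delta\}$, bounding its covering number by $C(1/\varepsilon)^{\mathrm{VC}(\Pi)}(\delta/\varepsilon)^d$, and running the symmetrization/Dudley/bounded-difference argument uniformly over $(\pi,t)$; this is also where the $\sqrt{d/n}$ term enters the empirical-process rate, not only the bias. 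Your decomposition into a plug-in error plus an i.i.d.\ empirical process is salvageable, but only after adding this uniform-in-$t$ convergence step (or an equivalent device) to transfer the population Lipschitz bound to the empirical measure; without it the bound on term (i) for the action metric is unjustified. The remaining components (value error, nuisance error via Assumption~\ref{asmp:nuisance}, and the $\Delta_2$ perturbation via the pointwise Lipschitz condition in Assumption~\ref{asmp:lip}(i)) match the paper's Lemmas~\ref{lemma:value}, \ref{lemma:cf action fair}, and \ref{lemma:cf outcome fair}.
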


Theorems \ref{thm:eoapp} and \ref{thm:cfapp} establish the regret bound of the estimated policy under the notions of equal opportunity fairness and counterfactual fairness, respectively. Specifically, the terms $O_p(\sqrt{\text{VC}(\Pi)/n})$ and $O_p(\sqrt{\log n/n})$ arise from the empirical process argument and concentration inequality, whereas the terms $O_p(n^{-\frac{\gamma_1}{2}})$ and $O_p(n^{-\frac{\gamma_2}{2}})$ reflects the estimation error in the outcome. 
 Relative to equal opportunity,
the counterfactual fairness bound includes an additional $\sqrt{d/n}$ term, stemming from the extra step of estimating counterfactual covariates.

\begin{remark}
In Theorems \ref{thm:eoapp} and \ref{thm:cfapp}, the slack parameter $\kappa$ involves the term $n^{-\frac{\gamma_2}{2}}$. In practice, if a parametric model is used, one can typically choose $\gamma_2 = 1$. For nonparametric regression methods such as the sieve estimator, $\gamma_2$ is given by $\frac{2\beta}{2\beta + d}$, as noted earlier. 
Since $\beta$  is
typically unknown, one may infer it from the optimal complexity choice.
For sieve estimation, the optimal number of basis functions satisfies
$
L^{\mathrm{opt}} \;=\; n^{\frac{d}{d+2\beta}} .
$
Thus, one can first select $L^{\text{opt}}$ (e.g., via cross-validation), and then
back out $\beta$ and hence $\gamma_2$ accordingly. The same logic
extends to other nonparametric methods (e.g., kernel or local polynomial
estimators), whose optimal bandwidth choices also depend on $\beta$; see, for example, \citet{Tsybakov2009}.
\end{remark}

\section{Numerical Studies} \label{sec:sims}

In this section, we demonstrate the effectiveness of our proposed framework through extensive simulation studies. 
We begin by outlining the data generating process in detail.

\textbf{\textit{Data generating procedure}}. We consider the following data-generating process, which is similar to that in Section F.2.1 of \citet{wang2025counterfactually}. The sensitive variable is given by $S_i \sim \mbox{Bernoulli}(0.35)$, and we consider the case in which the covariates are two-dimensional and are generated by $X_{1i}=0.45S_i+\varepsilon_{1i}$, and $X_{2i}=0.85S_i+\varepsilon_{2i}$, where $\{\varepsilon_{1i}\}_i$ and $\{\varepsilon_{2i}\}_i$ are independent error terms drawn from a standard Gaussian distribution. The action is sampled using $A_i \sim \mbox{Bernoulli}(\expit(-0.5+S_i+X_{1i}-X_{2i}))$, where  
$\expit(t)=\exp(t)/(1+\exp(t))$. Finally, we consider the case where $R=R^{(1)} = R^{(2)}$, such that $R$ is one-dimensional. In this setting, the reward is generated according to: $R_i=1.5S_i+0.9X_{1i}+0.8X_{2i}+A_i(1-0.8S_i-0.5X_{1i}-0.5X_{2i})+\epsilon_i, $  where $\{\epsilon_i\}_i$ are independent error terms drawn from a standard Gaussian distribution. Under this setting, the global optimal decision rule is a linear decision rule $\mathds 1(1-0.8S_i-0.5X_{1i}-0.5X_{2i}>0)$.

\textbf{\textit{Competing methods}}.
We compare our proposed method with the standard policy learning approach, and the approach for finding Pareto policies in \citet{viviano2023fair}. Specifically, the standard policy learning approach estimates the optimal policy as \begin{align*}
    \argmax_{\pi \in \Pi} \; \frac{1}{n} \sum_{i=1}^n \left[\widehat r(S_i,X_i,1)-\widehat r(S_i,X_i,0) \right]\pi(S_i,X_i).
\end{align*}  
As for the approach outlined in \citet{viviano2023fair}, it focuses solely on single fairness policy learning tasks, either outcome fairness or action fairness. To build on this, we first consider the original method based on the action fairness metric (VB1) and the outcome fairness metric (VB2). Moreover, we consider an adaptive version (ADVB), which has been tailored to accommodate the double fairness setting. Specifically, the ADVB method is adapted to our proposed double fairness setting  utilizing the linear scalarization method to derive the resulting policy. As in our proposed procedure, it involves two steps: first, the estimated Pareto set $\widehat \Pi_{ADVB}$ is obtained using linear scalarization, i.e., \begin{gather*}
    \left \{ \pi \in \Pi: \alpha_k\widehat\Delta_1(\pi)+(1-\alpha_k)\widehat\Delta_2(\pi) \leq \min_{\pi}\left(\alpha_k\widehat\Delta_1(\pi)+(1-\alpha_k)\widehat\Delta_2(\pi)\right) +\kappa, \forall \alpha_k \in \Omega \right \}.
\end{gather*} Subsequently, the estimated policy is obtained by maximizing the estimated value function within $\widehat \Pi_{ADVB}$:
$\argmax_{\pi \in \widehat \Pi_{ADVB}} \widehat V(\pi).$ 
Throughout this section, we apply a linear decision rule across all methods and evaluate their performance on a test set of 5,000 samples using both equal opportunity fairness and counterfactual fairness metrics. In addition, the value estimator is computed using a correctly specified outcome regression approach for all methods.

\begin{figure}[t] 
    \centering
    \includegraphics[width=\textwidth, height=6cm]{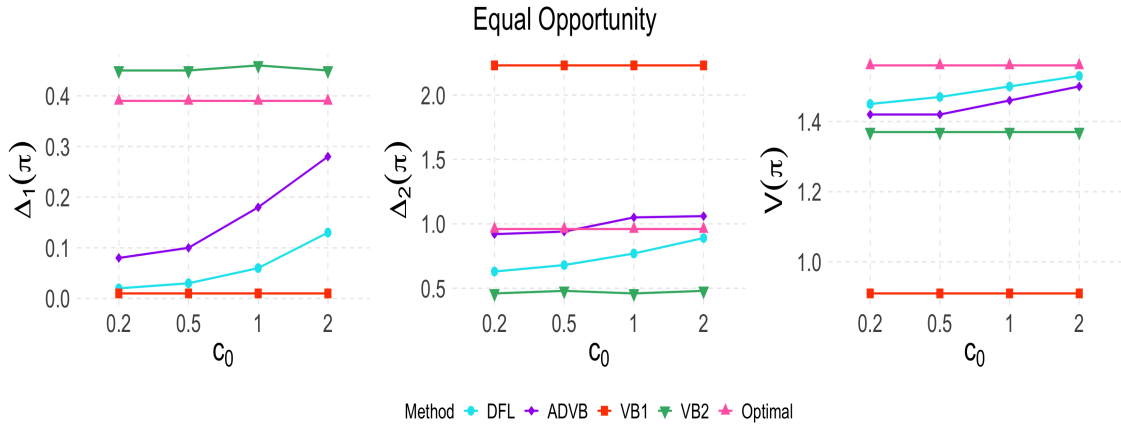}
    \caption{Empirical action fairness metric $\Delta_1(\pi)$ (lower is better), outcome fairness metric $\Delta_2(\pi)$ (lower is better), and value function $V(\pi)$ (higher is better), based on the \textbf{\textit{equal opportunity}} fairness notion. Results are obtained from a testing set of size 5,000, comparing our proposed DFL method with competing approaches. The results are averaged over 100 replications, each with $K=10$ and a training set of size 200.} 
    \label{fig:eo}
\end{figure}
In the first part of our experiments, we investigate the effect of the slack parameter $\kappa$. According to Theorems \ref{thm:eoapp} and \ref{thm:cfapp},  it is sufficient to set $\kappa = c_0 \sqrt{\log n / n}$ for some constant $c_0$, given that the reward function is estimated using a correctly specified model.
In the following, we consider four scenarios corresponding to different values of $c_0$: $0.2, 0.5, 1, $ and $2$, respectively. 

The simulation results for both equal opportunity and counterfactual fairness are summarized in Figures \ref{fig:eo} and \ref{fig:cf}, respectively. As evidenced by the results, our proposed method is not sensitive to the choice of $\kappa$. Moreover, across all scenarios, the proposed DFL method demonstrates superior performance. For instance, under the equal opportunity fairness notion, the DFL policy achieves a value function comparable to the unconstrained optimal policy while significantly enhancing fairness in both actions and value. Notably, our method achieves near-zero action fairness metrics for $c_0=0.2$; and its outcome fairness metrics are comparable to those achieved by the VB2 method, which focuses solely on minimizing outcome fairness. This underscores the DFL method's ability to effectively balance all three objectives—without compromising overall welfare or revenue.

\begin{figure}[t] 
    \centering
    \includegraphics[width=\textwidth, height=6cm]{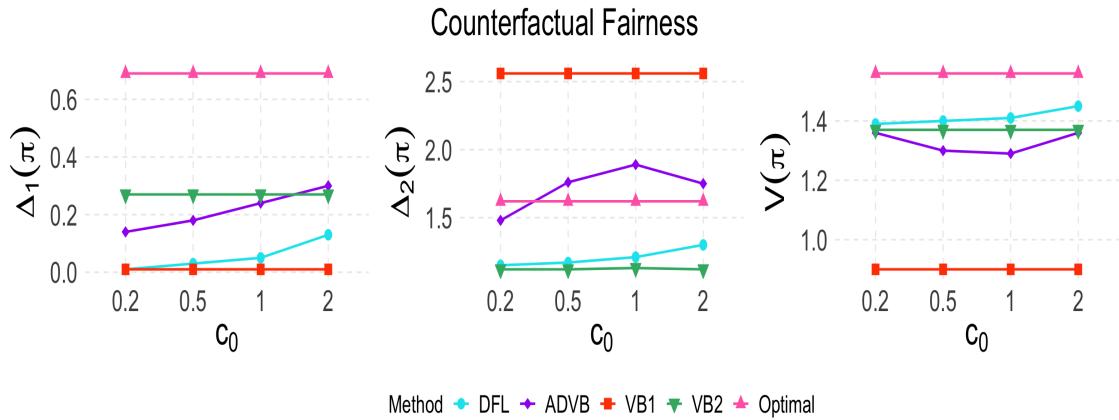}
    \caption{Empirical action fairness metric $\Delta_1(\pi)$ (lower is better), outcome fairness metric $\Delta_2(\pi)$ (lower is better),  and value  function $V(\pi)$ (higher is better) based on the \textbf{\textit{counterfactual}} fairness notion, obtained from a testing set of size 5,000, comparing our proposed DFL method with competing approaches. The results are averaged over 100 replications, each with $K=10$ and a training set of size 200.} 
    \label{fig:cf}
\end{figure}

\vspace{-.5cm}

\begin{figure}[t] 
    \centering
\includegraphics[width=\textwidth,height=5.5cm]{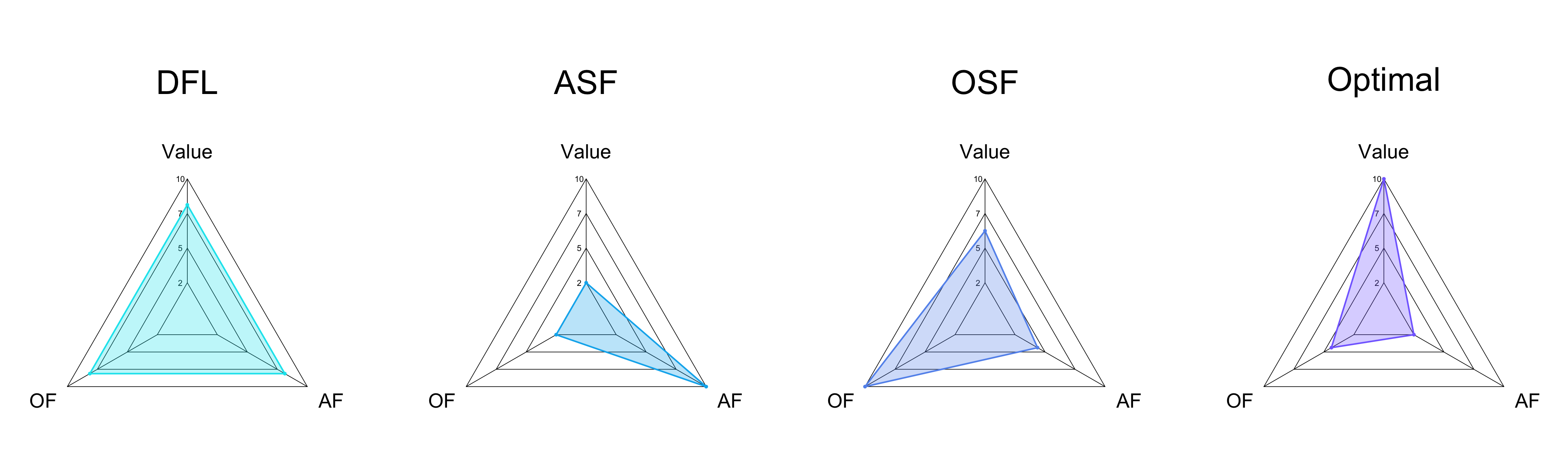}
    \caption{ Performance, based on numerical studies, is visualized using a radar chart. A higher score along a particular axis indicates better performance in that aspect. DFL refers to our proposed double fairness learning method. ASF represents the action single fair policy framework (VB1), and OSF denotes the outcome single fair policy framework (VB2). } 
    \label{fig:4radar}
\end{figure}

In contrast, the ADVB method results in a lower value function compared to DFL and exhibits higher action and outcome fairness metrics. Similarly, while the VB1 and VB2 methods prioritize single fairness criteria (action fairness for VB1 and outcome fairness for VB2), they perform poorly in terms of value function (both VB1 and VB2), outcome fairness (VB1), and action fairness (VB2). Overall, only the proposed DFL method achieves a flexible and effective balance among the three objectives, demonstrating both practical utility and adaptability. Figure \ref{fig:4radar} summarizes these results using a radar chart.

\begin{figure}[t] 
    \centering
    \includegraphics[width=\textwidth,height=8.5cm]{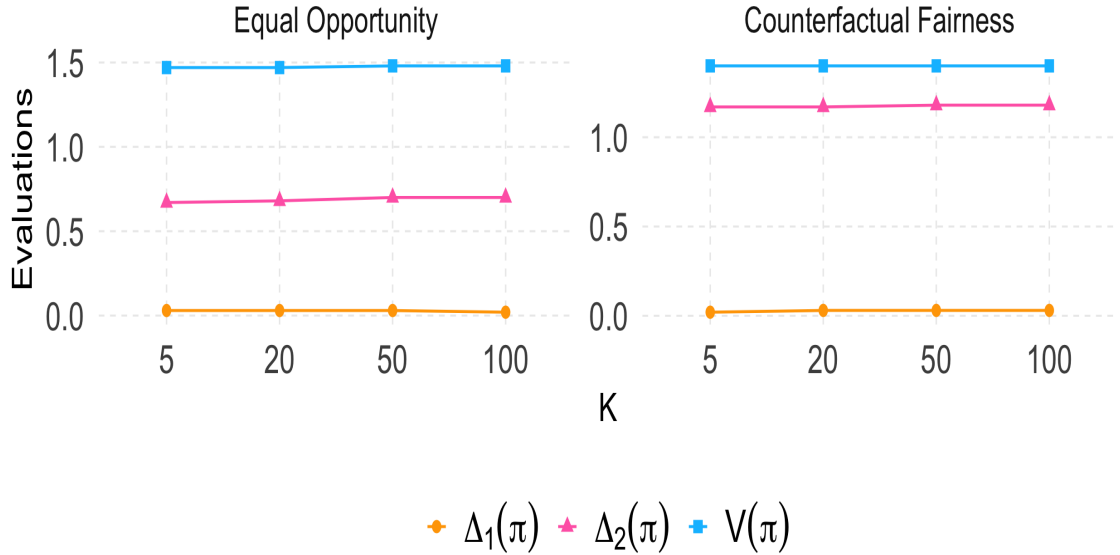}
    \caption{Empirical action fairness metric $\Delta_1(\pi)$ (lower is better), outcome fairness metric $\Delta_2(\pi)$ (lower is better),  and value  function $V(\pi)$ (higher is better), based on two fairness notions, evaluated from a testing set of size 5,000, using our proposed DFL method. The results are averaged over 100 replications, each with $\kappa=0.5 \sqrt{\log n /n}$ and a training set of size 200.} 
    \label{fig:K}
\end{figure}
In the previous results, the discretization size  $K$ was fixed at 10. In the second part of our experiments, we investigate the impact of different $K$ values on performance. For brevity, we only present our proposed DFL method, as the performance of all methods remains largely unaffected by changes in $K$. As shown in Figure \ref{fig:K}, varying the value of 
$K$ leads to negligible changes in performance.












\section{Real Data Analysis} \label{sec:real data}

\subsection{Motor Insurance Liability Claims Data} \label{sec:insurance}

In this section, we illustrate our proposed method to a Belgian motor third-party liability dataset from the CASdatasets \citep{dutang2019casdatasets}. 
Similar to \citet{xin2024antidiscrimination}, we use gender as the sensitive variable. In the offline dataset, we classify customers as low-risk or high-risk based on the bonus-malus (BM) scale, following common practice in the insurance literature, where drivers are often categorized as good or bad based on their BM score \citep{frangos2001design}. The BM scale ranges from 0 to 22, with higher values indicating greater risk. We define customers with BM greater than five as high-risk and the rest as low-risk. The dataset consists of 18,276 observations, with 28\% being female and 35\% classified as high-risk. The non-sensitive covariates include the number of claims, policyholder age, coverage plan, if the vehicle is part of a fleet, and the coverage duration. Since the dataset does not contain premium information, we generate the pure premium amount using the frequency-severity approach, as outlined by \citet{baumann2023fairness}. This approach sets pure premiums based on the predicted risk. To better reflect real-world pricing, we further adjust the final premium based on the coverage plan and gender. See Section \ref{app:data}  in Supplemental Materials for more details. Accordingly, the benefit or welfare received by the customer is the negative of the reward. Hence, it is essential to consider both action and outcome fairness in this setting.

\begin{table}[t] 
\caption{The estimated value function, the estimated action fairness metric, and the estimated outcome fairness metric using our proposed DFL and other competing methods.} 
\label{tab:ins}
\centering
\begin{tabular}{lrrrrrr} 
\toprule
& \multicolumn{3}{c}{Equal Opportunity} & \multicolumn{3}{c}{Counterfactual} \\
\cmidrule(l{4pt}r{4pt}){2-4} \cmidrule(l{4pt}r{4pt}){5-7} 
Method & $\widehat{V}(\widehat{\pi})$  & $\widehat{\Delta}_1(\widehat{\pi})$ & $\widehat{\Delta}_2(\widehat{\pi})$ & $\widehat{V}(\widehat{\pi})$  & $\widehat{\Delta}_1(\widehat{\pi})$ & $\widehat{\Delta}_2(\widehat{\pi})$ \\
\midrule
DFL & -10.16 &  0.03 &  0.18 & -10.18   &0.02  & 0.12 \\
Optimal & -9.38 & 0.33 & 1.14 & -9.38 & 0.38 & 1.10 \\
VB1 & -10.22  & 0.00 &  0.40 & -10.40 & 0.00 & 0.49 \\
VB2 &  -10.34  & 0.05  & 0.06 & -10.23 &  0.04 &  0.11 \\
ADVB & -10.17 &  0.00 &  0.48 & -10.40 &  0.00 &  0.49 \\
\bottomrule
\end{tabular}
\end{table}

We illustrate our method for both  equal opportunity fairness, and the counterfactual fairness. Throughout, we set $K = 10$ and $\kappa = 2 \sqrt{\log n / n}$. Table \ref{tab:ins} provides a summary of results. Similar to findings from previous numerical studies, our proposed policy yields a estimated value function similar to the estimated optimal policy, while being much fairer than the optimal policy in terms of both action fairness and outcome fairness. This demonstrates that the newly proposed approach effectively balances the three objectives without compromising the overall revenue.

\subsection{Entrepreneurship Training Data} \label{sec:entre}

In this subsection, we illustrate our proposed method to the entrepreneurship training data for North America undergraduate students from \citet{lyons2017impact}. This study examined whether entrepreneurship programs have varying impacts on certain minority groups, such as females or non-Caucasians. Focusing solely on the outcome fairness criterion, they found that the program's short-term effect on minorities is minimal. The goal in our study is to identify the  policy that maximizes the expected subsequent entrepreneurial activity for the population while keeping the action and value unfairness  as low as possible.

In this context, the action represents the decision of whether to accept or reject admission into the program; while the reward  is defined as subsequent entrepreneurial activity, specifically whether finalists work in the startup sector during the period immediately following the program. 
We focus on the sensitive variable gender. The covariates in this dataset include the average interview score, prior entrepreneurial activity, and school rank. The dataset includes a total of 335 subjects, with 53.43\% admitted and the remaining not admitted. Additionally, 26.57\% of the subjects are female; among those who were admitted, 27.93\% are female. 

Since the reward is binary, we fit the outcome model using logistic regression. Note that during the training process, the optimal policy for this dataset turns out to be a one-size-fits-all approach, where ideally, everyone would be admitted to the program. This makes the action exactly fair, placing the optimal policy within our Pareto efficiency set. To address this issue, we introduce an additional constraint on the pre-defined policy class, ensuring that the estimated number of admitted individuals does not exceed the number of admitted individuals in the dataset (179 in total). This constraint is practical, as many real-world applications face limitations in resources and budget.

\vspace{.4 cm}

\begin{table}[t] 
\caption{The estimated value function, the estimated action fairness metric, and the estimated outcome fairness metric using our proposed DFL and other competing methods.} 
\label{tab:real_data}
\centering
\begin{tabular}{lrrrrrr} 
\toprule
& \multicolumn{3}{c}{Equal Opportunity} & \multicolumn{3}{c}{Counterfactual} \\
\cmidrule(l{4pt}r{4pt}){2-4} \cmidrule(l{4pt}r{4pt}){5-7} 
Method & $\widehat{V}(\widehat{\pi})$  & $\widehat{\Delta}_1(\widehat{\pi})$ & $\widehat{\Delta}_2(\widehat{\pi})$ & $\widehat{V}(\widehat{\pi})$  & $\widehat{\Delta}_1(\widehat{\pi})$ & $\widehat{\Delta}_2(\widehat{\pi})$ \\
\midrule

DFL & 0.22 &0.09& 0.06 & 0.22 &0.09 &0.07 \\
Optimal & 0.23& 0.68& 0.12 & 0.23 &0.69& 0.13 \\
VB1 & 0.15& 0.00 &0.02 & 0.16 &0.00 &0.03 \\
VB2 &  0.16 &0.32& 0.02 & 0.18 &0.53 &0.02 \\
ADVB & 0.22 &0.26& 0.08 & 0.22 &0.10& 0.07 \\
\bottomrule
\end{tabular}
\end{table}

Table \ref{tab:real_data} provides a summary of the analysis results. As the results show, our proposed method produces an estimated value function similar to the estimated optimal policy, while being considerably fairer in terms of both action fairness and outcome fairness.

\section{Discussion} \label{sec:discussion}


This work  studies double fairness in policy learning by simultaneously addressing three objectives.
We provide a systematic analysis to identify conditions under which a outcome-fair or double-fair policy can exist. We also propose a novel estimation framework that incorporates fairness constraints directly into the multi-objective policy learning problem.
Our approach is flexible and accommodates a range of fairness notions. Extensive experiments on both synthetic and real-world datasets validate our method, showcasing its effectiveness in achieving double fairness without compromising overall welfare.




\bibliographystyle{apalike}
\bibliography{references}

\begin{appendices}

\newpage

\begin{center}
    {\LARGE\bf Supplemental Materials for ``Doubly Trustworthy Policy Learning''}
\end{center}

\section{Additional Results}

\subsection{Limitation of the Linear Scalarization Approach} \label{app:linear vs tch}

In this subsection, we show that the limitation of the linear scalarization approach via the following simple example. 

\begin{figure}[H] 
    \centering
    \begin{subfigure}[b]{0.49\textwidth}
        \centering
        \includegraphics[width=\textwidth]{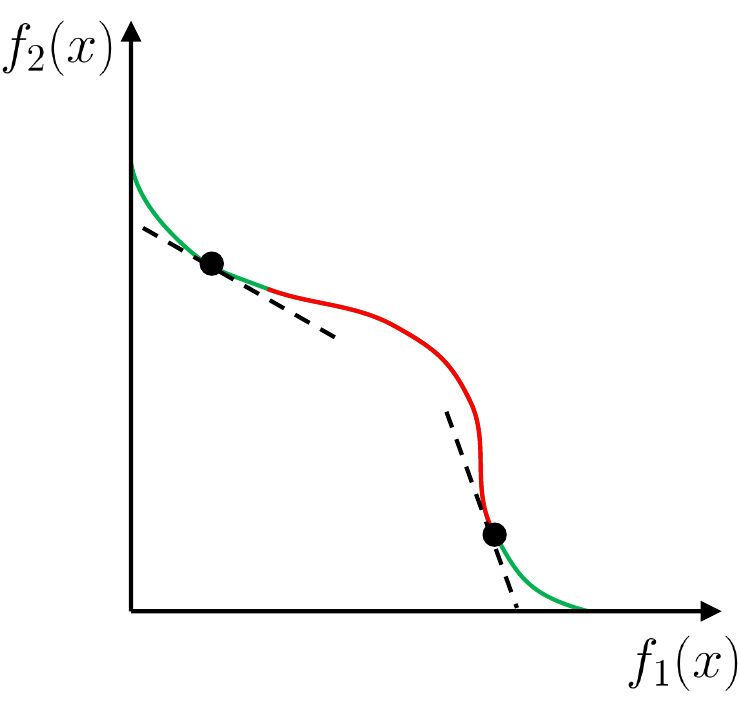}
        \caption{Linear scalarization.}
        \label{fig:ls}
    \end{subfigure}
    \hfill
    \begin{subfigure}[b]{0.49\textwidth}
        \centering
        \includegraphics[width=\textwidth]{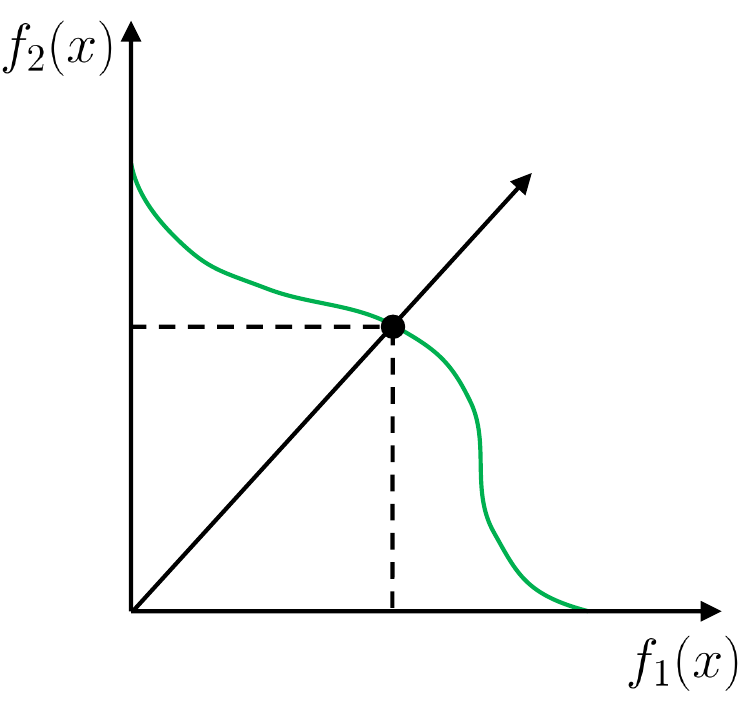}
        \caption{Weighted Tchebycheff scalarization.}
        \label{fig:tch}
    \end{subfigure}
    \caption{ Linear scalarization approach cannot identify Pareto polices in the non-convex regions of the Pareto front, as these solutions lack supporting hyperplanes \citep{miettinen1999nonlinear}. In contrast, weighted Tchebycheff scalarization can find all Pareto policies, regardless of the shape of the objective space. Figure adapted from \citet{lin2024smooth}.} 
    \label{fig:ls_vs_tch}
\end{figure}

Consider the following data generation procedure: $S\sim \mbox{Ber}(0.5)$, $X\sim \mbox{Ber}(0.6)$, $R=S+X+0.4 A- 0.5 AS$. The policy class $\Pi$ consists the following three policies:  $\pi_1=\mathds1 (X>0)$, $\pi_2=\mathds1 (S+X>0)$, $\pi_3=\mathds1 (S>0)$. That is, $\Pi=\{ \pi_1, \pi_2, \pi_3  \}$.







\begin{figure}[t] 
    \centering
\includegraphics[width=0.75\textwidth]{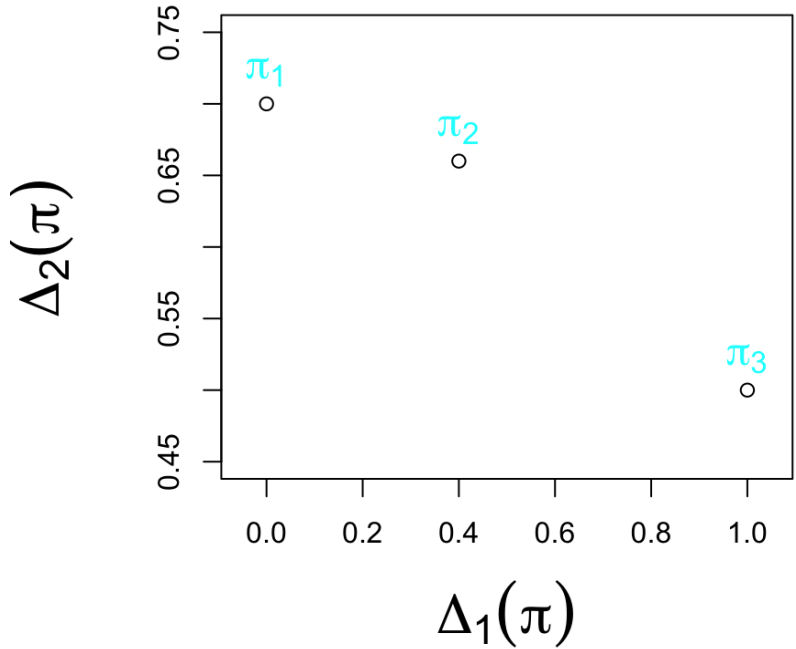}
    \caption{The Pareto front.} 
    \label{fig:example}
\end{figure}

Under this data generation process, it can be easily shown that \begin{gather*}
    \Delta_1(\pi_1)=0, \Delta_2(\pi_1)=0.7,\\
    \Delta_1(\pi_2)=0.4, \Delta_2(\pi_2)=0.66,\\
    \Delta_1(\pi_3)=1, \Delta_2(\pi_3)=0.5,
\end{gather*} for the equal opportunity fairness. See Figure \ref{fig:example} below for the Pareto front. Clearly, all the three polices lies in the Pareto  set, i.e., they cannot Pareto dominate each other.

Using the linear scalarization approach, it can be shown that \begin{gather*}
    \left \{ \pi \in \Pi: \argmin_{\pi} \alpha\Delta_1(\pi)+(1-\alpha)\Delta_2(\pi), \; \forall \alpha\in (0,1) \right \}= \{\pi_1, \pi_3\}.
\end{gather*} That is, there does not exist a $\alpha \in (0,1)$ such that $\alpha\Delta_1(\pi_2)+(1-\alpha)\Delta_2(\pi_2)<\alpha\Delta_1(\pi_1)+(1-\alpha)\Delta_2(\pi_1)$, and $\alpha\Delta_1(\pi_2)+(1-\alpha)\Delta_2(\pi_2)<\alpha\Delta_1(\pi_3)+(1-\alpha)\Delta_2(\pi_3)$. As a result, the linear scalarization can only identify $\pi_1$ and $\pi_3$, while leaving the policy that can yield the highest value-$\pi_2$ out of the Pareto set.

In comparison, the lexicographic weighted Tchebyshev scalarization can successfully identify all the three policies. To show this, only basic algebraic calculations are needed; hence, the procedure is omitted for brevity.

\subsection{Existence of Counterfactual outcome fairness Policy} \label{app:cf}

In this subsection, we briefly discuss how the results in Propositions \ref{prop:vf} and \ref{prop:df} can be extended to counterfactual fairness. For brevity, we omit the proof, as it can be derived in a similar manner to the proofs of Propositions \ref{prop:vf} and \ref{prop:df}.

\begin{assumption} \label{asmp:domination cf}
    Given the subject with information $x$, for any $a\neq a'$, there exists a treatment option $a'$ dominates the other treatment choice $a$, regardless of the sensitive group, i.e., $\min_s f(s,x,a') \geq \max_s f(s,x',a)$.
\end{assumption}

\begin{assumption} \label{asmp:non-domination cf}
    (i) $[f(1,x(1),1)-f(0,x(0),1)][f(1,x(1),0)-f(0,x(0),0)] \leq 0$, for counterfactual covariates $x(1)$ and $x(0)$, almost surely. \\
    (ii) 
    $[f(1,x(1),1)-f(1,x(1),0)][f(0,x(0),1)-f(0,x(0),0)] \leq 0$, for counterfactual covariates $x(1)$ and $x(0)$, almost surely. \\
    (iii) Given subject information $x$, there exists a pair  $(s,a)$ such that $f(s,x,a)\geq f(s,x,a')$, and $f(s,x,a)\geq f(s',x',a)$, for all $x'$, almost surely.
\end{assumption}

The following Propositions \ref{prop:vf cf} and \ref{prop:df cf} establish the sufficient and necessary conditions  for the existence of the outcome fair policy and the double fairness policy, respectively.

\begin{prop} \label{prop:vf cf}
    There exists a counterfactual outcome fair policy for the environment \textbf{if and only if} either Assumption \ref{asmp:domination cf} or Assumption \ref{asmp:non-domination cf}.
\end{prop}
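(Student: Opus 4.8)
The plan is to reproduce the argument behind Proposition~\ref{prop:vf}, with the equal-opportunity comparison at a shared covariate $x$ replaced by the counterfactual comparison across the pair $\big(x(0),x(1)\big)=\big(X_{s,x}(0),X_{s,x}(1)\big)$ that each individual induces. The first step is to reduce counterfactual outcome fairness to a pointwise overlap condition. For binary actions, writing $\pi(s,\cdot)=\pi(1\mid s,\cdot)$, the group outcome is affine in the policy probability,
\[
\E^\pi\!\big(R^{(2)}\mid s,x(s)\big)=f(s,x(s),0)+\pi(s,x(s))\,\big[f(s,x(s),1)-f(s,x(s),0)\big],
\]
so as $\pi(s,x(s))$ ranges over $[0,1]$ this value sweeps the closed interval $I_s$ with endpoints $f(s,x(s),0)$ and $f(s,x(s),1)$. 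Equalizing the two group outcomes for a given individual is therefore possible iff $I_0\cap I_1\neq\emptyset$, and a counterfactual outcome-fair policy exists iff this overlap holds almost surely over the population law of $\big(x(0),x(1)\big)$.

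The second step characterizes the overlap through a case split on the sign of the conditional treatment effects $f(s,x(s),1)-f(s,x(s),0)$. When they share a sign across $s$, both intervals are oriented identically and $I_0\cap I_1\neq\emptyset$ reduces to $\min_s f(s,x(s),a^*)\ge\max_s f(s,x(s),1-a^*)$ for the commonly favored action $a^*$, which is precisely Assumption~\ref{asmp:domination cf}. When the conditional treatment effects have opposite signs—exactly Assumption~\ref{asmp:non-domination cf}(ii)—no action can dominate for both groups, so Assumption~\ref{asmp:domination cf} necessarily fails, and overlap is instead governed by Assumption~\ref{asmp:non-domination cf}(i): one checks that the two group-difference terms $f(1,x(1),a)-f(0,x(0),a)$, $a\in\{0,1\}$, must take opposite signs, which is the only way to force both endpoint inequalities $f(0,x(0),0)\le f(1,x(1),0)$ and $f(1,x(1),1)\le f(0,x(0),1)$ (modulo the degenerate case where all four values coincide). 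Assumption~\ref{asmp:non-domination cf}(iii), which posits a row-and-column maximizer of the $2\times 2$ array $f(\cdot,x(\cdot),\cdot)$, then supplies an explicit target value inside $I_0\cap I_1$ and the associated policy probabilities. Gathering the two regimes yields, pointwise, overlap $\iff$ Assumption~\ref{asmp:domination cf} or Assumption~\ref{asmp:non-domination cf}, which is the claimed equivalence; both directions rest on the same $2\times 2$ bookkeeping as in Proposition~\ref{prop:vf}.

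The main obstacle is propagating this pointwise equivalence through the almost-sure qualifier peculiar to the counterfactual notion. Unlike equal opportunity, where the two conditional expectations are compared at the same $x$, here $\pi(\cdot\mid 0,x(0))$ and $\pi(\cdot\mid 1,x(1))$ enter the constraint jointly, and a priori a single covariate value $x(0)$ could be tied to conflicting targets through different individuals. Assumption~\ref{asmp: additive error} is what resolves this: since $x(1)-x(0)=\theta(1)-\theta(0)$ is a constant shift, the map $x(0)\mapsto x(1)$ is a translation and hence a bijection, so each factual covariate is matched to a unique counterfactual one and the pointwise overlaps can be realized simultaneously by a single measurable policy. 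What remains to be handled with care is the boundary between the two sign regimes—where a conditional treatment effect vanishes and the two assumption sets meet—and the corresponding touching-interval cases, together with verifying that the selection of equalizing probabilities is measurable so that the constructed policy is admissible.
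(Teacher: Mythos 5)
Your argument is essentially the paper's: the paper omits the proof of this proposition, stating only that it can be derived in the same manner as Proposition~\ref{prop:vf}, and your interval-overlap formulation ($I_0\cap I_1\neq\emptyset$ for the achievable outcome intervals of the two groups) is just a reparametrization of that proof's step of solving the linear fairness equation for $\pi(1,x)$ in terms of $\pi(0,x)$ and checking when its range intersects $[0,1]$, followed by the same $2\times 2$ sign case split that yields Assumption~\ref{asmp:domination cf} versus Assumption~\ref{asmp:non-domination cf}. Your additional care with the almost-sure qualifier, the translation bijection $x(0)\mapsto x(1)$ under Assumption~\ref{asmp: additive error}, and measurable selection of the equalizing probabilities goes beyond what the paper records but does not change the route.
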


\begin{prop} \label{prop:df cf}
A double counterfactual fairness policy is is either unique or does not exist. Furthermore, a unique double fairness policy exists if and only if Assumption \ref{asmp:non-domination cf} (i) holds.
\end{prop}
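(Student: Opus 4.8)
The plan is to follow the blueprint of the equal-opportunity argument (Proposition~\ref{prop:df}), replacing the pointwise comparison ``same $x$, different $s$'' by a comparison along each individual's \emph{counterfactual pair}. For an individual with observed $(s,x)$, write $x(0)=X_{s,x}(0)$ and $x(1)=X_{s,x}(1)$ for its counterfactual covariates; under Assumption~\ref{asmp: additive error} these satisfy $x(1)-x(0)=\theta(1)-\theta(0)$, a fixed translation independent of the individual. I would first check that the matched pair $\{(0,x(0)),(1,x(1))\}$ is identified from the observed data and is shared consistently by the (at most two) observed individuals that generate it, so that the fairness constraints it imposes are mutually compatible.

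Next I would reduce double fairness to one scalar equation per matched pair. Writing $p=\pi(1\mid 0,x(0))$, action-counterfactual fairness forces $\pi(1\mid 1,x(1))=p$, and substituting $\E^\pi(R^{(2)}\mid s,\cdot)=f(s,\cdot,0)+p\,[f(s,\cdot,1)-f(s,\cdot,0)]$ into the outcome-counterfactual fairness identity $\E^\pi(R^{(2)}\mid 0,x(0))=\E^\pi(R^{(2)}\mid 1,x(1))$ gives the linear equation $p(D_0-D_1)=D_0$, where $D_a\equiv f(1,x(1),a)-f(0,x(0),a)$ for $a\in\{0,1\}$. Whenever $D_0\neq D_1$ this determines the unique admissible value $p^\star=D_0/(D_0-D_1)$; since distinct matched pairs cover the support of the counterfactual-covariate law and pin the policy there, this yields the ``almost surely unique or nonexistent'' dichotomy.

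For the existence characterization I would verify that $p^\star\in[0,1]$ holds exactly when $D_0$ and $D_1$ have opposite signs, i.e.\ $D_0 D_1\le 0$, which is precisely Assumption~\ref{asmp:non-domination cf}(i). When $D_0\ge 0\ge D_1$ one has $D_0-D_1\ge D_0\ge 0$, so $0\le p^\star\le 1$, and symmetrically when $D_0\le 0\le D_1$; conversely, if $D_0 D_1>0$ then $p^\star<0$ or $p^\star>1$ (and no solution exists when $D_0=D_1\neq 0$), so no probability is admissible. Assembling these facts: if the sign condition holds almost surely, setting $\pi$ equal to $p^\star$ on each matched pair (and arbitrarily off the support) defines a valid doubly fair policy, whereas if it fails on a set of positive measure no such policy can exist.

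The part I expect to require the most care is the degenerate geometry rather than the generic algebra. The case $D_0=D_1\neq 0$ is harmless, as it coincides with the assumption's failure (then $D_0 D_1>0$), but the case $D_0=D_1=0$ leaves $p$ unconstrained by outcome fairness, so the per-pair value is not pinned down; I would argue this event is null under the maintained conditions so that it is absorbed into the ``almost surely'' qualifier. Equally delicate is making the counterfactual matching rigorous: I must invoke Assumption~\ref{asmp: additive error} both to identify $\{(0,x(0)),(1,x(1))\}$ from observables and to confirm that the two observed individuals mapping to a given pair impose identical constraints, which is exactly what legitimizes determining $p^\star$ almost surely and hence establishes both the uniqueness dichotomy and the existence equivalence.
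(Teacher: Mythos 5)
Your proposal is correct and follows essentially the same route as the paper: the paper omits the proof of this proposition, stating it is derived in the same manner as Proposition~\ref{prop:df}, whose proof likewise imposes $\pi(1\mid 0,\cdot)=\pi(1\mid 1,\cdot)=p$, solves the single linear outcome-fairness equation to get $p^\star=D_0/(D_0-D_1)$ (identical to the paper's closed form), and characterizes $p^\star\in[0,1]$ by the sign condition $D_0D_1\le 0$, i.e.\ Assumption~\ref{asmp:non-domination cf}(i). If anything you are more careful than the paper, since you explicitly treat the degenerate cases $D_0=D_1\neq 0$ and $D_0=D_1=0$ and the consistency of constraints across a matched counterfactual pair, points the paper's argument leaves implicit.
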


\subsection{Extension to Discrete Action Space} \label{app:discrete}

In this subsection, we briefly discuss how the results in Propositions \ref{prop:vf} and \ref{prop:df} can be extended to discrete action spaces. For brevity, we only discuss the extension for the equal opportunity fairness, while the counterfactual fairness can be generalized analogously. 

Throughout this subsection, we assume the action space is $L$-dimensional, and $\mathcal{A}=\{1,2,\dots,L\}.$ Let $\bm{f}(s,x)=\left[f(s,x,1),f(s,x,2),\dots, f(s,x,L)\right]^\top,$ and $\bm{\pi}(s,x)=\left[\pi(1|s,x),\dots, \pi(L|s,x)\right]^\top,$ for any $s$ and $x$. It then follows that  for equal opportunity, we have 
\begin{align*}
    \bm{f}(s,x)^\top \bm{\pi}(s,x)=\bm{f}(s',x)^\top \bm{\pi}(s',x),
\end{align*} for any $s$ and $x$.
Thus, a outcome fair policy exists, if and only if the above linear equation has solutions, subject to $\bm{\pi}^\top \bm 1_L=1$, where $\bm 1_L$ is all-ones vector with dimension $L$;
and $0 \leq \pi(l|s,x) \leq 1$, for any $l=1,2,\dots, L$.

Similarly, a double fairness policy exist if and only if the above linear equation has solutions, subject to $\bm{\pi}^\top \bm 1_L=1$,
and $0 \leq \pi(l|s,x)=\pi(l|s',x) \leq 1$, for any $l=1,2,\dots, L$.

\subsection{Further Details on Premium in the Insurance Data} \label{app:data}

In the offline dataset, we classify customers as low-risk or high-risk based on the BM scale, following common practice in the insurance literature, where drivers are often categorized as good or bad based on their BM score \citep{frangos2001design}. The BM scale ranges from 0 to 22, with higher values indicating greater risk. We define customers with BM greater than five as high-risk and the rest as low-risk.  Since the dataset does not contain premium information, we generate the pure premium amount using the frequency-severity approach, as outlined by \citet{baumann2023fairness}. This approach sets pure premiums based on the predicted risk. To better reflect real-world pricing, we further adjust the final premium based on the coverage plan and gender. Specifically, the premium is generated according to
\begin{gather*}
  \min\left(500 \times \texttt{coverage} + 100 \times \frac{\#\texttt{claims}}{\texttt{exposure}} + 20 \times \texttt{gender},\ 5000\right),
\end{gather*}
where \texttt{coverage} is a categorical variable representing one of three coverage plans, and \texttt{gender} is a binary variable (e.g., 0 for female, 1 for male). The term \#\texttt{claims} refers to the number of insurance claims made by the policyholder, and \texttt{exposure} denotes the duration that the policy was active. The final reward for a subject is defined as the difference between the adjusted premium and the total claim amount.

\section{Proofs and Auxiliary Lemmas}

Throughout, we use $c$ and $C$ to denote a generic constant that can vary from line to line.

\subsection{Proof of Proposition \ref{prop:vf}}

\begin{proof}

It suffices to solve the following linear equation with respect to $\pi$: \begin{align*}
f(0,x,1)\pi(1|0,x)+f(0,x,0)\pi(0|0,x)=f(1,x,1)\pi(1|1,x)+f(1,x,0)\pi(0|1,x).
\end{align*}

However, the above linear systems have two unknowns and only one equation, thus it could have infinity many solutions. Denote $\pi(0,x)$ as $\pi(1|S=0,x)$, this yields that \begin{gather} \label{eq: value eo fair}
    \pi(1,x)=\left[(f(0,x,1)-f(0,x,0))\pi(0,x)+f(0,x,0)-f(1,x,0)\right] [f(1,x,1)-f(1,x,0)]^{-1}.
\end{gather}
It is equivalent to show that there exists a $\pi(0,x)\in [0,1]$ such that its corresponding $\pi(1,x)\in [0,1]$.

    By Equation \eqref{eq: value eo fair}, $\pi(1,x)$ is linear in $\pi(0,x)$. Thus, we have either \begin{align}
        \pi(1,x) \in \left[\frac{f(0,x,0)-f(1,x,0)}{f(1,x,1)-f(1,x,0)}, \frac{f(0,x,1)-f(1,x,0)}{f(1,x,1)-f(1,x,0)}\right], \label{eq:s1}\\
        \mbox{or } \pi(1,x) \in \left[ \frac{f(0,x,1)-f(1,x,0)}{f(1,x,1)-f(1,x,0)},\frac{f(0,x,0)-f(1,x,0)}{f(1,x,1)-f(1,x,0)}\right]. \label{eq:s2}
    \end{align} Thus it remains to show that either the above two intervals intersects with the interval $[0,1]$ is non-empty. By some algebra, interval \eqref{eq:s1} has intersection with $[0,1]$ if and only if the following holds for any $a\neq a'$, \begin{align*} 
    \begin{cases}
        f(1,x,a) > f(1,x,a'),\\
        f(1,x,a) > f(0,x,a'),\\
        f(0,x,a) > f(1,x,a'),\\
        f(0,x,a) > f(0,x,a').
    \end{cases}
    \end{align*} This corresponds to Assumption \ref{asmp:domination}. Following the same argument, interval \eqref{eq:s2} has intersection with $[0,1]$ if and only if for $s\neq s'$ and $a\neq a'$, \begin{align*} 
    \begin{cases}
        f(s,x,a) > f(s,x,a'),\\
        f(s,x,a) > f(s',x,a),\\
        f(s',x,a') > f(s',x,a),\\
        f(s',x,a') > f(s,x,a'),
    \end{cases}
    \end{align*} which corresponds to Assumption \ref{asmp:non-domination}. This thus completes the proof.
\end{proof} 

\subsection{Proof of Proposition \ref{prop:df}}

\begin{proof}

Following the proof of Proposition \ref{prop:vf}, by letting $\pi(1,x)=\pi(0,x)=\pi(x)$, we obtain that \begin{gather} \label{eq: deo policy}
    \pi(x)= [f(0,x,0)-f(1,x,0)] \left[f(0,x,0)-f(1,x,0)+f(1,x,1)-f(0,x,1)\right]^{-1}.
\end{gather} Thus, if $\pi(x) \in [0,1]$, then the solution is unique; and if $\pi(x) \notin [0,1]$, there is no double fairness policy exists.

By \eqref{eq: deo policy}, $\pi(x) \in (0,1)$ if and only if either one of the following two scenarios holds.

\begin{enumerate}
    \item $f(1,x,1) - f(0,x,1) <0$ and $f(0,x,0) -f(1,x,0) <0$;
    \item $f(1,x,1) - f(0,x,1) >0$ and $f(0,x,0) - f(1,x,0) >0$.
\end{enumerate} The above two equations correspond to Assumption \ref{asmp:non-domination} (i). This thus completes the proof.

\end{proof}

\subsection{Proof of Proposition \ref{prop: Cheby}}

\begin{proof}
To show all of the elements in $\Pi_p$ is Pareto optimal solution, see Theorem 5.4.1 in \citet{miettinen1999nonlinear}; while the other direction can be proved using Theorem 5.4.2 in \citet{miettinen1999nonlinear}.
\end{proof}

\subsection{Proof of Theorem \ref{them:regret}}

\begin{lemma} \label{lem:discretize error}
Suppose that the assumptions in Theorem \ref{them:regret} hold, then for any $\alpha \in (0,1)$, there exists some $k\in \left\{1, 2, \cdots, K \right\}$ such that the following holds almost surely:
    \begin{align*}
         \left|M_\alpha^*-\widehat M_{\alpha_k}^*\right| \lesssim \sup_{\pi \in \Pi} \left|\Delta_1(\pi)-\widehat\Delta_1(\pi) \right|+\sup_{\pi \in \Pi} \left|\Delta_2(\pi)-\widehat\Delta_2(\pi)\right|+K^{-1},\\
         \mbox{ and } 
         \left|\Delta_\alpha^*-\widehat \Delta_{\alpha_k}^*\right| \lesssim \sup_{\pi \in \Pi} \left|\Delta_1(\pi)-\widehat\Delta_1(\pi) \right|+\sup_{\pi \in \Pi} \left|\Delta_2(\pi)-\widehat\Delta_2(\pi)\right|+K^{-1}.
    \end{align*} 
\end{lemma}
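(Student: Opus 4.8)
My plan is to split each of the two claimed inequalities into an \(\alpha\)-discretization error and a metric-estimation error and to control them separately. Throughout I write \(\epsilon_j \equiv \sup_{\pi\in\Pi}\big|\Delta_j(\pi)-\widehat\Delta_j(\pi)\big|\) for \(j=1,2\), so the target right-hand side is \(\epsilon_1+\epsilon_2+K^{-1}\); I also note that the choice of \(\kappa\) in Theorem~\ref{them:regret} forces \(\kappa \gtrsim \epsilon_1+\epsilon_2+K^{-1}\). I will repeatedly use three elementary facts: (i) \(\big|\inf_\pi g(\pi)-\inf_\pi h(\pi)\big|\le \sup_\pi|g(\pi)-h(\pi)|\); (ii) \(|\max\{a,b\}-\max\{c,d\}|\le |a-c|+|b-d|\); and (iii) for fixed \(\pi\), \(M_\alpha(\pi)=\max\{\alpha\Delta_1(\pi),(1-\alpha)\Delta_2(\pi)\}\) is Lipschitz in \(\alpha\) with constant \(\max\{\Delta_1(\pi),\Delta_2(\pi)\}\), which is uniformly bounded by some \(C_L\) since Assumption~\ref{asmp:bounded reward} makes the metrics bounded (e.g. for equal opportunity \(\Delta_1\le 1\) and \(\Delta_2\le 4R_{\max}^2\)).

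For the first bound I would write \(|M_\alpha^*-\widehat M_{\alpha_k}^*|\le |M_\alpha^*-M_{\alpha_k}^*|+|M_{\alpha_k}^*-\widehat M_{\alpha_k}^*|\). The discretization term is handled by (i) and (iii): choosing \(\alpha_k\in\Omega\) with \(|\alpha-\alpha_k|\le K^{-1}\), \(|M_\alpha^*-M_{\alpha_k}^*|\le\sup_\pi|M_\alpha(\pi)-M_{\alpha_k}(\pi)|\le C_L K^{-1}\). The estimation term is handled by (i) and (ii): \(|M_{\alpha_k}^*-\widehat M_{\alpha_k}^*|\le\sup_\pi|M_{\alpha_k}(\pi)-\widehat M_{\alpha_k}(\pi)|\le \epsilon_1+\epsilon_2\). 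Adding the two gives the first claim.

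The second bound compares two \emph{constrained} infima, \(\Delta_\alpha^*=\inf_{\pi\in\Lambda_\alpha^*}\Delta(\pi)\) and \(\widehat\Delta_{\alpha_k}^*=\inf_{\pi\in\widehat\Lambda_{\alpha_k}^*}\widehat\Delta(\pi)\), over the \emph{different} feasible sets \(\Lambda_\alpha^*=\{\pi:M_\alpha(\pi)=M_\alpha^*\}\) and \(\widehat\Lambda_{\alpha_k}^*=\{\pi:\widehat M_{\alpha_k}(\pi)\le \widehat M_{\alpha_k}^*+\kappa\}\); since \(\Delta\) is \(\alpha\)-free, \(\sup_\pi|\Delta(\pi)-\widehat\Delta(\pi)|\le\epsilon_1+\epsilon_2\). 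The \textbf{easy direction} \(\widehat\Delta_{\alpha_k}^*\le \Delta_\alpha^*+\epsilon_1+\epsilon_2\) follows from a set inclusion: for any \(\pi\in\Lambda_\alpha^*\), combining (ii)--(iii) with the Part~1 estimates gives \(\widehat M_{\alpha_k}(\pi)\le M_\alpha(\pi)+\epsilon_1+\epsilon_2+C_LK^{-1}=M_\alpha^*+\epsilon_1+\epsilon_2+C_LK^{-1}\le \widehat M_{\alpha_k}^*+2(\epsilon_1+\epsilon_2)+2C_LK^{-1}\le \widehat M_{\alpha_k}^*+\kappa\), where the last step uses \(\kappa\gtrsim\epsilon_1+\epsilon_2+K^{-1}\). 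Hence \(\Lambda_\alpha^*\subseteq\widehat\Lambda_{\alpha_k}^*\), and minimizing \(\widehat\Delta\le\Delta+\epsilon_1+\epsilon_2\) over the larger set yields the inequality.

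The remaining \textbf{hard direction} \(\Delta_\alpha^*\le\widehat\Delta_{\alpha_k}^*+O(\epsilon_1+\epsilon_2+K^{-1})\) is where I expect the real difficulty. The same conversions give only the reverse inclusion into a \emph{relaxed} true set, \(\widehat\Lambda_{\alpha_k}^*\subseteq\{\pi:M_\alpha(\pi)\le M_\alpha^*+C\kappa\}\), so the \(\widehat\Delta\)-minimizer over \(\widehat\Lambda_{\alpha_k}^*\) need only be an \emph{approximate} \(M_\alpha\)-minimizer, not an exact one. Writing \(G(t)\equiv\inf\{\Delta(\pi):M_\alpha(\pi)\le M_\alpha^*+t\}\), the claim reduces to the calmness estimate \(G(0)-G(C\kappa)\lesssim\kappa\), i.e. relaxing the Tchebycheff constraint by \(O(\kappa)\) must not drop the \(\Delta\)-infimum by more than \(O(\kappa)\). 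This can fail for an arbitrary feasible region, so I would establish it by a sensitivity argument exploiting the piecewise-linear \(\max\)/sum geometry of \((M_\alpha,\Delta)\) to push an approximate minimizer back to an exact one with comparable \(\Delta\)-value; absent that, a separation/calmness regularity of the type later imposed in Assumption~\ref{asmp: sep} is precisely what forces near-minimizers to be total-variation close to \(\Lambda_\alpha^*\), hence close in \(\Delta\). Pinning down this calmness step, and verifying it genuinely follows from the stated hypotheses alone, is the main obstacle; the discretization and estimation pieces above are routine.
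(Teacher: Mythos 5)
Your handling of the first inequality is exactly the paper's argument: bound $\bigl|M_\alpha^*-\widehat M_{\alpha_k}^*\bigr|$ by $\sup_{\pi\in\Pi}\bigl|M_\alpha(\pi)-\widehat M_{\alpha_k}(\pi)\bigr|$, apply the triangle inequality for the max, and split each piece into a discretization term of order $K^{-1}$ (using boundedness of the metrics) plus the estimation terms $\epsilon_j=\sup_{\pi\in\Pi}|\Delta_j(\pi)-\widehat\Delta_j(\pi)|$. Nothing to add there. For the second inequality the paper writes only that it ``can be bounded analogously'' and omits the details, and you are right to be suspicious: it is not analogous. The pair $(M_\alpha^*,\widehat M_{\alpha_k}^*)$ consists of unconstrained infima over the \emph{same} set $\Pi$, so the device $|\inf g-\inf h|\le\sup|g-h|$ applies; the pair $(\Delta_\alpha^*,\widehat\Delta_{\alpha_k}^*)$ consists of infima of ($\alpha$-free) objectives over \emph{different} feasible sets, namely the exact minimizer set $\Lambda_\alpha^*=\arginf_{\pi\in\Pi}M_\alpha(\pi)$ versus the $\kappa$-relaxed set $\widehat\Lambda_{\alpha_k}^*$. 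Your set-inclusion argument correctly delivers the easy direction $\widehat\Delta_{\alpha_k}^*\le\Delta_\alpha^*+\epsilon_1+\epsilon_2$.

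The hard direction you flag is a genuine gap, and it is a gap in the paper's own (omitted) argument rather than an oversight of yours. It amounts to calmness of $t\mapsto\inf\{\Delta(\pi):M_\alpha(\pi)\le M_\alpha^*+t\}$ at $t=0$, which does not follow from Assumption~\ref{asmp:bounded reward} and the choice of $\kappa$ alone. A two-policy example makes this concrete: take $\alpha=1/2$, $\Delta_1(\pi_1)=\Delta_2(\pi_1)=2m$ and $\Delta_1(\pi_2)=2m+\kappa$, $\Delta_2(\pi_2)=0$, with exact estimation of the metrics. Then $M_{1/2}(\pi_1)=m<m+\kappa/2=M_{1/2}(\pi_2)$, so $\Lambda_{1/2}^*=\{\pi_1\}$ and $\Delta_{1/2}^*=4m$, while $\widehat\Lambda^*=\{\pi_1,\pi_2\}$ and $\widehat\Delta^*=2m+\kappa$; the gap $2m-\kappa$ is not controlled by $\epsilon_1+\epsilon_2+K^{-1}$. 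Note also that this direction cannot simply be dropped: the covering step $\Pi_p\subseteq\widehat\Pi_p$ in the proof of Theorem~\ref{them:regret} needs precisely $\Delta_\alpha^*-\widehat\Delta_{\alpha_k}^*\lesssim\kappa$ to conclude $\widehat\Delta(\pi)\le\widehat\Delta_{\alpha_k}^*+\kappa$ for $\pi\in\Pi_p$. Your diagnosis that a separation or calmness condition in the spirit of Assumption~\ref{asmp: sep} must be imported into the lemma's hypotheses (or a sensitivity argument supplied for the specific $\max$/sum geometry) is exactly what is needed to repair the statement; as written, the ``analogous'' claim does not go through.
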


\begin{proof}
    For any $\alpha \in (0,1)$, there exists $\alpha_k$ such that $\alpha_k \in \Omega$ and $|\alpha_k-\alpha| \leq K^{-1}$. We now first prove the bound for $\big|M_\alpha^*-\widehat M_{\alpha_k}^*\big| $. By definition, we have \begin{gather*}
        M_{\alpha}^*= \inf_{\pi \in \Pi}\max\left\{\alpha\Delta_1(\pi),(1-\alpha)\Delta_2(\pi)\right\}, \\
        \widehat M_{\alpha_k}^*=\inf_{\pi \in \Pi}\max\big\{\alpha_k\widehat\Delta_1(\pi),(1-\alpha_k)\widehat\Delta_2(\pi)\big\}.
        \end{gather*}

It then follows that
        \begin{align} \label{eq:evaluation bound}
   \nonumber &\left|M_\alpha^*-\widehat M_{\alpha_k}^*\right|\\
           \nonumber  =& \left|\inf_{\pi \in \Pi}\left [\max\left(\alpha\Delta_1(\pi),(1-\alpha)\Delta_2(\pi)\right) \right]-\inf_{\pi \in \Pi}\left [\max\left(\alpha_k\widehat\Delta_1(\pi),(1-\alpha_k)\widehat\Delta_2(\pi)\right) \right]\right|\\
            \leq & \; \sup_{\pi \in \Pi} \left|\max\left(\alpha\Delta_1(\pi),(1-\alpha)\Delta_2(\pi)\right)- \max\left(\alpha_k\widehat\Delta_1(\pi),(1-\alpha_k)\widehat\Delta_2(\pi)\right) \right|\\
           \nonumber  \leq & \; \sup_{\pi \in \Pi} \max\Big[\left|\alpha\Delta_1(\pi)-\alpha_k\widehat\Delta_1(\pi)\right|,\left|(1-\alpha)\Delta_2(\pi)-(1-\alpha_k)\widehat\Delta_2(\pi)\right|\Bigg]
             \\
            \nonumber \leq & \; \sup_{\pi \in \Pi} \left|\alpha\Delta_1(\pi)-\alpha_k\widehat\Delta_1(\pi)\right|+
             \sup_{\pi \in \Pi} \left|(1-\alpha)\Delta_2(\pi)-(1-\alpha_k)\widehat\Delta_2(\pi)\right|,
        \end{align} where the second to last inequality follows from triangle inequality of the $\ell_\infty$ norm.
        
We first deal with term $\sup_{\pi \in \Pi} \left|\alpha\Delta_1(\pi)-\alpha_k\widehat\Delta_1(\pi)\right|$, the second term can be bounded similarly. It follows that \begin{align}
    \notag & \sup_{\pi \in \Pi} \left|\alpha\Delta_1(\pi)-\alpha_k\widehat\Delta_1(\pi)\right| = \sup_{\pi \in \Pi} \left|\alpha\Delta_1(\pi)-\alpha_k\Delta_1(\pi)+\alpha_k\Delta_1(\pi)-\alpha_k\widehat\Delta_1(\pi)\right| \\
\leq  &  \; \underbrace{\sup_{\pi \in \Pi} \left|\alpha\Delta_1(\pi)-\alpha_k\Delta_1(\pi)\right|}_{J_1}+ \underbrace{\sup_{\pi \in \Pi} \left|\alpha_k\Delta_1(\pi)-\alpha_k\widehat\Delta_1(\pi)\right|}_{J_2}. \label{eq:discret}
\end{align} Since $\alpha_k \in \Omega$, the first term $J_1$ is upper bounded by $K^{-1}$, by the fact that $\pi(a|s,x)\in [0,1]$ for any $a$, $s$, and $x$, and thus $\sup_{\pi}|\Delta_1(\pi)| \leq 1$. The second term $J_2$ is simply bounded by $\sup_{\pi \in \Pi} \left|\Delta_1(\pi)-\widehat\Delta_1(\pi)\right|$. Thus we have 
proved that \begin{gather*}
    \sup_{\pi \in \Pi} \left|\alpha\Delta_1(\pi)-\alpha_k\widehat\Delta_1(\pi)\right| \leq K^{-1}+\sup_{\pi \in \Pi} \left|\Delta_1(\pi)-\widehat\Delta_1(\pi)\right|.
\end{gather*} Using the similar argument, we can show that \begin{gather*}
    \sup_{\pi \in \Pi} \left|(1-\alpha)\Delta_2(\pi)-(1-\alpha_k)\widehat\Delta_2(\pi)\right| \lesssim K^{-1}+\sup_{\pi \in \Pi} \left|\Delta_2(\pi)-\widehat\Delta_2(\pi)\right|.
\end{gather*}
Finally, the term $\left|\Delta_\alpha^*-\widehat \Delta_{\alpha_k}^*\right| $ can be bounded analogously and we omit the details. This completes the proof.
\end{proof}

We now prove Theorem \ref{them:regret}.
\begin{proof}

We first show if the slack parameter $\kappa$ is chosen such that
\[
\kappa \geq C \max\left[\sup_{\pi \in \Pi} \left| \Delta_1(\pi) - \widehat{\Delta}_1(\pi)\right| + \sup_{\pi \in \Pi} \left|\Delta_2(\pi) - \widehat{\Delta}_2(\pi) \right|, \; K^{-1} \right],
\]
for a sufficiently large constant $C$, then the estimated Pareto set covers the underlying true Pareto set, almost surely. 

Recall that \begin{gather*}
    \Pi_p=\bigcup_{\alpha\in(0,1)} \left\{\pi \in \Lambda_{\alpha}^*: \Delta(\pi) \leq  \Delta_{\alpha}^* \right\}, \\
     \mbox{ and }  \widehat\Pi_p= \bigcup_{k=1}^K \left\{\pi \in \widehat\Lambda_{\alpha_k}^*: \widehat\Delta(\pi) \leq \widehat \Delta_{\alpha_k}^*+\kappa  \right\}.
    \end{gather*}    
 It is therefore equivalent to show that, for any $\alpha \in (0,1)$ and $\pi$: 
    \begin{gather*}
        \Delta(\pi) \leq    \Delta_{\alpha}^* \mbox{ and } M_{\alpha}(\pi) =  M_{\alpha}^*, \\
        \notag \implies \exists \, \alpha_k, \widehat\Delta(\pi) \leq    \widehat \Delta_{\alpha_k}^*+\kappa  \mbox{ and } \widehat M_{\alpha_k}(\pi) \leq  \widehat M_{\alpha_k}^* +\kappa.
    \end{gather*} 
It thus suffices to show that, for any $\alpha \in (0,1)$ and $\pi$: 
\begin{gather}
    \label{eq:Dstar}
    M_{\alpha}(\pi) = M_{\alpha}^* \implies \widehat M_{\alpha_k}(\pi) \leq  \widehat M_{\alpha_k}^* +\kappa,  \\ 
   \mbox{ and } \notag \Delta(\pi) \leq    \Delta_{\alpha}^* \implies \exists \, \alpha_k, \widehat\Delta(\pi) \leq    \widehat \Delta_{\alpha_k}^*+\kappa.
\end{gather}

We first show that $M_{\alpha}(\pi) =  M_{\alpha}^* \implies \widehat M_{\alpha_k}(\pi) \leq  \widehat M_{\alpha_k}^* +\kappa.$
Recall that \begin{gather*}
    \widehat M_{\alpha_k}(\pi)=\max\left(\alpha_k\widehat\Delta_1(\pi),(1-\alpha_k)\widehat \Delta_2(\pi)\right)\mbox{ and }\widehat M_{\alpha_k}^*=\inf_{\pi\in \Pi} \widehat M_{\alpha_k}(\pi).
\end{gather*} 
It follows that
    \begin{align*}
   & \widehat M_{\alpha_k}(\pi)=\max\left(\alpha_k\widehat\Delta_1(\pi),(1-\alpha_k)\widehat\Delta_2(\pi)\right)  \\
    = &  \max\left(\alpha_k\widehat\Delta_1(\pi),(1-\alpha_k)\widehat\Delta_2(\pi)\right) - \max\left(\alpha\Delta_1(\pi),(1-\alpha)\Delta_2(\pi)\right)\\
    &+\max\left(\alpha\Delta_1(\pi),(1-\alpha)\Delta_2(\pi)\right)\\
    \lesssim &  \sup_{\pi \in \Pi} \left| \Delta_1(\pi) - \widehat{\Delta}_1(\pi)\right| + \sup_{\pi \in \Pi} \left|\Delta_2(\pi) - \widehat{\Delta}_2(\pi) \right|+K^{-1} + \max\left(\alpha\Delta_1(\pi),(1-\alpha)\Delta_2(\pi)\right)\\
    \lesssim & \; \sup_{\pi \in \Pi} \left| \Delta_1(\pi) - \widehat{\Delta}_1(\pi)\right| + \sup_{\pi \in \Pi} \left|\Delta_2(\pi) - \widehat{\Delta}_2(\pi) \right|+K^{-1}+M_{\alpha}^* \mbox{ (by Equation \ref{eq:Dstar})}\\
    \lesssim & \; \sup_{\pi \in \Pi} \left| \Delta_1(\pi) - \widehat{\Delta}_1(\pi)\right| + \sup_{\pi \in \Pi} \left|\Delta_2(\pi) - \widehat{\Delta}_2(\pi) \right|+K^{-1}+M_{\alpha}^*-\widehat M_{\alpha_k}^*+\widehat M_{\alpha_k}^*\\
    \lesssim & \; \sup_{\pi \in \Pi} \left| \Delta_1(\pi) - \widehat{\Delta}_1(\pi)\right| + \sup_{\pi \in \Pi} \left|\Delta_2(\pi) - \widehat{\Delta}_2(\pi) \right|+K^{-1}+\widehat M_{\alpha_k}^* \mbox{ by Lemma \ref{lem:discretize error}}.
    \end{align*} 
    
Since $\kappa$ satisfies \[
\kappa \geq C \max\left[\sup_{\pi \in \Pi} \left| \Delta_1(\pi) - \widehat{\Delta}_1(\pi)\right| + \sup_{\pi \in \Pi} \left|\Delta_2(\pi) - \widehat{\Delta}_2(\pi) \right|, \; K^{-1} \right],
\]
for a sufficiently large constant $C$, we have proved Equation \eqref{eq:Dstar}.
Another argument that $\Delta(\pi) \leq    \Delta_{\alpha}^* \implies \exists \, \alpha_k, \;  \widehat\Delta(\pi) \leq    \widehat \Delta_{\alpha_k}^*+\kappa$ can be proved using the similarly reasoning, we omit the details to save the space. 

We next prove the regret bound.
Let $V^*=\sup_{\pi \in \Pi_p}V(\pi) $, i.e., the largest value among the Pareto efficiency set. We first decompose $V^*-V(\widehat\pi)$:
    \begin{align*}
   V^*-V(\widehat\pi)=V^*-\widehat V^*+ \underbrace{\widehat V^* -\widehat V(\widehat\pi)}_{I_1}+\widehat V(\widehat\pi)- V(\widehat\pi),
    \end{align*} 
  where $\widehat V^*$ is the value estimator of $V^*$ using the reward estimator.
    The term $I_1\leq 0$ since $\widehat V(\widehat\pi)=\sup_{\pi\in \widehat \Pi_p} \widehat V(\pi)$; and by Theorem \ref{thm:pset}, the estimated Pareto set contains the optimal policy in the Pareto set,  i.e., $\pi^* \in \widehat \Pi_p$. 
    
    It then follows that 
\begin{align*}
   V^*-V(\widehat\pi)
   \leq & \; V^*-\widehat V^*+\widehat V(\widehat\pi)- V(\widehat\pi)\\
   \leq & \; \sup_{\pi \in \Pi_p}|V({\pi})-\widehat V({\pi})|+\sup_{\pi \in \widehat \Pi_p}|V({\pi})-\widehat V({\pi})|.
    \end{align*} 
The above term can be upper bounded by $2\sup_{\pi \in \widehat \Pi_p}|V({\pi})-\widehat V({\pi})|$, and hence further upper bounded by \begin{gather*}
    2\sup_{\pi \in  \Pi}|V({\pi})-\widehat V({\pi})|.
\end{gather*} Another upper bound can be established in the following way, and we use Hausdorff distance to ``match'' each
$\pi\in\widehat\Pi_p$ with a nearby $\pi'\in \Pi_p$. Recall that $$\eta(\pi) \equiv \widehat{V}(\pi) - V(\pi)$$ denotes the policy evaluation error.
We have 
    \begin{align*}
   &\sup_{\pi \in \widehat \Pi_p}|V({\pi})-\widehat V({\pi})|=\sup_{\pi \in \widehat \Pi_p}|\eta(\pi)|\\
   = & \sup_{\pi \in \widehat \Pi_p} \sup_{\substack{\pi' \in \Pi_p: \\d_{\mathrm{TV}}(\pi,\pi')\leq H_{\mathrm{TV}}(\Pi_p,\widehat \Pi_p)}}|\eta(\pi)-\eta(\pi')+\eta(\pi')| \\
   \leq & \; \sup_{d_{\mathrm{TV}}(\pi,\pi')\leq H_{\mathrm{TV}}(\Pi_p,\widehat \Pi_p)}\left|\eta({\pi})-\eta({\pi'})\right|+\sup_{\pi \in \Pi_p}|\eta({\pi})| \\
   \leq & \; \sup_{d_{\mathrm{TV}}(\pi,\pi')\leq H_{\mathrm{TV}}(\Pi_p,\widehat \Pi_p)} |V({\pi})-V({\pi'})|+\sup_{\pi \in \Pi_p}|\eta({\pi})| \\
    & +
   \sup_{d_{\mathrm{TV}}(\pi,\pi')\leq H_{\mathrm{TV}}(\Pi_p,\widehat \Pi_p)} |\widehat V({\pi})- \widehat V({\pi'})|     \\
   \leq & \; \sup_{d_{\mathrm{TV}}(\pi,\pi')\leq H_{\mathrm{TV}}(\Pi_p,\widehat \Pi_p)} \left|\E \; \sum_a r(S,X,a) (\pi(a|S,X)-\pi'(a|S,X))\right|+\sup_{\pi \in \Pi_p}|\eta({\pi})| \\
   & + \sup_{d_{\mathrm{TV}}(\pi,\pi')\leq H_{\mathrm{TV}}(\Pi_p,\widehat \Pi_p)}   \left| \frac{1}{n}\sum_{i=1}^n \sum_a \widehat r(S_i,X_i,a) (\pi(a|S_i,X_i)-\pi'(a|S_i,X_i))\right|.
    \end{align*} 

We first bound the first term, and the last term can be bounded following the similar logic. The first term 
\begin{align*}
   &\sup_{d_{\mathrm{TV}}(\pi,\pi')\leq H_{\mathrm{TV}}(\Pi_p,\widehat \Pi_p)} \left|\E \; \sum_a r(S,X,a) (\pi(a|S,X)-\pi'(a|S,X))\right| \\
   \leq \; & \sup_{d_{\mathrm{TV}}(\pi,\pi')\leq H_{\mathrm{TV}}(\Pi_p,\widehat \Pi_p)} \sup_{s,x} \sum_a  \left|  r(s,x,a) (\pi(a|s,x)-\pi'(a|s,x))\right| \\
   \leq \; &   \sup_{d_{\mathrm{TV}}(\pi,\pi')\leq H_{\mathrm{TV}}(\Pi_p,\widehat \Pi_p)} \underbrace{\sup_{s,x}   \sum_a |\pi(a|s,x)-\pi'(a|s,x)|}_{2d_{\mathrm{TV}}(\pi,\pi')} \; R_{\max}  \mbox{  (Hölder's inequality
)}\\
   \leq \; & 2R_{\max}H_{\mathrm{TV}}(\Pi_p,\widehat \Pi_p)
\end{align*}

Similarly, we have \begin{gather*}
    \left| \frac{1}{n}\sum_{i=1}^n \sum_a \widehat r(S_i,X_i,a) (\pi(a|S_i,X_i)-\pi'(a|S_i,X_i))\right| \leq 2R_{\max}H_{\mathrm{TV}}(\Pi_p,\widehat \Pi_p).
\end{gather*}


To summarize, we have provided the upper bound for the regret, which is \begin{gather*}
    \min\left[  \sup_{\pi \in \Pi_p} \big| \eta(\pi) \big|+ H_{\mathrm{TV}}(\Pi_p,\widehat{\Pi}_p), \; \sup_{\pi \in \Pi} \big| \eta(\pi) \big| \right] ,
\end{gather*} up to a constant factor.

We next prove the other direction.
\begin{align*}
   &V^*-V({\widehat\pi})=\sup_{\pi \in  \Pi_p}V({\pi})-\sup_{\pi \in \widehat \Pi_p}V({\pi})+\sup_{\pi \in \widehat \Pi_p}V({\pi})-V({\widehat\pi}) \\
   \geq & \; \sup_{\pi \in  \Pi_p}V({\pi})-\sup_{\pi \in \widehat \Pi_p}V({\pi})\\
   \geq & \; -2R_{\max}H_{\mathrm{TV}}(\Pi_p,\widehat \Pi_p),
    \end{align*} where the last inequality follows from \begin{gather*}
        \sup_{\pi \in \widehat \Pi_p}V({\pi}) \leq \sup_{\pi \in \Pi_p}V({\pi})+2R_{\max}H_{\mathrm{TV}}(\Pi_p,\widehat \Pi_p),
    \end{gather*} analogous to the argument we used earlier for the upper bound.
    
    This completes the proof.
\end{proof}

\subsection{Proof of Theorem \ref{thm:pset}}

\begin{proof}
  
In the proof of Theorem \ref{them:regret}, we have demonstrated that  the estimated Pareto set covers the underlying true Pareto set with an appropriate choice of $\kappa$. We next provide the upper bound of the Hausdorff distance $H_{\mathrm{TV}}(\Pi_p,\widehat \Pi_p)$.
Since $\Pi_p\subseteq \widehat \Pi_p$, we have for every $\pi\in\Pi_p$,
$d_{\mathrm{TV}}(\pi,\widehat \Pi_p)=0$ (take $\pi$ itself in
$\widehat \Pi_p$), and hence
$
\sup_{\pi\in \Pi_p} d_{\mathrm{TV}}(\pi,\widehat \Pi_p)=0$.
Therefore,
\[
H_{\mathrm{TV}}(\Pi_p,\widehat \Pi_p)
=\max\big[ \sup_{\pi\in \Pi_p} d_{\mathrm{TV}}(\pi,\widehat \Pi_p), \sup_{\pi\in \widehat \Pi_p} d_{\mathrm{TV}}(\pi,\Pi_p) \big]=
\sup_{\pi\in \widehat \Pi_p} d_{\mathrm{TV}}(\pi,\Pi_p).
\]

Because $\widehat\Pi_p\subseteq\Pi$,
by Assumption \ref{asmp: sep}, to show that $\sup_{\pi\in \widehat \Pi_p} d_{\mathrm{TV}}(\pi,\Pi_p) \leq \delta$, where $\delta$ represents the target bound, it suffices to prove that $\forall \; \widetilde \pi \in \widehat \Pi_p$, $\exists\; \alpha(\widetilde \pi) \in (0,1)$, such that  
\begin{gather*}
  \left|\Delta (\widetilde \pi) -\Delta_{\alpha(\widetilde \pi)}^*\right| \leq c \delta,  \mbox{ and }  \left|M_{\alpha(\widetilde \pi)} (\widetilde\pi) -M_{\alpha}^*\right| \leq c \delta ,
\end{gather*} where $c>0$ is the universal constant in Assumption \ref{asmp: sep}.


We first show that $\forall \; \widetilde \pi \in \widehat \Pi_p$, $\exists\; \alpha(\widetilde \pi) \in (0,1)$, such that $\left|M_{\alpha(\widetilde \pi)} (\widetilde\pi) -M_{\alpha(\widetilde \pi)}^*\right| \leq c \delta$. It follows that
\begin{align*}
   & \left|M_{\alpha(\widetilde \pi)} (\widetilde\pi) -M_{\alpha(\widetilde \pi)}^*\right|=\left|M_{\alpha(\widetilde \pi)} (\widetilde\pi) -\widehat M_{\alpha(\widetilde \pi)} (\widetilde\pi)+\widehat M_{\alpha(\widetilde \pi)} (\widetilde\pi)-
    M_{\alpha(\widetilde \pi)}^* \right|\\
    \leq \; & \underbrace{\left|M_{\alpha(\widetilde \pi)} (\widetilde\pi) -\widehat M_{\alpha(\widetilde \pi)} (\widetilde\pi)\right|}_{Q_1} + \underbrace{\left|\widehat M_{\alpha(\widetilde \pi)} (\widetilde\pi)-
    M_{\alpha(\widetilde \pi)}^*\right|}_{Q_2}.
\end{align*}

We first bound $Q_2$.
Recall that by Equation \eqref{eq:solution constraint}, we have \begin{gather*}
    \widehat \Pi_p= \bigcup_{k=1}^K \left\{\pi \in \widehat\Lambda_{\alpha_k}^*: \widehat\Delta(\pi) \leq \widehat \Delta_{\alpha_k}^*+\kappa\right\} \text{ and }
\widehat\Lambda_{\alpha}^*=\left\{\pi\in \Pi:
\widehat M_{\alpha}(\pi) \leq \widehat M_{\alpha}^*+\kappa
\right\}.
\end{gather*} 
Thus $\forall \; \widetilde \pi \in \widehat \Pi_p$, there exists a $k$, such that 
$\widehat M_{\alpha_k}(\widetilde \pi) \leq \widehat M_{\alpha_k}^*+\kappa$, and we could choose $\alpha(\widetilde \pi)$ to be this $\alpha_k.$ It follows that \begin{align*}
    & Q_2 \leq \underbrace{\left|\widehat M_{\alpha(\widetilde \pi)} (\widetilde\pi)-\widehat M_{\alpha(\widetilde \pi)}^*\right|}_{\leq \kappa}+\left|\widehat M_{\alpha(\widetilde \pi)}^*-
    M_{\alpha(\widetilde \pi)}^*\right| \\
    \leq & \kappa +\sup_{\pi \in \Pi} \left| \Delta_1(\pi) - \widehat{\Delta}_1(\pi)\right| + \sup_{\pi \in \Pi} \left|\Delta_2(\pi) - \widehat{\Delta}_2(\pi) \right|,
\end{align*} where we use Lemma \ref{lem:discretize error} to upper bound $\left|\widehat M_{\alpha(\widetilde \pi)}^*-
    M_{\alpha(\widetilde \pi)}^*\right|.$


It remains to bound term $Q_1$. It follows that \begin{align*}
   &Q_1=\left|M_{\alpha(\widetilde \pi)} (\widetilde\pi) -\widehat M_{\alpha(\widetilde \pi)} (\widetilde\pi)\right|\\
   =& \left|\max\left\{\alpha(\widetilde \pi)\Delta_1(\widetilde \pi),(1-\alpha(\widetilde \pi))\Delta_2(\widetilde \pi)\right\}-\max\left\{\alpha(\widetilde \pi)\widehat\Delta_1(\widetilde \pi),(1-\alpha(\widetilde \pi))\widehat \Delta_2(\widetilde \pi)\right\}\right|\\
   \leq & \; \sup_{\alpha \in (0,1)}\sup_{\pi \in \Pi} \Big|\max\left(\alpha\Delta_1(\pi),(1-\alpha)\Delta_2(\pi)\right)- \max\left(\alpha\widehat\Delta_1(\pi),(1-\alpha)\widehat\Delta_2(\pi)\right) \Big|\\
   \leq & \; \sup_{\alpha \in (0,1)}\sup_{\pi \in \Pi} \Big|\max\Big[\alpha \left|\Delta_1(\pi) - \widehat{\Delta}_1(\pi)\right|, (1-\alpha)\left|\Delta_2(\pi) - \widehat{\Delta}_2(\pi) \right| \Big] \Big|\\
   \leq & \; \sup_{\pi \in \Pi} \left| \Delta_1(\pi) - \widehat{\Delta}_1(\pi)\right| + \sup_{\pi \in \Pi} \left|\Delta_2(\pi) - \widehat{\Delta}_2(\pi) \right|.
\end{align*}

Thus, we have $\forall \; \widetilde \pi \in \widehat \Pi_p$, $\exists\; \alpha(\widetilde \pi) \in (0,1)$, such that \begin{align*}
    & \left|M_{\alpha(\widetilde \pi)} (\widetilde\pi) -M_{\alpha(\widetilde \pi)}^*\right| \\
    \lesssim & \; \sup_{\pi \in \Pi} \left| \Delta_1(\pi) - \widehat{\Delta}_1(\pi)\right| + \sup_{\pi \in \Pi} \left|\Delta_2(\pi) - \widehat{\Delta}_2(\pi) \right|+\kappa
\end{align*}

Similarly, we can show that $\forall \; \widetilde \pi \in \widehat \Pi_p$, $\exists\; \alpha(\widetilde \pi) \in (0,1)$, such that
\begin{gather*}
    \left|\Delta (\widetilde \pi) -\Delta_{\alpha}^*\right|  \lesssim \sup_{\pi \in \Pi} \left| \Delta_1(\pi) - \widehat{\Delta}_1(\pi)\right| + \sup_{\pi \in \Pi} \left|\Delta_2(\pi) - \widehat{\Delta}_2(\pi) \right|+\kappa.
\end{gather*}
Therefore, by Assumption \ref{asmp: sep}, we have 
\begin{align*}
   & H_{\mathrm{TV}}(\Pi_p,\widehat \Pi_p)=\sup_{\pi\in \widehat \Pi_p} d_{\mathrm{TV}}(\pi,\Pi_p) \\
   \lesssim \; & \sup_{\pi \in \Pi} \left| \Delta_1(\pi) - \widehat{\Delta}_1(\pi)\right| + \sup_{\pi \in \Pi} \left|\Delta_2(\pi) - \widehat{\Delta}_2(\pi) \right|+\kappa.
\end{align*}

Finally, recall that \begin{gather*}
    \kappa \geq C \max\Big[\sup_{\pi \in \Pi} \big| \Delta_1(\pi) - \widehat{\Delta}_1(\pi)  \big|+\sup_{\pi \in \Pi} \big| \Delta_2(\pi) - \widehat{\Delta}_2(\pi)  \big|, \; K^{-1} \Big], 
\end{gather*}
for some constant $C$. Thus, we have \begin{gather*}
    H_{\mathrm{TV}}(\Pi_p,\widehat \Pi_p) \lesssim  \kappa.
\end{gather*}
This completes the proof.
\end{proof}

\subsection{Proof of Theorem \ref{thm:eoapp}: Equal Opportunity}

\begin{proof}
    By Theorems~\ref{them:regret} and \ref{thm:pset}, it remains to  
\begin{enumerate}
\item Establish an upper bound for 
    \(\sup_{\pi \in \Pi} \big| \widehat V(\pi) - V(\pi) \big|\);
    \item Establish an upper bound for
    \begin{gather*}
        \sup_{\pi \in \Pi} \big| \Delta_1(\pi) - \widehat{\Delta}_1(\pi)\big| 
        + \sup_{\pi \in \Pi} \big| \Delta_2(\pi) - \widehat{\Delta}_2(\pi) \big|,
    \end{gather*}
    under the \textit{equal opportunity} fairness notion. 
\end{enumerate}

We now present the following lemmas.

\begin{lemma}[Uniform Convergence of the Value Estimator] \label{lemma:value}
    \begin{align*}
        \sup_{\pi \in \Pi} |\widehat V(\pi) - V(\pi) |=O_p\left(\sqrt{\mbox{VC}(\Pi)/n}+n^{-\gamma_1/2}+\sqrt{\log n/n}\right).
    \end{align*} 
\end{lemma}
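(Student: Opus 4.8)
The plan is to split $\widehat V(\pi)-V(\pi)$ into a plug-in (nuisance-estimation) error and an empirical-process error, bound each uniformly over $\Pi$, and combine. To this end I would introduce the \emph{oracle empirical value}
\begin{gather*}
\widetilde V(\pi) \;\equiv\; \frac{1}{n}\sum_{i=1}^n \sum_{a} r(S_i,X_i,a)\,\pi(a\mid S_i,X_i),
\end{gather*}
which uses the true reward $r$ but the empirical measure, so that by the triangle inequality $\sup_{\pi\in\Pi}|\widehat V(\pi)-V(\pi)| \le \sup_{\pi\in\Pi}|\widehat V(\pi)-\widetilde V(\pi)| + \sup_{\pi\in\Pi}|\widetilde V(\pi)-V(\pi)|$.

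First I would control the plug-in term. Since $\widehat V(\pi)-\widetilde V(\pi)=\frac1n\sum_i\sum_a[\widehat r(S_i,X_i,a)-r(S_i,X_i,a)]\pi(a\mid S_i,X_i)$ and $0\le\pi(a\mid s,x)\le 1$, bounding $\pi$ by $1$ and summing over the finitely many actions gives a bound free of $\pi$; applying the Cauchy--Schwarz (Jensen) inequality termwise turns each $\frac1n\sum_i|\widehat r-r|$ into $\sqrt{\frac1n\sum_i(\widehat r-r)^2}$. Because the empirical $L_2$ error evaluated at $s=S_i$ is dominated by $\max_s\frac1n\sum_i[\widehat r(s,X_i,a)-r(s,X_i,a)]^2$, Assumption~\ref{asmp:nuisance} yields $\sup_{\pi\in\Pi}|\widehat V(\pi)-\widetilde V(\pi)|=O_p(n^{-\gamma_1/2})$, uniformly in $\pi$.

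Next I would handle the empirical-process term. Writing $g_\pi(s,x)\equiv\sum_a r(s,x,a)\pi(a\mid s,x)$, so that $\widetilde V(\pi)-V(\pi)=(P_n-P)g_\pi$, note that for binary actions $g_\pi(s,x)=r(s,x,0)+[r(s,x,1)-r(s,x,0)]\,\pi(1\mid s,x)$ is a fixed-coefficient affine transform of $\pi(1\mid\cdot)$. Hence the class $\mathcal{G}\equiv\{g_\pi:\pi\in\Pi\}$ inherits a polynomial uniform-entropy bound from the VC-subgraph property of $\Pi$ (Assumption~\ref{asmp: vc}) and has envelope bounded by a constant multiple of $R_{\max}$ (Assumption~\ref{asmp:bounded reward}). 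A standard symmetrization-plus-maximal-inequality argument (Dudley's entropy integral for VC-subgraph classes, as in \citet{kitagawa2018should,athey2021policy}) then gives $\E\sup_{\pi\in\Pi}|(P_n-P)g_\pi|\lesssim\sqrt{\mbox{VC}(\Pi)/n}$, and a bounded-differences (or Talagrand) concentration bound—each observation perturbs the supremum by $O(R_{\max}/n)$—upgrades this to an $O_p$ statement contributing the lower-order deviation term of order $\sqrt{\log n/n}$. Adding the two displays delivers the claimed rate $O_p(\sqrt{\mbox{VC}(\Pi)/n}+n^{-\gamma_1/2}+\sqrt{\log n/n})$.

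The main obstacle will be the empirical-process step: one must verify that $\mathcal{G}$ is genuinely a VC-type class even though $\Pi$ may be non-convex, and invoke a maximal inequality whose leading term is $\sqrt{\mbox{VC}(\Pi)/n}$ rather than $\sqrt{\mbox{VC}(\Pi)\log n/n}$. The affine reduction above makes the entropy of $\mathcal{G}$ comparable to that of $\Pi$ and prevents the bounded weights (products with $r$) from inflating the complexity, while the bounded envelope from Assumption~\ref{asmp:bounded reward} is what lets the entropy integral converge and confines the concentration step to the lower-order $\sqrt{\log n/n}$ term. By contrast, the plug-in step is routine once Cauchy--Schwarz and Assumption~\ref{asmp:nuisance} are in place.
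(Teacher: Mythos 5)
Your proposal is correct and follows essentially the same route as the paper: decompose $\widehat V(\pi)-V(\pi)$ into the plug-in error (bounded via Cauchy--Schwarz and Assumption~\ref{asmp:nuisance} to get $O_p(n^{-\gamma_1/2})$) and the empirical-process term over the induced class $\{\sum_a r(\cdot,\cdot,a)\pi(a\mid\cdot,\cdot)\}$ (bounded via symmetrization, Dudley's entropy integral, and concentration to get $O_p(\sqrt{\mbox{VC}(\Pi)/n}+\sqrt{\log n/n})$). Your explicit verification that the induced class inherits the VC-subgraph property through the affine map is a useful elaboration of what the paper leaves to a citation, but it is not a different argument.
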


\begin{proof}
    \begin{align*}
       & \sup_{\pi \in \Pi} |\widehat V(\pi) - V(\pi) |\\
    = & \; \sup_{\pi \in \Pi}  \left|\frac{1}{n}\sum_{i=1}^n \sum_a \widehat r(S_i,X_i,a) \pi(a|S_i,X_i) -\E\sum_a r(S,X,a) \pi(a|S,X)\right|\\
    \leq & \; \sup_{\pi \in \Pi}  \left|\sum_a \frac{1}{n}\sum_{i=1}^n  (\widehat r(S_i,X_i,a)-r(S_i,X_i,a)) \pi(a|S_i,X_i) \right| \\
     +& \; \sup_{\pi \in \Pi}\left|\frac{1}{n}\sum_{i=1}^n r(S_i,X_i,a) \pi(a|S_i,X_i)- \E\sum_a r(S,X,a) \pi(a|S,X)\right|.
    \end{align*} Applying the Cauchy-Schwarz, the first term above is upper bounded by \begin{gather*}
        \sqrt{\max_a \frac{1}{n} \sum_{i=1}^n (\widehat r(S_i,X_i,a)-r(S_i,X_i,a))^2 }=O_p\left(n^{-\frac{\gamma_1}{2}}\right),
    \end{gather*} by Assumption \ref{asmp:nuisance}. Using standard empirical process techniques (see, e.g., \citet{van1996weak}), the second term is of order 
\begin{gather*}
O_p\left(\sqrt{\frac{\mathrm{VC}(\Pi)}{n}} + \sqrt{\frac{\log n}{n}}\right).
\end{gather*}
This completes the proof.
\end{proof} 

\begin{lemma} [Equal Opportunity Action Fairness Evaluation] \label{lemma:eo action fair}
For the equal opportunity fair policy, we have
    \begin{align*}
        \sup_{\pi \in \Pi} |\Delta_1(\pi)-\widehat\Delta_1(\pi)|=O_p\left(\sqrt{\mbox{VC}(\Pi)/n}+\sqrt{\log n/n}\right)
    \end{align*}
\end{lemma}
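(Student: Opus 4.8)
The plan is to recognize $\sup_{\pi\in\Pi}\big|\Delta_1(\pi)-\widehat\Delta_1(\pi)\big|$ as the supremum of a centered empirical process and to bound it with standard VC-type machinery; crucially, unlike the value estimator in Lemma~\ref{lemma:value}, $\Delta_1$ depends on $\pi$ alone, so no nuisance-estimation error (and hence no $n^{-\gamma_1/2}$ term) appears. Writing $g_\pi(x)=[\pi(1,x)-\pi(0,x)]^2$, we have $\Delta_1(\pi)=\E\,g_\pi(X)$ and $\widehat\Delta_1(\pi)=n^{-1}\sum_{i=1}^n g_\pi(X_i)$, so the quantity of interest is $\sup_{\pi\in\Pi}\big|(\mathbb{P}_n-\mathbb{P})g_\pi\big|$ over the class $\mathcal{G}=\{g_\pi:\pi\in\Pi\}$. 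Since $\pi(a\mid s,x)\in[0,1]$, each $g_\pi$ takes values in $[0,1]$, so $\mathcal{G}$ admits the constant envelope $G\equiv1$; this boundedness is what will supply the sub-Gaussian tail needed at the concentration step.

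First I would control the complexity of $\mathcal{G}$. The two slices $\{x\mapsto\pi(1,x):\pi\in\Pi\}$ and $\{x\mapsto\pi(0,x):\pi\in\Pi\}$ are obtained from $\Pi$ by fixing the sensitive coordinate, so each inherits a finite VC-subgraph dimension under Assumption~\ref{asmp: vc}. Differences of uniformly bounded VC-subgraph classes again form a VC-type class, and composing with the map $u\mapsto u^2$, which is Lipschitz on the bounded range $[-1,1]$, preserves the polynomial uniform-entropy bound up to a constant multiple of $\mathrm{VC}(\Pi)$. Hence there is a constant $c$ with
\[
\sup_{Q}N\big(\epsilon,\mathcal{G},L_2(Q)\big)\;\lesssim\;\Big(\tfrac{1}{\epsilon}\Big)^{c\,\mathrm{VC}(\Pi)},\qquad 0<\epsilon<1,
\]
the supremum being over finitely supported probability measures $Q$. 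This entropy bookkeeping---verifying that the difference-and-square operation keeps the class VC-type---is the main obstacle, though it is routine once the stability properties of VC-subgraph classes (see, e.g., \citet{van1996weak}) are invoked.

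With the entropy bound in hand, I would apply symmetrization followed by Dudley's entropy integral (equivalently, a maximal inequality for VC-type classes), so that the constant envelope yields
\[
\E\,\sup_{\pi\in\Pi}\big|(\mathbb{P}_n-\mathbb{P})g_\pi\big|\;\lesssim\;\sqrt{\frac{\mathrm{VC}(\Pi)}{n}}.
\]
To upgrade this bound in expectation to the stated $O_p$ statement, I would invoke a concentration inequality for suprema of bounded empirical processes. Because each $g_\pi$ is bounded by $1$, replacing a single observation changes $\sup_{\pi}\big|(\mathbb{P}_n-\mathbb{P})g_\pi\big|$ by at most $2/n$, so the bounded-differences (McDiarmid) inequality gives deviation of order $\sqrt{\log n/n}$ from the mean with probability at least $1-n^{-1}$. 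Combining the two displays then delivers
\[
\sup_{\pi\in\Pi}\big|\Delta_1(\pi)-\widehat\Delta_1(\pi)\big|\;=\;O_p\!\Big(\sqrt{\mathrm{VC}(\Pi)/n}+\sqrt{\log n/n}\Big),
\]
which is precisely the claimed rate.
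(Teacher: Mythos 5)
Your proposal is correct and follows essentially the same route as the paper's proof: symmetrization plus a VC-type entropy bound for the transformed class, Dudley's entropy integral to get the $\sqrt{\mathrm{VC}(\Pi)/n}$ expectation bound, and McDiarmid's bounded-differences inequality to add the $\sqrt{\log n/n}$ deviation term. The only cosmetic difference is that the paper expands $(\pi(1,x)-\pi(0,x))^2$ into three Rademacher averages and uses stability of VC-type classes under squaring and products, whereas you apply the Lipschitz contraction directly to the difference class; both yield the same entropy control.
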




\begin{proof}
    It suffices to show that \begin{align*}
        \E \sup_{\pi \in \Pi} |\Delta_1(\pi)-\widehat\Delta_1(\pi)|=O\left(\sqrt{\mbox{VC}(\Pi)/n} \right).
    \end{align*} Let $\Delta_{1i}\equiv \left[\pi(1,X_i)-\pi(0,X_i)\right]^2$, 
    and $\{\epsilon_i\}_i$ be i.i.d. Rademacher random variables. 
Because $\Delta_1(\pi)$  is an expectation of i.i.d. copies of $\Delta_{1i}$ and $\widehat\Delta_1(\pi)$ is the corresponding empirical mean, we obtain by applying the symmetrization technique,
\begin{align*}
        &\E \sup_{\pi \in \Pi} |\Delta_1(\pi)-\widehat\Delta_1(\pi)| \leq 2\E \sup_{\pi \in \Pi} \left\lvert\cfrac{1}{n}\sum_{i=1}^n\epsilon_i \Delta_{1i}(\pi)\right\rvert\\
        & = 2\E \sup_{\pi \in \Pi} \cfrac{1}{n}\left\lvert \sum_{i=1}^n \epsilon_i (\pi(1, X_i)-\pi(0, X_i))^2\right\rvert \\
        & \leq 2\E \sup_{\pi \in \Pi} \cfrac{1}{n}\left\lvert\sum_{i=1}^n\epsilon_i \pi^2(1, X_i)\right\rvert +2\E \sup_{\pi \in \Pi} \cfrac{1}{n}\left\lvert \sum_{i=1}^n \epsilon_i \pi^2(0, X_i)\right\rvert \\
        & + 4\E \sup_{\pi \in \Pi} \cfrac{1}{n}\left\lvert \sum_{i=1}^n \epsilon_i \pi(0, X_i) \pi(1, X_i)\right\rvert.
    \end{align*}

Since $\Pi$ is a VC-subgraph class and $\pi(\cdot,x)\in[0,1]$, the transformed
classes
\[
\{\pi^2(1,\cdot):\pi\in\Pi\},\qquad
\{\pi^2(0,\cdot):\pi\in\Pi\},\qquad
\{\pi(1,\cdot)\pi(0,\cdot):\pi\in\Pi\}
\]
are still VC-type, because
squaring is a monotone Lipschitz map on $[0,1]$ and products of bounded VC-type
classes remain VC-type (see Lemma 2.6.20 in \citet{van1996weak}). 
Thus, a direct application of Dudley's entropy integral bound yields that \begin{align*}
        \E \sup_{\pi \in \Pi} \left\lvert\cfrac{1}{n}\sum_{i=1}^n\epsilon_i \Delta_{1i}(\pi)\right\rvert=O(\sqrt{\mbox{VC}(\Pi)/n}).
    \end{align*} 
 Define $
Z(X_1,\ldots,X_n)
:=
\sup_{\pi\in\Pi}\bigl|\Delta_1(\pi)-\widehat\Delta_1(\pi)\bigr|$
Fix an index $i$ and let $\mathbf X=(X_1,\ldots,X_n)$ and
$\mathbf X^{(i)}=(X_1,\ldots,X_i',\ldots,X_n)$ differ only at coordinate $i$. We have
$|Z(\mathbf X)-Z(\mathbf X^{(i)})|\le n^{-1}$.
By bounded difference inequality, for any $t>0$,
\[
\Pr\bigl(Z-\E Z \ge t\bigr)
\le \exp(-2nt^2),
\]
Taking $t=\sqrt{\log n/(2n)}$ gives
\[
Z \le \E Z + \sqrt{\frac{\log n}{2n}}
\quad\text{with probability at least } 1-\frac1n.
\]
Combining with $\E Z=O(\sqrt{\mbox{VC}(\Pi)/n})$ yields
\begin{align*}
       \sup_{\pi \in \Pi} |\Delta_1(\pi)-\widehat\Delta_1(\pi)|=O_p\left(\sqrt{\mbox{VC}(\Pi)/n}+\sqrt{\log n/n}\right).
    \end{align*}
    This completes the proof.
\end{proof}

\begin{lemma} [Equal Opportunity Outcome fairness Evaluation] \label{lemma:eo outcome fair}
For the equal opportunity fair policy, the following holds:
    \begin{align*}
        \sup_{\pi \in \Pi} |\Delta_2(\pi)-\widehat\Delta_2(\pi)|=O_p\left(\sqrt{\mbox{VC}(\Pi)/n}+n^{-\gamma_2/2}+\sqrt{\log n/n}\right).
    \end{align*}
\end{lemma}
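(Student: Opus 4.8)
The plan is to mirror the structure of the proof of Lemma~\ref{lemma:eo action fair}, except that I must first separate out the nuisance-estimation error, since $\widehat\Delta_2$ is built from $\widehat f$ rather than the true $f$. Write $g^\pi(x) \equiv f^\pi(1,x)-f^\pi(0,x)$ and $\widehat g^\pi(x) \equiv \widehat f^\pi(1,x)-\widehat f^\pi(0,x)$, so that $\Delta_2(\pi)=\E[g^\pi(X)^2]$ and $\widehat\Delta_2(\pi)=n^{-1}\sum_i \widehat g^\pi(X_i)^2$. I would introduce the intermediate quantity $\bar\Delta_2(\pi)\equiv n^{-1}\sum_i g^\pi(X_i)^2$, the empirical average at the \emph{true} contrast, and use the triangle inequality
\begin{gather*}
\sup_{\pi\in\Pi}|\Delta_2(\pi)-\widehat\Delta_2(\pi)| \;\leq\; \underbrace{\sup_{\pi\in\Pi}|\Delta_2(\pi)-\bar\Delta_2(\pi)|}_{T_1} \;+\; \underbrace{\sup_{\pi\in\Pi}|\bar\Delta_2(\pi)-\widehat\Delta_2(\pi)|}_{T_2}.
\end{gather*}

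For $T_2$, I would factor the difference of squares as $\widehat g^\pi(X_i)^2-g^\pi(X_i)^2=(\widehat g^\pi(X_i)-g^\pi(X_i))(\widehat g^\pi(X_i)+g^\pi(X_i))$. By Assumption~\ref{asmp:bounded reward} each contrast is bounded by $2R_{\max}$, so the second factor is at most $4R_{\max}$. Since $\widehat f^\pi(s,x)-f^\pi(s,x)=\sum_a[\widehat f(s,x,a)-f(s,x,a)]\pi(a|s,x)$ is a convex combination of the pointwise reward errors, it is bounded uniformly in $\pi$ by $\max_a|\widehat f(s,x,a)-f(s,x,a)|$. Hence $T_2 \lesssim R_{\max}\,n^{-1}\sum_{i,s}\max_a|\widehat f(s,X_i,a)-f(s,X_i,a)|$, and Cauchy--Schwarz combined with the rate in Assumption~\ref{asmp:nuisance} gives $T_2=O_p(n^{-\gamma_2/2})$.

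For $T_1$, the key observation is that with $f$ fixed and the action binary, $f^\pi(s,x)=f(s,x,0)+[f(s,x,1)-f(s,x,0)]\,\pi(1|s,x)$ is a fixed affine functional of $\pi(1|s,x)$. Thus $g^\pi$, and then $g^\pi(\cdot)^2$, is obtained from the VC-subgraph class $\Pi$ through affine maps, differences, and a squaring step that is Lipschitz on the bounded range $[-2R_{\max},2R_{\max}]$; by stability of VC-type classes under such operations (e.g.\ Lemma~2.6.20 in \citet{van1996weak}), the class $\{g^\pi(\cdot)^2:\pi\in\Pi\}$ remains VC-type. Then $T_1$ is a uniform deviation for a bounded VC-type class, and the argument of Lemma~\ref{lemma:eo action fair} carries over: symmetrization plus Dudley's entropy integral give $\E T_1=O(\sqrt{\mbox{VC}(\Pi)/n})$, and the bounded-difference inequality upgrades this to $T_1=O_p(\sqrt{\mbox{VC}(\Pi)/n}+\sqrt{\log n/n})$. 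Combining the two bounds yields the stated rate.

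The step I expect to be the main obstacle is controlling $T_1$: one must verify that squaring the contrast $g^\pi$ does not inflate the complexity beyond VC-type. This is exactly where boundedness of $f$ (Assumption~\ref{asmp:bounded reward}) is essential, as it confines $g^\pi$ to a compact interval on which squaring is Lipschitz; without it the entropy of the squared class would not be controlled. By contrast, the $T_2$ step is routine but hinges on the policy weights summing to one, so that the reward-estimation error enters only through its maximum over actions, uniformly in $\pi$.
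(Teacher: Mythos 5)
Your proposal is correct and follows essentially the same route as the paper's proof: isolate the nuisance-estimation error (bounded via Assumption~\ref{asmp:nuisance}, the convex-combination structure of $\widehat f^\pi$ over actions, and Cauchy--Schwarz, giving $O_p(n^{-\gamma_2/2})$) from the oracle empirical-process term (handled by VC-type preservation under squaring and products, symmetrization, Dudley's entropy bound, and the bounded-difference inequality). The only difference is cosmetic: you use a two-term decomposition with the difference-of-squares factorization, whereas the paper expands into six terms and bounds the squared and cross terms separately; both yield the identical rate.
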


\begin{proof} By the definition, we have \begin{align*}
    & \sup_{\pi \in \Pi}|\Delta_2(\pi)-\widehat\Delta_2(\pi)|\\
    =&
    \sup_{\pi \in \Pi} \left|\frac{1}{n}\sum_{i=1}^n (\widehat f^\pi(1,X_i)-\widehat f^\pi(0,X_i))^2-\E\left[ (f^\pi(1,X)-f^\pi(0,X) )^2\right] \right| \\
    \leq & \sup_{\pi \in \Pi}\frac{1}{n}\sum_{i=1}^n \left(\widehat f^\pi(1,X_i)-f^\pi(1,X_i)\right)^2
    +\sup_{\pi \in \Pi}\frac{1}{n}\sum_{i=1}^n \left(f^\pi(0,X_i)-\widehat f^\pi(0,X_i)\right)^2\\
    +& \sup_{\pi \in \Pi}2\left|\frac{1}{n}\sum_{i=1}^n \left(f^\pi(1,X_i)-f^\pi(0,X_i)\right)\left(f^\pi(0,X_i)-\widehat f^\pi(0,X_i)\right) \right|\\
    +& \sup_{\pi \in \Pi}2\left|\frac{1}{n}\sum_{i=1}^n \left(f^\pi(1,X_i)-f^\pi(0,X_i)\right)\left(f^\pi(1,X_i)-\widehat f^\pi(1,X_i)\right) \right|\\
    +& \sup_{\pi \in \Pi}2\left|\frac{1}{n}\sum_{i=1}^n \left(f^\pi(0,X_i)-\widehat f^\pi(0,X_i)\right)\left(f^\pi(1,X_i)-\widehat f^\pi(1,X_i)\right) \right|\\
    +&\sup_{\pi \in \Pi}\left|\frac{1}{n}\sum_{i=1}^n \left(f^\pi(1,X_i)-f^\pi(0,X_i)\right)^2-\E\left[ (f^\pi(1,X)-f^\pi(0,X) )^2\right] \right| .
\end{align*} In the above 6 terms, the first five terms are relatively easy to deal with, since their upper bounds can be shown not depend on $\pi$; while the last term $$\sup_{\pi \in \Pi}\left|\frac{1}{n}\sum_{i=1}^n \left(f^\pi(1,X_i)-f^\pi(0,X_i)\right)^2-\E\left[ (f^\pi(1,X)-f^\pi(0,X) )^2\right] \right|$$ needs to be treated using empirical process theory.

We next bound the first term $\sup_{\pi \in \Pi}\frac{1}{n}\sum_{i=1}^n \left(\widehat f^\pi(1,X_i)-f^\pi(1,X_i)\right)^2$, and the second term can be similarly bounded. 
\begin{align*}
    &\sup_{\pi \in \Pi}\frac{1}{n}\sum_{i=1}^n \left(\widehat f^\pi(1,X_i)-f^\pi(1,X_i)\right)^2\\
    =&\sup_{\pi \in \Pi}\frac{1}{n}\sum_{i=1}^n \left(\sum_a \pi(a|1,X_i) \left(\widehat r(1,X_i,a)-r(1,X_i,a)\right)\right)^2 
    \\ \leq &   \sup_{\pi \in \Pi}\frac{1}{n}\sum_{i=1}^n \sum_a \pi(a|1,X_i) \left(\widehat r(1,X_i,a)-r(1,X_i,a)\right)^2 \quad  \text{(Jensen's Inequality)}\\ \leq &   \sup_{\pi \in \Pi} \frac{1}{n} \sum_a \sum_{i=1}^n  \pi(a|1,X_i) \left(\widehat r(1,X_i,a)-r(1,X_i,a)\right)^2 \\
     \leq &   \max_a \frac{1}{n} \sum_{i=1}^n  \left[\widehat r(1,X_i,a)-r(1,X_i,a)\right]^2 = O_p(n^{-\gamma_2}),
\end{align*} where the last equality follows from Assumption \ref{asmp:nuisance}.

We next upper bound the third term. By Cauchy-Schwarz inequality,
\begin{align*}
    &\sup_{\pi \in \Pi}2\left|\frac{1}{n}\sum_{i=1}^n \left(f^\pi(1,X_i)-f^\pi(0,X_i)\right)\left(f^\pi(0,X_i)-\widehat f^\pi(0,X_i)\right) \right| \\
    \lesssim \; & \sup_{\pi \in \Pi} \sqrt{\frac{1}{n}\sum_{i=1}^n \left[f^\pi(0,X_i)-\widehat f^\pi(0,X_i) \right]^2} \sqrt{\frac{1}{n}\sum_{i=1}^n \left[f^\pi(1,X_i)-f^\pi(0,X_i) \right]^2}  \\
    \lesssim \; &\sqrt{\sup_{\pi \in \Pi}  \frac{1}{n}\sum_{i=1}^n \left[f^\pi(0,X_i)-\widehat f^\pi(0,X_i) \right]^2},  
\end{align*} 
where the last inequality is due to the bounded reward assumption. Note that
the above upper bound is simply the square root of the second term,  hence is $O_p(n^{-\gamma_2/2})$.We can bound the 4th and 5th terms similarly. We omit the details to save the space.

It now remains to bound the last term. Define $$\widetilde\Delta_2(\pi) \equiv \frac{1}{n}\sum_{i=1}^n (f^\pi(1,X_i)-f^\pi(0,X_i))^2,$$ i.e., the oracle estimator of $\Delta_2(\pi)$,
and let $\Delta_{2i}=(f^\pi(1,X_i)-f^\pi(0,X_i))^2$.
Applying the symmetrization technique \citep{van1996weak}, we obtain \begin{align*}
        &\E \sup_{\pi \in \Pi} |\Delta_2(\pi)-\widetilde\Delta_2(\pi)| \leq 2\E \sup_{\pi \in \Pi} \left\lvert\cfrac{1}{n}\sum_{i=1}^n\epsilon_i \Delta_{2i}(\pi)\right\rvert\\
         = &2\E \sup_{\pi \in \Pi} \cfrac{1}{n}\left\lvert \sum_{i=1}^n \epsilon_i (f^\pi(1,X_i)-f^\pi(0,X_i))^2\right\rvert\\
         = &2 \E \sup_{\pi \in \Pi} \cfrac{1}{n}\left\lvert \sum_{i=1}^n \epsilon_i \left[\sum_a r(1,X_i,a) \pi(a|1,X_i)-\sum_a r(0,X_i,a) \pi(a|0,X_i)\right]^2\right\rvert\\
         \leq &2 \underbrace{\E \sup_{\pi \in \Pi} \cfrac{1}{n}\left\lvert\sum_{i=1}^n\epsilon_i \left[\sum_a r(1,X_i,a) \pi(a|1,X_i) \right]^2\right\rvert}_{E_1} +2\underbrace{\E \sup_{\pi \in \Pi} \cfrac{1}{n}\left\lvert \sum_{i=1}^n \epsilon_i \left[\sum_a r(1,X_i,a) \pi(a|1,X_i)\right]^2\right\rvert}_{E_2} \\
        & + 4\underbrace{\E \sup_{\pi \in \Pi} \cfrac{1}{n}\left\lvert \sum_{i=1}^n \epsilon_i \sum_a r(1,X_i,a) \pi(a|1,X_i) \sum_a r(1,X_i,a) \pi(a|1,X_i) \right\rvert }_{E_3}. 
    \end{align*}
    It follows that \begin{align*}
        &E_1= \E \sup_{\pi \in \Pi} \cfrac{1}{n}\left\lvert\sum_{i=1}^n\epsilon_i \left[\sum_a r(1,X_i,a) \pi(a|1,X_i) \right]^2\right\rvert\\
        = \; & \E \sup_{\pi \in \Pi} \cfrac{1}{n}\left\lvert\sum_{i=1}^n\epsilon_i \left[ r(1,X_i,1) \pi(1|1,X_i) \right]^2\right\rvert+\E \sup_{\pi \in \Pi} \cfrac{1}{n}\left\lvert\sum_{i=1}^n\epsilon_i \left[ r(1,X_i,0) \pi(0|1,X_i) \right]^2\right\rvert\\
        & +\E \sup_{\pi \in \Pi} \cfrac{1}{n}\left\lvert\sum_{i=1}^n\epsilon_i  r(1,X_i,0) \pi(0|1,X_i)  r(1,X_i,1) \pi(1|1,X_i)\right\rvert.
    \end{align*} By the bounded reward and finite VC subgraph dimension assumption (Assumption \ref{asmp: vc}) and arguing similarly as in the proof of Lemma~\ref{lemma:eo action fair}, the above three terms all are order of $O(\sqrt{\mbox{VC}(\Pi)/n})$. The terms $E_2$ and $E_3$ can be proved analogously. 
    Applying the bounded difference inequality as in the proof of Lemma~\ref{lemma:eo action fair}, we have proved that 
    \begin{align*}
        \sup_{\pi \in \Pi} |\Delta_2(\pi)-\widehat\Delta_2(\pi)|=O_p\left(\sqrt{\mbox{VC}(\Pi)/n}+n^{-\gamma_2/2}+\sqrt{\log n/n}\right)
    \end{align*} for the equal opportunity fairness notion.
\end{proof}
\end{proof}

\subsection{Proof of Theorem \ref{thm:eoapp}: Counterfactual Fairness}

By Theorems~\ref{them:regret} and \ref{thm:pset}, it remains to  
\begin{enumerate}
\item Establish an upper bound for 
    \(\sup_{\pi \in \Pi} \big| \widehat V(\pi) - V(\pi) \big|\);
    \item Establish an upper bound for
    \begin{gather*}
        \sup_{\pi \in \Pi} \big| \Delta_1(\pi) - \widehat{\Delta}_1(\pi)\big| 
        + \sup_{\pi \in \Pi} \big| \Delta_2(\pi) - \widehat{\Delta}_2(\pi) \big|,
    \end{gather*}
    under the \textit{counterfactual} fairness notion.
\end{enumerate}

We present the following lemmas.

\begin{lemma} [Counterfactual Action Fairness Evaluation] \label{lemma:cf action fair}
For the equal opportunity fairness notion, the following holds:
    \begin{align*}
        \sup_{\pi \in \Pi} |\widehat\Delta_1(\pi)-\Delta_1(\pi)|=O_p\left(\sqrt{\mbox{VC}(\Pi)/n}+\sqrt{\log n/n}+ \sqrt{d/n}\right).
    \end{align*}
\end{lemma}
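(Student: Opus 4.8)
The plan is to introduce an \emph{oracle} metric that uses the true counterfactual covariates and to split the error into (a) an oracle empirical-process term, essentially identical to the equal-opportunity case, and (b) a term capturing the error of estimating the counterfactual covariates. Recall that under Assumption~\ref{asmp: additive error} the true counterfactual is $X_i'=\theta(S_i')+X_i-\theta(S_i)$, a deterministic transformation of $(S_i,X_i)$, while $\widehat X_i'=\widehat\theta(S_i')+X_i-\widehat\theta(S_i)$. Writing $\widetilde\Delta_1(\pi)=\frac1n\sum_{i=1}^n[\pi(S_i,X_i)-\pi(S_i',X_i')]^2$ for the oracle version, I would use the triangle inequality
\[
\sup_{\pi\in\Pi}|\widehat\Delta_1(\pi)-\Delta_1(\pi)|\le \sup_{\pi\in\Pi}|\widehat\Delta_1(\pi)-\widetilde\Delta_1(\pi)|+\sup_{\pi\in\Pi}|\widetilde\Delta_1(\pi)-\Delta_1(\pi)|.
\]

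First I would dispatch the oracle term $\sup_\pi|\widetilde\Delta_1(\pi)-\Delta_1(\pi)|$. Because $X_i'$ is a deterministic function of $(S_i,X_i)$, the summands are i.i.d.\ and $\widetilde\Delta_1(\pi)$ is their empirical mean, so the argument is verbatim that of Lemma~\ref{lemma:eo action fair}: symmetrize, note that the squared and product classes built from the VC-subgraph class $\Pi$ (Assumption~\ref{asmp: vc}) remain VC-type by Lemma~2.6.20 of \citet{van1996weak}, apply Dudley's entropy bound to obtain $\E\sup_\pi|\cdot|=O(\sqrt{\mbox{VC}(\Pi)/n})$, and upgrade to high probability via the bounded-difference inequality. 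This contributes $O_p(\sqrt{\mbox{VC}(\Pi)/n}+\sqrt{\log n/n})$.

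The crux is the counterfactual-estimation term. Since $\theta_j(s)=\E(X_j\mid S=s)$ and $S$ is binary, I would estimate $\theta$ by within-group sample means; a coordinatewise central limit theorem then gives $\max_s\norm{\widehat\theta(s)-\theta(s)}_2=O_p(\sqrt{d/n})$, because each of the $d$ coordinates contributes $O_p(1/n)$ to the squared norm. By the additive representation, $\widehat X_i'-X_i'=[\widehat\theta(S_i')-\theta(S_i')]-[\widehat\theta(S_i)-\theta(S_i)]$, so on the high-probability event $\{\max_s\norm{\widehat\theta(s)-\theta(s)}_2\le\delta_n\}$ with $\delta_n\asymp\sqrt{d/n}$, the perturbation obeys $\norm{\widehat X_i'-X_i'}_2\le 2\delta_n$ uniformly in $i$. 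Setting $d_i(\pi)=\pi(S_i',X_i')-\pi(S_i',\widehat X_i')$ and using that policies are $[0,1]$-valued, the difference of squares factors as $(a_i-b_i)(a_i+b_i)$ with $a_i-b_i=d_i(\pi)$ and $|a_i+b_i|\le2$, so $|\widehat\Delta_1(\pi)-\widetilde\Delta_1(\pi)|\le \frac{2}{n}\sum_i|d_i(\pi)|$, and it remains to control $\sup_\pi\frac1n\sum_i|d_i(\pi)|$.

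This last step is the delicate point, and it is where Assumption~\ref{asmp:lip}(ii) enters. For the deterministic (indicator) policies that motivate the framework, $|d_i(\pi)|=d_i(\pi)^2=\mathds{1}\{\pi(S_i',\widehat X_i')\ne\pi(S_i',X_i')\}$, so the $L^1$ perturbation coincides with the mean-square perturbation; the margin-type bound $\E[\pi(S,X)-\pi(S,t)]^2\le m\norm{X-t}_2$ with perturbation size $2\delta_n$ then yields $\frac1n\sum_i|d_i(\pi)|=O_p(\delta_n)=O_p(\sqrt{d/n})$ \emph{without} a Cauchy--Schwarz square-root loss. (A naive Cauchy--Schwarz from the $L^2$ bound would only give the slower rate $(d/n)^{1/4}$; the essential observation is that for indicators the squared and absolute deviations agree, so the mean-square condition already controls the $L^1$ quantity at the faster rate.) The one technical wrinkle is that $\widehat X_i'$ depends on the whole sample through $\widehat\theta$; I would resolve this by conditioning on the above event, majorizing $|d_i(\pi)|$ by $\sup_{\norm{t}_2\le2\delta_n}\mathds{1}\{\pi(S_i',X_i'+t)\ne\pi(S_i',X_i')\}$, and then passing from the empirical average to the population bound of Assumption~\ref{asmp:lip}(ii) through a uniform-convergence step that again exploits the VC-type structure. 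Combining the two terms gives the claimed $O_p(\sqrt{\mbox{VC}(\Pi)/n}+\sqrt{\log n/n}+\sqrt{d/n})$, with the extra $\sqrt{d/n}$ being precisely the cost of estimating the counterfactual covariates.
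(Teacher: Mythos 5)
Your decomposition is genuinely different from the paper's, and it contains a gap at exactly the point you flag as delicate. The paper does not introduce an oracle metric and never needs to control the sample perturbation $\frac{1}{n}\sum_i|\pi(S_i',X_i')-\pi(S_i',\widehat X_i')|$. Instead it writes $\widehat X_i'=X_i'+t$ with $\|t\|_2\le\delta\asymp\sqrt{d/n}$ on a high-probability event, folds the shift into an enlarged class $\Pi_t=\{\pi(a\mid s,x+t):\pi\in\Pi,\ \|t\|_2\le\delta\}$, and runs the symmetrization/Dudley/bounded-difference argument over $\Pi_t$; the covering number of $\Pi_t$ picks up an extra factor $(\delta/\varepsilon)^d$, and that entropy contribution is where the $\sqrt{d/n}$ in the stochastic part comes from. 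Assumption~\ref{asmp:lip}(ii) is invoked only for the purely population term $\sup_{\pi,t}\big|\E[\pi(S,X)-\pi(S',X'+t)]^2-\E[\pi(S,X)-\pi(S',X')]^2\big|$.

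The gap in your route is this: the lemma is stated for a general VC-subgraph class $\Pi$ of $[0,1]$-valued policies (the paper's examples explicitly include sigmoid score classes), and Assumption~\ref{asmp:lip}(ii) controls only the \emph{mean square} of the perturbation. Your identity $|d_i(\pi)|=d_i(\pi)^2$ holds only for indicator policies; for a stochastic $\pi$ the most the mean-square bound yields is $\E|d_i(\pi)|\le\sqrt{m\delta}=O\big((d/n)^{1/4}\big)$, which, as you yourself observe, is too slow. So your argument establishes the claimed rate only on the deterministic subclass, not for the lemma as stated. Moreover, the ``uniform-convergence step'' you invoke to pass from the empirical average of $\sup_{\|t\|_2\le2\delta_n}\mathds 1\{\pi(S_i',X_i'+t)\ne\pi(S_i',X_i')\}$ to the population bound of Assumption~\ref{asmp:lip}(ii) is asserted rather than carried out; making it rigorous essentially requires the same enlarged-class entropy computation the paper performs, at which point the oracle detour buys nothing. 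The clean fix is to drop the $L^1$ perturbation bound altogether and adopt the paper's device: absorb the estimated shift into the function class and let the $d$-dimensional entropy of the shift ball generate the $\sqrt{d/n}$ term, reserving the Lipschitz condition for the population bias term alone.
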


\begin{proof} By the definition, we have 

\begin{align*}
         &\sup_{\pi \in \Pi} |\Delta_1(\pi)-\widehat\Delta_1(\pi)|=\sup_{\pi \in \Pi} \left|\frac{1}{n}\sum_{i=1}^n \left[\pi(S_i,X_i)-\pi(S_i',\widehat X_i')\right]^2 -\E\left[\pi(S,X)-\pi(S',X')\right]^2  \right |\\
         \leq &\; \sup_{\pi \in \Pi} \left|\frac{1}{n}\sum_{i=1}^n \left[\pi(S_i,X_i)-\pi(S_i', \widehat X_i')\right]^2 -\E\left[\pi(S,X)-\pi(S',\widehat X')\right]^2  \right |\\
          & +\sup_{\pi \in \Pi} \left|\E\left[\pi(S,X)-\pi(S',\widehat X')\right]^2 -\E\left[\pi(S,X)-\pi(S', X')\right]^2  \right | \\
         \leq &\;
    \underbrace{\sup_{\substack{ \pi \in \Pi \\ t: \, \norm{t}_2 \leq \delta }} \left|\frac{1}{n}\sum_{i=1}^n \left[\pi(S_i,X_i)-\pi(S_i',  X_i'+t)\right]^2 -\E\left[\pi(S,X)-\pi(S',X'+t)\right]^2  \right | }_{I_1}\\
          & +\underbrace{   \sup_{\substack{ \pi \in \Pi \\ t: \, \norm{t}_2 \leq \delta }}  \left|\E\left[\pi(S,X)-\pi(S', X'+t)\right]^2 -\E\left[\pi(S,X)-\pi(S', X')\right]^2  \right | }_{I_2},
    \end{align*} 
    with high probability $\delta\leq c \sqrt{d/n}$, since $\norm{\widehat X'-X'}_2=O_p(\sqrt{d /n})$. This follows directly from Assumption \ref{asmp: additive error}, allowing $\widehat{\theta}(s)$ for $s = 0, 1$ to be estimated using the sample average,  i.e., $\widehat \theta (s)= \frac{1}{n}\sum_{i=1}^n  \mathds 1(S_i=s) X_i $. 

The second term $I_2=O_p(\sqrt{d/n})$ by Assumption \ref{asmp:lip}. It remains to deal with the empirical process term $I_1.$ Define the function class \begin{align*}
    \Pi_t=\{\pi(a|s,x+t): \pi \in \Pi, \; t \leq \delta  \}.
\end{align*} 
Since $\Pi$ is a VC class, and $t$ ranges over a bounded set, the $\varepsilon$-covering number of the function class  is upper bounded by \begin{gather*}
   C\left(\frac{1}{\varepsilon}\right)^{{\mbox{VC}(\Pi)}} \left(\frac{\delta}{\varepsilon} \right)^d,
\end{gather*} and the corresponding entropy is upper bounded by \begin{gather*}
    d\log \delta+ d\log\left(\frac{1}{\varepsilon} \right)+ \mbox{VC}(\Pi) \log\left(\frac{1}{\varepsilon} \right),
\end{gather*} up to a constant. Thus, the transformed class $\{\pi^2(a|s,x+t): \pi \in \Pi, \; t \leq \delta  \}$ also has bounded entropy (see the argument in the proof of Lemma \ref{lemma:eo action fair}). Therefore, using symmetrization, Dudley’s entropy bound, and bounded difference inequality yields 
\begin{gather*}
    I_1=O_p\left(\sqrt{\mbox{VC}(\Pi)/n}+\sqrt{\log n/n}+ \sqrt{d/n}\right).
\end{gather*}
This completes the proof.
\end{proof}

\begin{lemma} [Counterfactual Outcome fairness Evaluation] \label{lemma:cf outcome fair}
For the counterfactual fairness notion, the following holds:
    \begin{align*}
        \sup_{\pi \in \Pi} |\Delta_2(\pi)-\widehat\Delta_2(\pi)|=O_p\left(\sqrt{\mbox{VC}(\Pi)/n}+n^{-\gamma_2/2}+\sqrt{\log n/n}+ \sqrt{d/n}\right).
    \end{align*}
\end{lemma}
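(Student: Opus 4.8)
The plan is to combine the two arguments already developed in the proofs of Lemma~\ref{lemma:eo outcome fair} and Lemma~\ref{lemma:cf action fair}. The total error $\sup_{\pi\in\Pi}|\Delta_2(\pi)-\widehat\Delta_2(\pi)|$ splits into three sources: (i) the nuisance estimation error from using $\widehat f$ in place of $f$, which should contribute the $n^{-\gamma_2/2}$ term exactly as in Lemma~\ref{lemma:eo outcome fair}; (ii) the error from using the estimated counterfactual covariates $\widehat X_i'$ in place of the true $X_i'$, which should contribute the extra $\sqrt{d/n}$ term exactly as in Lemma~\ref{lemma:cf action fair}; and (iii) the empirical-process/concentration error, contributing $\sqrt{\mbox{VC}(\Pi)/n}+\sqrt{\log n/n}$.

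Concretely, first I would introduce the oracle quantity
\[
\widetilde\Delta_2(\pi)=\frac1n\sum_{i=1}^n\bigl[f^\pi(S_i,X_i)-f^\pi(S_i',X_i')\bigr]^2,
\]
built from the true regression function $f$ and the true counterfactual covariates $X_i'$, and bound $\sup_\pi|\Delta_2-\widehat\Delta_2|$ by $\sup_\pi|\widehat\Delta_2-\widetilde\Delta_2|+\sup_\pi|\widetilde\Delta_2-\Delta_2|$. For the first term I would write $\widehat f^\pi(S_i',\widehat X_i')-f^\pi(S_i',X_i')$ as the sum of a nuisance piece $\widehat f^\pi-f^\pi$ and a covariate-perturbation piece $f^\pi(S_i',\widehat X_i')-f^\pi(S_i',X_i')$, then expand the difference of squares as in Lemma~\ref{lemma:eo outcome fair}. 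The squared and cross nuisance terms are controlled by Cauchy--Schwarz together with Assumption~\ref{asmp:nuisance}, giving $O_p(n^{-\gamma_2/2})$; the covariate-perturbation terms are controlled by the Lipschitz conditions in Assumption~\ref{asmp:lip} together with the bound $\norm{\widehat X'-X'}_2=O_p(\sqrt{d/n})$, which follows from Assumption~\ref{asmp: additive error} upon estimating $\theta(\cdot)$ by the within-group sample mean $\widehat\theta(s)=n^{-1}\sum_i\mathds 1(S_i=s)X_i$.

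For the second term $\sup_\pi|\widetilde\Delta_2-\Delta_2|$ I would follow the empirical-process route of Lemma~\ref{lemma:eo outcome fair}: symmetrize with Rademacher variables, expand the squared integrand $[f^\pi(S,X)-f^\pi(S',X')]^2$ into products of terms of the form $\sum_a f(s,\cdot,a)\pi(a\mid s,\cdot)$, and verify that these remain VC-type because $f$ is bounded (Assumption~\ref{asmp:bounded reward}), $\Pi$ is VC-subgraph (Assumption~\ref{asmp: vc}), and products/squares of bounded VC-type classes stay VC-type (Lemma~2.6.20 of \citet{van1996weak}). Dudley's entropy integral then yields $O(\sqrt{\mbox{VC}(\Pi)/n})$ in expectation, and the bounded difference inequality upgrades this to high probability with the $\sqrt{\log n/n}$ term. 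To capture the $\sqrt{d/n}$ contribution from the covariate perturbation cleanly, I would, as in Lemma~\ref{lemma:cf action fair}, enlarge the index set to $\Pi_t=\{\pi(\cdot\mid s,x+t):\pi\in\Pi,\ \norm{t}_2\le\delta\}$ with $\delta\asymp\sqrt{d/n}$; the extra $d$-dimensional covering of the shift parameter contributes the additional $\sqrt{d/n}$ entropy term.

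The main obstacle is handling the fact that $\widehat f$ must be evaluated \emph{off-sample} at the estimated counterfactual points $\widehat X_i'$, whereas Assumption~\ref{asmp:nuisance} only controls the averaged squared error of $\widehat f$ at the \emph{observed} covariates $X_i$. My plan is to exploit the additive structure of Assumption~\ref{asmp: additive error}: since $X_i'=\theta(S_i')+(X_i-\theta(S_i))$ shares the noise realization $\varepsilon_i=X_i-\theta(S_i)$ with $X_i$, the counterfactual covariate law is a group-shifted version of the observed covariate law, so the nuisance rate transfers; I would then pass from $\widehat X_i'$ to $X_i'$ using the pointwise Lipschitz condition Assumption~\ref{asmp:lip}(i). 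A secondary difficulty is that $f^\pi(s',x')$ depends on $x'$ through both $f$ and $\pi$, so the covariate-perturbation bound must combine the pointwise Lipschitz condition on $\widehat f$ (Assumption~\ref{asmp:lip}(i)) with the mean-square Lipschitz condition on the policy class (Assumption~\ref{asmp:lip}(ii)) while maintaining uniformity over $\pi\in\Pi$.
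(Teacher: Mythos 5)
Your proposal takes essentially the same route the paper intends: the paper's own ``proof'' of Lemma~\ref{lemma:cf outcome fair} is a single sentence deferring to the arguments of Lemmas~\ref{lemma:eo outcome fair} and \ref{lemma:cf action fair}, and your decomposition into nuisance error, counterfactual-covariate error, and empirical-process error is exactly the combination of those two arguments, with each piece matched to the correct term in the rate. Your elaboration is in fact more careful than the paper's: you correctly flag that Assumption~\ref{asmp:nuisance} only controls $\widehat f$ at the observed $X_i$ while $\widehat\Delta_2$ evaluates it at $\widehat X_i'$, an off-sample issue the paper's omitted proof never confronts, and your proposed fix via the additive structure of Assumption~\ref{asmp: additive error} (the counterfactual covariates are a deterministic group shift of the observed ones) together with the Lipschitz conditions is the natural way to close it.
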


\begin{proof}
    The proof can be established in a manner similar to the proofs of Lemmas \ref{lemma:eo outcome fair} and \ref{lemma:cf action fair}. For brevity, the details are omitted.
\end{proof}
\end{appendices}
\end{document}